\theoremstyle{plain}
\newtheorem{Theorem}{Theorem}
\newtheorem{Definition}{Definition}
\newtheorem{Lemma}{Lemma}
\newtheorem{Remark}{Remark}
\newtheorem{Assumption}{Assumption}
\icmltitlerunning{TernaryVote: Differentially Private, Communication Efficient, and Byzantine Resilient Distributed Optimization}
\begin{document}

\setlength{\abovedisplayskip}{0pt}
\setlength{\belowdisplayskip}{0pt}

\twocolumn[
\icmltitle{TernaryVote: Differentially Private, Communication Efficient, and Byzantine Resilient Distributed Optimization on Heterogeneous Data}

\begin{icmlauthorlist}
\icmlauthor{Richeng Jin}{zju}
\icmlauthor{Yujie Gu}{zju}
\icmlauthor{Kai Yue}{ncsu}
\icmlauthor{Xiaofan He}{Wu}
\icmlauthor{Zhaoyang Zhang}{zju}
\icmlauthor{Huaiyu Dai}{ncsu}
\end{icmlauthorlist}

\icmlaffiliation{zju}{Department of Information and Communication Engineering, Zhejiang University, China.}
\icmlaffiliation{Wu}{Electronic Information School, Wuhan University, China.}
\icmlaffiliation{ncsu}{Department of Electrical and Computer Engineering, North Carolina State University, Raleigh, NC, USA.}
\icmlcorrespondingauthor{Richeng Jin}{richengjin@zju.edu.cn}

%

\vskip 0.3in
]
\printAffiliationsAndNotice{} 

\begin{abstract}
Distributed training of deep neural networks faces three critical challenges: privacy preservation, communication efficiency, and robustness to fault and adversarial behaviors. Although significant research efforts have been devoted to addressing these challenges independently, their synthesis remains less explored. In this paper, we propose \textit{TernaryVote}, which combines a ternary compressor and the majority vote mechanism to realize differential privacy, gradient compression, and Byzantine resilience simultaneously. We theoretically quantify the privacy guarantee through the lens of the emerging $f$-differential privacy (DP) and the Byzantine resilience of the proposed algorithm. Particularly, in terms of privacy guarantees, compared to the existing sign-based approach \textit{StoSign}, the proposed method improves the dimension dependence on the gradient size and enjoys privacy amplification by mini-batch sampling while ensuring a comparable convergence rate. We also prove that \textit{TernaryVote} is robust when less than $50\%$ of workers are blind attackers, which matches that of {\scriptsize SIGN}SGD with majority vote. Extensive experimental results validate the effectiveness of the proposed algorithm.
\end{abstract}

\section{Introduction}\label{Introduction}
\noindent In the past decades, the ever-growing computational power distributed across the network and massive data generated daily have enabled the unprecedented success of distributed machine learning techniques \cite{dean2012large}. In the classic parameter server paradigm for distributed learning, the training process consists of multiple workers coordinated by the central server that updates a global model iteratively using the model updates from the workers. In distributed stochastic gradient descent (SGD), the server updates the model with the average of the stochastic gradients computed by the workers using their local datasets \cite{bertsekas2015parallel}.

While harnessing the computing power of the distributed workers, distributed SGD faces several critical challenges. Firstly, the local training data collected by the workers may contain sensitive information (e.g., medical data \cite{rieke2020future}), which hinders their willingness to participate in collaborative training. Despite that the federated learning (FL) paradigm offers a certain degree of privacy protection by practicing the principle of data minimization, there is no formal and rigorous quantification of privacy \cite{kairouz2019advances}. Secondly, in applications like FL, the workers are usually equipped with limited communication capability while the size of modern neural networks is unprecedentedly growing, which renders communication latency a major bottleneck.  Finally, the distributed training paradigm is vulnerable to fault and adversarial behaviors, and any faulty worker can ruin the convergence of distributed SGD by sending a sufficiently large gradient \cite{bernstein2018signsgd2}.

Significant research efforts have been devoted to addressing the aforementioned challenges. More specifically, various differentially private (DP) mechanisms \cite{abadi2016deep,agarwal2018cpsgd,chen2020breaking,kairouz2021distributed,agarwal2021skellam,chen2022poisson}, gradient compression schemes \cite{alistarh2017qsgd,haddadpour2020federated,stich2018sparsified,bernstein2018signsgd1,karimireddy2019error,safaryan2021stochastic,jin2024sign}, and Byzantine robust aggregators \cite{blanchard2017machine,yin2018byzantine,xie2019slsgd,karimireddy2021learning,farhadkhani2022byzantine,allouah2023fixing}, have been proposed to alleviate the privacy concern, the communication efficiency issue, and the vulnerability against Byzantine attacks, respectively. Although a few pioneering works, e.g., \cite{guerraoui2021differential,allouah2023privacy}, have studied the combination of DP mechanisms and Byzantine robust schemes, the requirement for communication efficiency is often ignored. Among these approaches, {\scriptsize SIGN}SGD with majority vote \cite{bernstein2018signsgd2} is of particular interest since it offers both Byzantine resilience and a significant reduction in communication overhead. However, it fails to converge in the presence of data heterogeneity \cite{chen2019distributed}. \cite{xiang2023distributed} shows the differential privacy guarantee of the stochastic-sign compressor that is proposed in \cite{jin2020stochastic} to address the non-convergence issue of {\scriptsize SIGN}SGD, which reveals the potential of providing differential privacy, communication efficiency, and Byzantine resilience in a unified framework. Nonetheless, the $\epsilon$-DP guarantee in \cite{xiang2023distributed} has a linear dependency on $d$ (i.e., the dimension of gradients), which renders the privacy protection less meaningful for modern neural networks with $d$ in the order of hundreds of millions. Recently, \cite{jin2023breaking} has observed that incorporating random sparsification into the stochastic-sign compressor leads to privacy amplification for distributed mean estimation. Inspired by this, we incorporate ternary compression into the majority vote mechanism and propose \textit{TernaryVote}. Similar to the sign-based approaches, the ternary-based majority vote mechanism is expected to provide a certain degree of Byzantine resilience. We make the aspiration rigorous and show that the non-convergence issue of the sign-based approach can be addressed while ensuring differential privacy and further improving communication efficiency.

\textbf{Our contributions}. Our main technical contributions are summarized as follows.
\begin{itemize}
    \item We analyze the differential privacy guarantee of the ternary compressor \cite{jin2023breaking} in the use case of SGD, based on which \textit{TernaryVote} is proposed. In sharp contrast to the results in \cite{xiang2023distributed}, the privacy guarantee of \textit{TernaryVote} has a dependency on $\sqrt{d}$ (instead of $d$), and \textit{TernaryVote} enjoys privacy amplification by mini-batch training data sampling.
    \item Under the bounded gradient assumption, we show that \textit{TernaryVote} converges in expectation with a rate $\mathcal{O}(1/\sqrt{T} + B/\sqrt{M})$ (which matches that of \textit{StoSign} \cite{xiang2023distributed}) in the low-privacy regime (i.e., a small $B$), in which $T$ is the number of communication rounds, $M$ is the number of workers, and $B$ is a tuning parameter for ternary compression. In the high-privacy regime (in $\mu_{T}$-GDP) with $B = \mathcal{O}(\sqrt{T}/\mu_{T})$, it converges in expectation with a rate $\mathcal{O}(1/\mu_{T} + \mu_{T}/\sqrt{T})$, which matches that of DP-SGD with the classic Gaussian mechanism \cite{fang2022improved}.
    \item We theoretically quantify the Byzantine resilience of \textit{TernaryVote}, which can tolerate up to $K = M-1$ blind attackers \cite{bernstein2018signsgd2} in which $M$ is the number of normal workers. 
    \item Experimental results on the MNIST, Fashion-MNIST, and CIFAR-10 datasets validate the effectiveness of the proposed method.
\end{itemize}

\section{Related Work}
\textbf{Differential Privacy Mechanism}: Since the seminar work \cite{abadi2016deep} introduces the classic Gaussian mechanism into deep learning, there has been a surging interest in developing various DP mechanisms. To cope with the communication efficiency issue, significant research efforts have been devoted to studying discrete mechanisms. \cite{dwork2006our} introduces the one-dimensional binomial noise, which is extended to the general $d$-dimensional case in \cite{agarwal2018cpsgd} with more comprehensive analysis in terms of $(\epsilon,\delta)$-DP. \cite{canonne2020discrete, kairouz2021distributed} investigate the DP guarantees of discrete Gaussian noise. \cite{agarwal2021skellam} and \cite{chen2022poisson} propose the Skellam mechanism and the Poisson binomial mechanism, respectively, with R\'enyi DP guarantees. \cite{chaudhuri2022privacy, guo2023privacy} achieve privacy-aware compression through numerical mechanism design, and \cite{zhu2023improving} studies the impact of random sparsification on DP-SGD \cite{abadi2016deep}. For distributed mean estimation, \cite{chen2020breaking} proposes the subsampled and quantized Kashin’s response (SQKR) mechanism that achieves order-optimal estimation error. \cite{chen2023privacy} studies privacy amplification by compression for central ($\epsilon,\delta$)-DP, while \cite{jin2023breaking} considers the privacy amplification of random sparsification through the lens of $f$-DP. However, none of these works takes the Byzantine resilience of the proposed mechanisms into consideration.

\textbf{Robustness}: In practice, the workers may fail to deliver correct information due to hardware or software failure, data corruption, transmission error, or malicious adversaries. To address this issue, various Byzantine resilient mechanisms have been proposed \cite{blanchard2017machine,xie2018generalized,mhamdi2018hidden,yin2018byzantine,karimireddy2021learning,farhadkhani2022byzantine,karimireddy2022byzantine,zhu2023byzantine,allouah2023fixing}. A few works investigate the robustness of differentially private mechanisms. Specifically, \cite{sun2019can,naseri2020local} empirically show that adding Gaussian noise to the model updates mitigates backdoor attacks and white-box membership inference attacks. \cite{nguyen2022flame} combines clustering, adaptive clipping, and the Gaussian mechanism to defend against poisoning attacks while realizing client-level DP. \cite{guerraoui2021differential} shows that most of the existing Byzantine resilient aggregation rules, including Krum \cite{blanchard2017machine}, Bulyan \cite{mhamdi2018hidden}, Trimmed mean \cite{yin2018byzantine}, and Median \cite{yin2018byzantine}, suffer from the curse of dimensionality when combined with the Gaussian mechanism. \cite{allouah2023privacy} studies the tradeoff between privacy, robustness, and accuracy in distributed learning and proposes the smallest maximum eigenvalue averaging method for Byzantine resilient aggregation. However, it results in a computational complexity of $\mathcal{O}(d^{3})$. In addition, none of the aforementioned works considers the communication efficiency.

\cite{zhang2023byzantine} incorporates sparsification and variance reduction into the classic Gaussian mechanism, which demonstrates robustness against Byzantine attackers empirically. \cite{zhu2022bridging} shows the DP guarantees of the sign-flipping mechanism (only for the scalar case) and extends the robust stochastic model aggregation (RSA) method \cite{li2019rsa} to its differentially private variant. \cite{xiang2023distributed} proves the differential privacy guarantee and the Byzantine resilience of the stochastic-sign compressor in \cite{jin2020stochastic}. However, the privacy guarantee in \cite{xiang2023distributed} has a linear dependency on $d$.

\section{Problem Setup and Preliminaries}
\subsection{Problem Setup}
\noindent We consider a classical parameter server paradigm for distributed learning that consists of $M$ honest workers (denoted by $\mathcal{M}$) and a central server. Each worker holds a local dataset $\mathcal{D}_{m}$, and the goal of the workers is to minimize the finite-sum objective of the form
\begin{equation}\label{objective}
\min_{\boldsymbol{w}\in \mathbb{R}^d}F(\boldsymbol{w})\overset{\mathrm{def}}{=} \frac{1}{M}\sum_{m=1}^{M}f_{m}(\boldsymbol{w}),
\end{equation}
where $f_{m}(\boldsymbol{w})$ is a local loss function defined by the local dataset of worker $m$ and the parameter vector $\boldsymbol{w} \in \mathcal{W}$. More specifically, $f_{m}(\boldsymbol{w})=\frac{1}{|\mathcal{D}_{m}|}\sum_{s\in \mathcal{D}_{m}}l(\boldsymbol{w};s)$
where $|\mathcal{D}_{m}|$ is the size of worker $m$'s local dataset $\mathcal{D}_{m}$ and $l: \mathcal{W}\times \mathcal{D} \rightarrow \mathbb{R}$ is the loss function that measures the loss of prediction on the data point $s \in \mathcal{D}$ made with $\boldsymbol{w}$.

\subsection{Privacy Measure}
Each honest worker $m$ aims to protect the privacy of their local dataset $\mathcal{D}_{m}$ from all the other participants, i.e., the server and the other workers. In this work, we adopt the well-known differential privacy as the privacy measure. We first introduce the most commonly used differential privacy measure $(\epsilon,\delta)$-DP \cite{dwork2006our}, followed by the emerging concept of $f$-DP \cite{dong2021gaussian}. Particularly, compared to $(\epsilon,\delta)$-DP, $f$-DP enjoys the hypothesis testing interpretation and a better composition property.

\textbf{$(\epsilon,\delta)$-Differential Privacy}: Formally, $(\epsilon,\delta)$-DP \cite{dwork2006our} is defined as follows.
\begin{Definition}[$(\epsilon,\delta)$-DP]
A randomized mechanism $\mathcal{Q}$ is $(\epsilon,\delta)$-differentially private {\color{black}if} for all neighboring datasets $S$ and $S'$ and all $O \subset \mathcal{O}$ in the range of $\mathcal{M}$, we have $P(\mathcal{Q}(S)\in O) \leq e^{\epsilon} P(\mathcal{Q}(S')\in O) + \delta$,
in which $S$ and $S'$ are neighboring datasets that differ in only one record, and $\epsilon,\delta \geq 0$ characterize the level of privacy.
\end{Definition}

\textbf{$f$-Differential Privacy}: For two neighboring datasets $S$ and $S'$, from the hypothesis testing perspective, we consider two hypotheses
\begin{equation}\label{hypothesistestingproblem}
\begin{split}
H_{0}: \text{the underlying dataset is}~S,\\
~H_{1}: \text{the underlying dataset is}~S'.
\end{split}
\end{equation}

Let $P$ and $Q$ denote the probability distribution of the outputs of the randomized mechanism $\mathcal{Q}(S)$ and $\mathcal{Q}(S')$, respectively. Consider a rejection rule $0 \leq \phi \leq 1$ (i.e., rejecting $H_{0}$ with a probability of $\phi$), there exists a tradeoff between the achievable type I and type II error rates defined as
\begin{equation}
    \alpha_{\phi} = \mathbb{E}_{P}[\phi], \beta_{\phi} = 1 - \mathbb{E}_{Q}[\phi],
\end{equation}
respectively. $f$-DP characterizes this tradeoff through the following tradeoff function.

\begin{Definition}[tradeoff function \cite{dong2021gaussian}]\label{tradeofffunction}
For any two probability distributions $P$ and $Q$ on the same space, the tradeoff function $T(P,Q): [0,1]\rightarrow [0,1]$ is defined as
\begin{equation}\label{betaphi}
    T(P,Q)(\alpha) = \inf\{\beta_{\phi}:\alpha_{\phi}\leq\alpha\},
\end{equation}
where the infimum is taken over all (measurable) $\phi$.
\end{Definition}
Formally, $f$-DP \cite{dong2021gaussian} is defined as follows.
\begin{Definition}[$f$-DP]
Let $f$ be a tradeoff function. A mechanism $\mathcal{Q}$ is $f$-differentially private if for all neighboring datasets $S$ and $S'$,
\begin{equation}
    T(\mathcal{Q}(S),\mathcal{Q}(S')) \geq f,
\end{equation}
which suggests that the attacker cannot achieve a type II error rate smaller than $f(\alpha)$ given that the type I error rate is no larger than $\alpha$.
\end{Definition}

$f$-DP can be converted to $(\epsilon,\delta)$-DP as follows.
\begin{Lemma}\label{Lemmaftoappro}\cite{dong2021gaussian}
A mechanism is $f(\alpha)$-differentially private if and only if it is $(\epsilon,\delta)$-differentially private with
\begin{equation}\label{ftodp}
f(\alpha) = \max\{0,1-\delta-e^{\epsilon}\alpha,e^{-\epsilon}(1-\delta-\alpha)\}.
\end{equation}
\end{Lemma}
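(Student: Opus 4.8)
The plan is to prove both implications by translating each privacy condition into statements about the achievable type I and type II error rates of tests distinguishing $P := \mathcal{Q}(S)$ from $Q := \mathcal{Q}(S')$, and then matching them term by term against the three pieces of the maximum in \eqref{ftodp}. Throughout I would exploit that the neighboring relation is symmetric, so that anything proved for the ordered pair $(S,S')$ also holds with the roles of $S$ and $S'$ interchanged; this symmetry is exactly what supplies the two non-trivial branches $1-\delta-e^{\epsilon}\alpha$ and $e^{-\epsilon}(1-\delta-\alpha)$, which originate from the two directions of the $(\epsilon,\delta)$-DP inequality.

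For the direction $(\epsilon,\delta)$-DP $\Rightarrow$ $f$-DP, I would first record the expectation form of $(\epsilon,\delta)$-DP: integrating the set inequality over the super-level sets $\{\psi > t\}$ shows that $\mathbb{E}_{P}[\psi] \le e^{\epsilon}\mathbb{E}_{Q}[\psi] + \delta$ for every measurable $0 \le \psi \le 1$, and symmetrically with $P,Q$ swapped. Given any rejection rule $\phi$ with $\alpha_{\phi} = \mathbb{E}_{P}[\phi]$ and $\beta_{\phi} = 1 - \mathbb{E}_{Q}[\phi]$, I would apply this to $\psi = 1-\phi$ to get $\beta_{\phi} \ge e^{-\epsilon}(1-\delta-\alpha_{\phi})$, and apply the swapped inequality to $\psi = \phi$ to get $\beta_{\phi} \ge 1-\delta-e^{\epsilon}\alpha_{\phi}$. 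Together with the trivial bound $\beta_{\phi} \ge 0$, these yield $\beta_{\phi} \ge f(\alpha_{\phi})$ with $f$ as in \eqref{ftodp}. Since $f$ is non-increasing, any test with $\alpha_{\phi} \le \alpha$ satisfies $\beta_{\phi} \ge f(\alpha_{\phi}) \ge f(\alpha)$, and taking the infimum over such tests gives $T(P,Q)(\alpha) \ge f(\alpha)$, i.e. $f$-DP.

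For the converse $f$-DP $\Rightarrow$ $(\epsilon,\delta)$-DP, I would fix an arbitrary measurable set $O$ and test it with the deterministic rule $\phi = \mathds{1}_{O}$, for which $\alpha_{\phi} = P(O)$ and $\beta_{\phi} = 1-Q(O)$. Because $T(P,Q)(\cdot)$ is the infimum of $\beta$ over all tests of type I error at most the given level, this single test certifies $T(P,Q)(P(O)) \le 1-Q(O)$; combined with the hypothesis $T(P,Q) \ge f$ this gives $f(P(O)) \le 1-Q(O)$. Retaining only the branch $f(P(O)) \ge 1-\delta-e^{\epsilon}P(O)$ and rearranging yields $Q(O) \le e^{\epsilon}P(O) + \delta$. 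Applying the same argument to the (also neighboring) ordered pair $(S',S)$, whose tradeoff function $T(Q,P)$ likewise dominates $f$, gives the reverse inequality $P(O) \le e^{\epsilon}Q(O) + \delta$; since $O$ was arbitrary, $\mathcal{Q}$ is $(\epsilon,\delta)$-DP.

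The only real subtlety, and the step I would be most careful about, is keeping the orientation of the inequalities consistent: the maximum in \eqref{ftodp} bundles two constraints coming from the two directions of the DP inequality, so I must ensure that the branch I extract in each half of the argument matches the direction of the DP bound being proved or invoked. Extending the forward direction from deterministic to randomized tests via the super-level-set integration, and invoking the symmetry of the neighboring relation to recover both DP directions in the converse, are the two places where I would be explicit to avoid an orientation error. Notably, no appeal to the Neyman--Pearson lemma is required: in the forward direction the bound holds test by test before any infimum is taken, and in the converse a single indicator test already suffices.
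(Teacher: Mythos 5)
The paper does not prove this lemma at all: it is imported verbatim from \cite{dong2021gaussian} as a known conversion result, so there is no in-paper argument to compare against. Your proof is correct and is essentially the standard one (the Wasserman--Zhou style primal--dual argument reproduced in Dong et al.): the super-level-set integration legitimately upgrades the set inequality to $\mathbb{E}_{P}[\psi]\leq e^{\epsilon}\mathbb{E}_{Q}[\psi]+\delta$ for randomized tests, the two DP orientations supply exactly the two non-trivial branches of the maximum, monotonicity of $f$ justifies passing to the infimum, and the indicator test suffices for the converse.
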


Finally, we introduce a special case of $f$-DP with $f(\alpha) = \Phi(\Phi^{-1}(1-\alpha)-\mu)$, which is denoted as $\mu$-GDP. Specifically, $\mu$-GDP corresponds to the tradeoff function of two normal distributions with mean 0 and $\mu$, respectively, and a variance of 1. It enjoys the following composition property.
\begin{Lemma}\label{compositiontheorem}
The $T$-fold composition of $\mu_{i}$-GDP mechanisms is $\sqrt{u_{1}^2+u_{2}^2+\cdots+u_{T}^2}$-GDP.
\end{Lemma}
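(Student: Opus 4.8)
The plan is to reduce the statement to the general composition theorem for tradeoff functions together with an explicit computation of the tensor product of Gaussian tradeoff functions. Recall from the preceding discussion that $\mu$-GDP is exactly the guarantee afforded by the tradeoff function $G_{\mu}(\alpha) = \Phi(\Phi^{-1}(1-\alpha) - \mu)$, which is $T(N(0,1), N(\mu,1))$, the optimal type I/type II tradeoff for testing a unit-variance normal with mean $0$ against one with mean $\mu$.

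First I would invoke the general $f$-DP composition result of \cite{dong2021gaussian}: if each mechanism in an adaptively composed sequence is $f_{i}$-DP, then the composition is $(f_{1} \otimes f_{2} \otimes \cdots \otimes f_{T})$-DP, where the tensor product of tradeoff functions is defined by $T(P_{1},Q_{1})\otimes T(P_{2},Q_{2}) = T(P_{1}\times P_{2}, Q_{1}\times Q_{2})$ and extended associatively. This reduces the claim to showing that $G_{\mu_{1}}\otimes\cdots\otimes G_{\mu_{T}} = G_{\sqrt{\mu_{1}^2+\cdots+\mu_{T}^2}}$.

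The core step is this tensor-product identity. By definition, $G_{\mu_{1}}\otimes\cdots\otimes G_{\mu_{T}}$ is the tradeoff function between the product measures $\prod_{i} N(0,1)$ and $\prod_{i} N(\mu_{i},1)$ on $\mathbb{R}^T$, i.e., between $N(\mathbf{0}, I_{T})$ and $N(\boldsymbol{\mu}, I_{T})$ with $\boldsymbol{\mu}=(\mu_{1},\dots,\mu_{T})$. I would apply the Neyman--Pearson lemma: the optimal rejection rule is a thresholding of the likelihood ratio, whose logarithm is an affine function of $\langle \boldsymbol{\mu}, x\rangle$. Hence the optimal test depends on $x$ only through the one-dimensional projection $\langle \boldsymbol{\mu}, x\rangle / \|\boldsymbol{\mu}\|$, which under $H_{0}$ is distributed as $N(0,1)$ and under $H_{1}$ as $N(\|\boldsymbol{\mu}\|, 1)$. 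The testing problem therefore collapses to distinguishing $N(0,1)$ from $N(\|\boldsymbol{\mu}\|,1)$, whose tradeoff function is exactly $G_{\|\boldsymbol{\mu}\|}=G_{\sqrt{\mu_{1}^2+\cdots+\mu_{T}^2}}$.

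The main obstacle is making the projection argument rigorous rather than heuristic---specifically, verifying that restricting attention to the sufficient statistic $\langle\boldsymbol{\mu},x\rangle$ loses nothing, which is precisely the content of the Neyman--Pearson characterization of the tradeoff function together with the rotational invariance of the isotropic Gaussian. Once the two-fold case $G_{\mu_{1}}\otimes G_{\mu_{2}} = G_{\sqrt{\mu_{1}^2+\mu_{2}^2}}$ is established, the general statement follows by a routine induction on $T$ using the associativity of the tensor product.
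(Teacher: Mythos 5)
Your argument is correct, but note that the paper itself offers no proof of this lemma: it is imported verbatim from \cite{dong2021gaussian} (Corollary 3.3 there), so there is no in-paper derivation to compare against. What you have written is essentially the standard proof from that reference: reduce via the $f$-DP composition theorem to the identity $G_{\mu_{1}}\otimes\cdots\otimes G_{\mu_{T}} = T\bigl(N(\mathbf{0},I_{T}),N(\boldsymbol{\mu},I_{T})\bigr)$, then collapse the multivariate testing problem to one dimension through the sufficient statistic $\langle\boldsymbol{\mu},x\rangle/\|\boldsymbol{\mu}\|$ given by Neyman--Pearson. The only substantive content you are invoking as a black box is the adaptive composition theorem for tradeoff functions (and the well-definedness of $\otimes$, i.e., that $T(P_{1}\times P_{2},Q_{1}\times Q_{2})$ does not depend on which pair of distributions realizes each $f_{i}$); since that is precisely the cited result, this is legitimate. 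Two trivial loose ends worth a sentence each in a full write-up: the monotonicity of $\otimes$ is needed because $\mu_{i}$-GDP only asserts $f_{i}\geq G_{\mu_{i}}$ rather than equality, and the degenerate case $\boldsymbol{\mu}=\mathbf{0}$ should be handled separately since the projection is then undefined (the claim is vacuous there, as $G_{0}(\alpha)=1-\alpha$).
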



\subsection{Gradient Compression}\label{ternarycompressor}
In applications like FL, the workers are usually mobile devices that are equipped with limited communication capability, and gradient compression is commonly adopted to improve communication efficiency. In this work, we consider the following ternary stochastic compressor.

\begin{Definition}[\textbf{Ternary Stochastic Compressor}]\label{TernaryCompressor}
For any given $x\in[-c,c]$, the compressor $ternary$ outputs $ternary(x,A,B)$, which is given by
\begin{equation}
ternary(x,A,B) =
\begin{cases}
\hfill 1, \hfill \text{with probability $\frac{A+x}{2B}$},\\
\hfill 0, \hfill \text{with probability $1-\frac{A}{B}$},\\
\hfill -1, \hfill \text{with probability $\frac{A-x}{2B}$}.\\
\end{cases}
\end{equation}
\end{Definition}

The privacy guarantee of the above ternary compressor concerning $x$ is given by \citep{jin2023breaking} as follows.

\begin{Theorem}\citep{jin2023breaking}\label{privacyofternarytheorem}
The ternary compressor in Definition \ref{TernaryCompressor} is $f(\alpha)$-differentially private with
\begin{equation}\label{privacyofternary}
\begin{split}
&f(\alpha) =
\begin{cases}
\hfill 1 - \frac{A+c}{A-c}\alpha, \hfill ~~\text{for $\alpha \in [0,\frac{A-c}{2B}]$},\\
\hfill 1 - \frac{c}{B} - \alpha, \hfill ~~\text{for $\alpha \in [\frac{A-c}{2B}, 1-\frac{A+c}{2B}]$},\\
\hfill \frac{A-c}{A+c} - \frac{A-c}{A+c}\alpha, \hfill ~~\text{for $\alpha \in [1-\frac{A+c}{2B},1]$}.\\
\end{cases}
\end{split}
\end{equation}
\end{Theorem}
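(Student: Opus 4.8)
The plan is to read the privacy guarantee as a statement about the tradeoff function $T(P_x,P_{x'})$ between the output laws of $ternary(\cdot,A,B)$ on two neighboring inputs $x,x'\in[-c,c]$, and to compute its worst case in closed form. By the definition of $f$-DP I must exhibit a single tradeoff function $f$ with $T(P_x,P_{x'})\ge f$ for every admissible pair $(x,x')$; the tightest such $f$ is the pointwise infimum over all pairs. First I would argue that this infimum is attained at the extreme symmetric pair $x=c,\ x'=-c$. Intuitively, only the masses on $\pm 1$ depend on the input, and the two inputs become most distinguishable (likelihood ratios most spread out) when they sit at opposite ends of the interval. I would make this rigorous by noting that for $x>x'$ the three likelihood ratios $P_x/P_{x'}$ are $\frac{A+x}{A+x'}\ge 1$ on outcome $1$, exactly $1$ on outcome $0$, and $\frac{A-x}{A-x'}\le 1$ on outcome $-1$, and that sending $x\to c,\ x'\to -c$ simultaneously maximizes the large ratio and minimizes the small one, which lowers the Neyman--Pearson tradeoff curve pointwise.

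Having reduced to $P:=P_c$ and $Q:=P_{-c}$ on the three-point space $\{-1,0,1\}$, I would record that $P$ assigns masses $\frac{A-c}{2B},\,1-\frac{A}{B},\,\frac{A+c}{2B}$ and $Q$ assigns $\frac{A+c}{2B},\,1-\frac{A}{B},\,\frac{A-c}{2B}$ to the outcomes $-1,0,1$, so the likelihood ratios $P/Q$ equal $\frac{A-c}{A+c}$ on $-1$, $1$ on $0$, and $\frac{A+c}{A-c}$ on $1$. By the Neyman--Pearson lemma the optimal rejection rule for any prescribed type I level rejects $H_{0}$ in order of increasing $P/Q$, i.e.\ it first puts mass on outcome $-1$, then on $0$, then on $1$.

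The third step is the routine sweep of $\alpha=\alpha_{\phi}=\sum_{o}\phi(o)P(o)$ from $0$ to $1$, reading off $\beta_{\phi}=1-\sum_{o}\phi(o)Q(o)$. While $\alpha\in[0,\frac{A-c}{2B}]$ only outcome $-1$ is (partially) rejected, giving $\beta=1-\frac{A+c}{A-c}\alpha$. For $\alpha\in[\frac{A-c}{2B},\,1-\frac{A+c}{2B}]$ outcome $-1$ is fully rejected and outcome $0$ is being rejected; since $P(0)=Q(0)$ the curve has slope $-1$, yielding $\beta=1-\frac{c}{B}-\alpha$. Finally for $\alpha\in[1-\frac{A+c}{2B},\,1]$ outcome $1$ is rejected, giving $\beta=\frac{A-c}{A+c}(1-\alpha)$. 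These three linear segments are exactly \eqref{privacyofternary}, and continuity at the two breakpoints serves as a consistency check.

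The genuinely delicate part is the worst-case reduction in the first step: the three-piece computation is mechanical once $P$ and $Q$ are fixed, but one must verify that no asymmetric pair $(x,x')$ with $x\ne-x'$ yields a lower tradeoff curve. I expect to settle this via a monotonicity / stochastic-domination argument on tradeoff functions --- showing $T(P_x,P_{x'})\ge T(P_c,P_{-c})$ whenever $-c\le x'<x\le c$ --- rather than re-deriving the piecewise form for every pair; the compressor's symmetry under $x\mapsto-x$ together with the monotonicity of the boundary likelihood ratios is what makes the extreme symmetric pair dominant.
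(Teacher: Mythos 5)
Your proposal is correct and follows essentially the same route the paper takes: the paper imports this statement from \citep{jin2023breaking} and, in its appendix proof of the batch generalization (Theorem \ref{SuppTernary_Batch_scalar}, which reduces to this statement at $b=1$), likewise computes the three likelihood ratios on $\{-1,0,1\}$, invokes the Neyman--Pearson-type sweep (packaged there as Lemma \ref{generaltheorem}), and argues the infimum over neighboring inputs is attained at the extreme pair $x=c$, $x'=-c$. Your segment-by-segment computation and the worst-case reduction match the paper's argument, including the point that the reduction to the extreme symmetric pair is the only step requiring a monotonicity justification.
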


\subsection{Threat Model}
In addition to the $M$ normal workers, it is assumed that there exist $K$ Byzantine attackers, and its set is denoted as $\mathcal{K}$. The attackers may send arbitrary information to the central server, with their identities a priori unknown. 

\begin{algorithm}
\setlength{\abovedisplayskip}{0pt}
\setlength{\belowdisplayskip}{0pt}
\caption{Differentially Private,  Communication Efficient, and Byzantine Resilient Distributed SGD}
\label{DPSGDAlgorithm}
\begin{algorithmic}
\STATE \textbf{Initialization}: The initial model weights $\boldsymbol{w}_{0}$, the differentially private compression mechanism $\mathcal{C}(\cdot)$, the robust aggregator $Agg(\cdot)$, the batch size $b$, the clipping function $\textbf{Clip}(\cdot)$, the learning rate $\eta$, and the total number of communication rounds $T$.
\FOR{communication round $t=0,1,2,\cdots,T$}
\STATE Server randomly samples a subset of workers $\mathcal{N}_{t}$ and sends the model weights $\boldsymbol{w}^{(t)}$ to them.
\FOR{each worker $i \in \mathcal{N}_{t}$}
\IF{$i \in \mathcal{M}$}
\STATE Sample a mini-batch $\mathcal{B}_{i}^{(t)}$ training examples of size $b$ at random from $\mathcal{D}_{i}$. Then compute and clip the per-example gradients and average the mini-batch gradients:
\begin{equation}
\bar{\boldsymbol{g}}_{i}^{(t)} = \frac{1}{b}\sum_{s\in \mathcal{B}_{i}^{(t)}}\textbf{Clip}(\nabla l(\boldsymbol{w}^{(t)};s))
\end{equation}
\STATE Apply the differentially private compression mechanism and send $\boldsymbol{Z}_{i}^{(t)} = \mathcal{C}(\bar{\boldsymbol{g}}_{i}^{(t)})$ back to the central server.
\ELSIF{$i \in \mathcal{K}$}
\STATE Send arbitrary information $\boldsymbol{Z}_{i}^{(t)}$ to the server.
\ENDIF
\ENDFOR
\STATE The central server aggregates the received gradients $\hat{\boldsymbol{g}}^{(t)} = Agg(\{\boldsymbol{Z}_{i}^{(t)}\}_{i \in \mathcal{N}_{t}})$ and updates the model by
\begin{equation}
\boldsymbol{w}^{(t+1)} = \boldsymbol{w}^{(t)} - \eta\hat{\boldsymbol{g}}^{(t)}.
\end{equation}
\ENDFOR
\end{algorithmic}
\end{algorithm}

\subsection{The Overall Distributed Learning Process}
Algorithm \ref{DPSGDAlgorithm} summarizes the overall distributed training process. During each communication round $t$, the server randomly samples a subset of workers and broadcasts the global model weights to the selected workers. The workers sample a mini-batch of training examples, compute and clip the corresponding per-example gradient, and send the average mini-batch clipped gradients to the server after applying the differentially private compression mechanism. Upon receiving the gradients from all the workers, the server aggregates them and updates the global model weights accordingly. We remark that the vanilla DP-SGD \cite{abadi2016deep} is a special case of Algorithm \ref{DPSGDAlgorithm} with $\textbf{Clip}(\boldsymbol{g};C) = \boldsymbol{g}\cdot \min\{1, C/||\boldsymbol{g}||_{2}\}$, $\mathcal{C}(\boldsymbol{g}) = \boldsymbol{g} + \boldsymbol{\xi}$ where $\boldsymbol{\xi}$ is the Gaussian noise, and $Agg(\{\boldsymbol{Z}_{i}^{(t)}\}_{i \in \mathcal{N}_{t}}) = \frac{1}{|\mathcal{N}_{t}|}\sum_{i \in \mathcal{N}_{t}}\boldsymbol{Z}_{i}^{(t)}$.

\section{The Proposed Mechanisms}
\noindent In this section, we introduce the proposed mechanisms. During communication round $t$, the workers clip the per-example gradients with $l_{\infty}$ norm of $c$ and obtain $\boldsymbol{Z}_{i}^{(t)}$ by applying the ternary stochastic compressor to each coordinate of the average mini-batch gradient independently, i.e., $\boldsymbol{Z}_{i}^{(t)} = [ternary(\bar{\boldsymbol{g}}_{i,1}^{(t)},A,B),\cdots,ternary(\bar{\boldsymbol{g}}_{i,d}^{(t)},A,B)]$. For the server, we consider two candidate aggregators:
\begin{equation}\label{twoaggregators}
\begin{split}
\hat{\boldsymbol{g}}^{(t)} &= Agg(\{\boldsymbol{Z}_{i}^{(t)}\}_{i \in \mathcal{N}_{t}}) \\
&=
\begin{cases}
\hfill \frac{1}{|\mathcal{N}_{t}|}\sum_{i \in \mathcal{N}_{t}}\boldsymbol{Z}_{i}^{(t)}, \hfill ~~\text{Scheme I},\\
\hfill sign\left(\frac{1}{|\mathcal{N}_{t}|}\sum_{i \in \mathcal{N}_{t}}\boldsymbol{Z}_{i}^{(t)}\right), \hfill ~~\text{Scheme II}.\\
\end{cases}
\end{split}
\end{equation}

We note that the privacy guarantee in Theorem \ref{privacyofternarytheorem} assumes compressing the private data $x \in [-c,c]$ that is symmetric about 0, which means that it can be applied to the SGD scenario with gradient clipping. For mini-batch SGD, however, the results cannot be directly applied. More specifically, since the neighboring datasets differ in only one training example (denoted by $s'$), we have
\begin{equation}
\begin{split}
\bar{\boldsymbol{g}}_{i}^{(t)} &= \frac{1}{b}\underbrace{\textbf{Clip}(\nabla l(\boldsymbol{w}^{(t)};s'))}_{\boldsymbol{x}_{i}} + \frac{1}{b}\underbrace{\sum_{s\in \mathcal{B}_{i}^{(t)}/s'}\textbf{Clip}(\nabla l(\boldsymbol{w}^{(t)};s))}_{\boldsymbol{y}}.
\end{split}
\end{equation}

Therefore, $\bar{\boldsymbol{g}}_{i}^{(t)}$ (i.e., the data to be compressed) is symmetric about some $\boldsymbol{y}$ instead of $\boldsymbol{0}$ concerning the difference caused by the private training example $s'$ (i.e., $\bar{\boldsymbol{g}}_{i,j}^{(t)} \in [\frac{\boldsymbol{y}_{j}}{b} - \frac{c}{b}, \frac{\boldsymbol{y}_{j}}{b} + \frac{c}{b}]$ is no longer symmetric about $0$ when $\boldsymbol{y}_{j} \neq 0$), which renders the result in Theorem \ref{privacyofternarytheorem} not applicable directly. With such consideration, in the following, we extend the result in Theorem \ref{privacyofternarytheorem} to cover the mini-batch SGD scenario.


\begin{algorithm}
\caption{Ternary Compressor for Vectors}
\label{TernaryMechanismVector}
\begin{algorithmic}
\STATE \textbf{Input}: $c,b > 0$, $\bar{\boldsymbol{x}}_{i} = \frac{1}{b}(\boldsymbol{y} + \boldsymbol{x}_{i})$, in which $\boldsymbol{x}_{i,j} \in [-c,+c], \forall 1 \leq j \leq d$ and $\boldsymbol{y} \in [-(b-1)c,(b-1)c]^{d}$.
\STATE \textbf{Privatization}:

$\boldsymbol{Z}_{i} \triangleq [ternary(\bar{\boldsymbol{x}}_{i,1},A,B),...,ternary(\bar{\boldsymbol{x}}_{i,d},A,B)]$.
\end{algorithmic}
\end{algorithm}

\subsection{Privacy of the Ternary Compressor}
In this subsection, we first present the privacy guarantee of Algorithm \ref{TernaryMechanismVector} for the scalar case (i.e., $d=1$). Specifically, we extend the differential privacy guarantees of the ternary compressor in Theorem \ref{privacyofternarytheorem} to a more general case, in which the input to the ternary compressor is a linear combination of another (unknown) variable $y$ and the private data $x_{i}$ as shown in Algorithm \ref{TernaryMechanismVector}. For mini-batch SGD, $b$ in Algorithm \ref{TernaryMechanismVector} corresponds to the mini-batch size, while $x_{i}$ and $y$ correspond to the gradients of the training example of interest and the remaining training examples.
\begin{Theorem}\label{Ternary_Batch_scalar}
Assuming that $B > A + c$, the ternary compressor is $f(\alpha)$-DP for the scalar $x_{i}$ with
\begin{equation}\label{fternarymechanism}
\begin{split}
&f(\alpha) = \\
&\begin{cases}
\hfill 1 - \frac{Ab-(b-2)c}{(A-c)b}\alpha, \hfill \text{for $\alpha \in [0,\frac{A-c}{2B}]$},\\
\hfill 1-\frac{c}{Bb}- \alpha, \hfill \text{for $\alpha \in [\frac{A-c}{2B}, 1-\frac{Ab-(b-2)c}{2Bb}]$},\\
\hfill \frac{(A-c)b}{Ab-(b-2)c}(1 - \alpha), \hfill \text{for $\alpha \in [1-\frac{Ab-(b-2)c}{2Bb},1]$}.\\
\end{cases}
\end{split}
\end{equation}
\end{Theorem}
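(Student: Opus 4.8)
The plan is to cast the statement as a three-outcome hypothesis test, compute the tradeoff function explicitly for a fixed neighboring pair via the Neyman--Pearson lemma, and then take the worst case (the pointwise infimum) over all admissible pairs. First I would reduce to the scalar mechanism. For neighboring datasets only the single example $s'$ changes, so $\boldsymbol{y}$ (the contribution of the $b-1$ shared examples) is fixed and only $x_i$ varies. Writing $u=\bar{x}_i=(y+x_i)/b$ and $v=\bar{x}_i'=(y+x_i')/b$, the output lives in $\{-1,0,1\}$ with $\Pr[1]=\frac{A+\cdot}{2B}$, $\Pr[-1]=\frac{A-\cdot}{2B}$, $\Pr[0]=1-\frac{A}{B}$. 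Since $x_i,x_i'\in[-c,c]$ and $y\in[-(b-1)c,(b-1)c]$, one obtains $u,v\in[-c,c]$ together with the crucial bound $|u-v|=|x_i-x_i'|/b\le 2c/b$; this $1/b$ shrinkage (versus $2c$ at $b=1$) is exactly the mini-batch amplification. The $0$-outcome carries no information, so all discrimination comes from the $\pm 1$ outcomes.

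Next, for a fixed pair with $u>v$, I would compute $T(P,Q)$ by Neyman--Pearson. Ordering the outcomes by the likelihood ratio $P/Q$ gives the order $-1,0,1$, and rejecting $H_0$ greedily in this order yields a convex, piecewise-linear tradeoff function with breakpoints at $\alpha=\frac{A-u}{2B}$ and $\alpha=\frac{2B-A-u}{2B}$ and successive slopes $-\frac{A-v}{A-u}$, $-1$, $-\frac{A+v}{A+u}$. A short computation shows the middle (slope $-1$) segment has intercept $1-\frac{u-v}{2B}$, i.e. it depends on the pair only through the gap $u-v$.

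Finally I would take the infimum over all admissible $(u,v)$. A larger gap makes the two distributions more distinguishable and lowers the entire tradeoff curve, so the worst case forces $u-v=2c/b$, which pins the middle segment to the line $1-\frac{c}{Bb}-\alpha$ in the statement. Among pairs with this maximal gap the remaining freedom is the position: the positive extreme $(u,v)=(c,\frac{(b-2)c}{b})$ maximizes the first slope to $\frac{Ab-(b-2)c}{(A-c)b}$ and governs small $\alpha$, while the negative extreme $(u,v)=(-\frac{(b-2)c}{b},-c)$ minimizes the last slope to $\frac{(A-c)b}{Ab-(b-2)c}$ and governs large $\alpha$. Assembling the lower envelope of these two curves around the common middle line, and checking continuity at the breakpoints $\frac{A-c}{2B}$ and $1-\frac{Ab-(b-2)c}{2Bb}$, reproduces the three-piece formula \eqref{fternarymechanism}. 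As a sanity check, at $b=1$ the two extremes collapse to the single pair $(c,-c)$ and the bound reduces to Theorem~\ref{privacyofternarytheorem}.

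The hard part will be justifying this last step rigorously. Two points need care: (i) the monotone reduction to $|u-v|=2c/b$, which requires showing the whole curve decreases pointwise as the gap grows; and (ii) the fact that the true worst case is the lower envelope of \emph{two different} extreme pairs rather than any single pair — a phenomenon caused precisely by the batch shift $y\neq 0$ and absent when $b=1$. Verifying (ii) amounts to comparing the convex piecewise-linear tradeoff functions across all positions and showing none dips below the proposed envelope, for which convexity together with the two breakpoint checks suffices.
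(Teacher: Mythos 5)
Your proposal follows essentially the same route as the paper's proof: the paper likewise computes the per-pair piecewise-linear tradeoff function (via a packaged Neyman--Pearson-type lemma from \cite{jin2023breaking} rather than from scratch), shows the infimum over $(x_i,x_i')$ is attained at the maximal gap $x_i=c$, $x_i'=-c$, and then obtains the stated $f$ as the lower envelope over the two directional/positional extremes $y=\pm(b-1)c$, with the middle slope-$(-1)$ segment depending only on the gap $\frac{2c}{b}$. The two points you flag as needing care are exactly the steps the paper carries out (somewhat tersely, via ``it can be verified that''), and your outline of how to justify them is sound.
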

In the following, we extend the result to the vector case by utilizing the central limit theorem in \cite{dong2021gaussian}.
\begin{Theorem}\label{Ternary_Batch_vector}
Assuming that $B > A + c$, the ternary compressor is $f(\alpha)$-DP for the vector $\boldsymbol{x}_{i}$ with
\begin{equation}
\begin{split}
G_{\mu}(\alpha+\gamma)-\gamma \leq f(\alpha) \leq G_{\mu}(\alpha-\gamma)+\gamma,
\end{split}
\end{equation}
in which
\begin{equation}\label{computemu}
\mu = \frac{2\sqrt{d}c}{\sqrt{(A-c)Bb^2+Bbc-c^2}},
\end{equation}
\begin{equation}
\begin{split}
\gamma &= \frac{0.56\left[\frac{A-c}{2B}\left|1+\frac{c}{Bb}\right|^3+\frac{Ab-(b-2)c}{2Bb}\left|1-\frac{c}{Bb}\right|^3\right]}{(\frac{(A-c)b+c}{Bb}-\frac{c^2}{B^2b^2})^{3/2}d^{1/2}}\\
&+\frac{0.56\left[\left(1-\frac{(A-c)b+c}{Bb}\right)\left|\frac{c}{Bb}\right|^{3}\right]}{(\frac{(A-c)b+c}{Bb}-\frac{c^2}{B^2b^2})^{3/2}d^{1/2}}.
\end{split}
\end{equation}
\end{Theorem}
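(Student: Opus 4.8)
The plan is to exploit the fact that the vector mechanism in Algorithm~\ref{TernaryMechanismVector} compresses the $d$ coordinates independently, so that it is exactly the $d$-fold composition of the scalar mechanism analyzed in Theorem~\ref{Ternary_Batch_scalar}, and then to replace this composition by a Gaussian tradeoff function via the central limit theorem for $f$-DP of \cite{dong2021gaussian}. First I would set up the composition. Because neighboring datasets differ in a single training example whose per-example gradient is clipped in $l_\infty$ norm to $c$, each coordinate $\boldsymbol{x}_{i,j}$ of the private part ranges independently over $[-c,c]$ while the offset $\boldsymbol{y}_j$ ranges independently over $[-(b-1)c,(b-1)c]$. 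Hence the per-coordinate worst cases are simultaneously attainable, and by the composition (tensor-product) and monotonicity properties of tradeoff functions the vector mechanism is $f^{\otimes d}$-DP, where $f$ is the scalar tradeoff function \eqref{fternarymechanism}. It therefore suffices to approximate the $d$-fold composition of $d$ identical copies of $f$.

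Next I would apply the Berry--Esseen form of the $f$-DP central limit theorem to these $d$ factors. This requires three functionals of a tradeoff function: the KL term $kl(f)=-\int_0^1\log|f'|\,d\alpha$, the centered second moment $\bar\kappa_2(f)$, and the centered third absolute moment $\bar\kappa_3(f)$. The theorem then yields $G_\mu(\alpha+\gamma)-\gamma\le f^{\otimes d}(\alpha)\le G_\mu(\alpha-\gamma)+\gamma$ with $\mu = 2d\,kl(f)\big/\sqrt{d\,\bar\kappa_2(f)}$ and $\gamma = 0.56\,d\,\bar\kappa_3(f)\big/(d\,\bar\kappa_2(f))^{3/2}$, and the constant $0.56$ is precisely the Berry--Esseen constant appearing in the statement. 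Computing the functionals is direct from the piecewise-linear form of $f$: on its three linear pieces $|f'|$ is constant and equal to $\lambda$, $1$, and $1/\lambda$ with $\lambda=\tfrac{Ab-(b-2)c}{(A-c)b}$, and the pieces have $\alpha$-lengths $\tfrac{A-c}{2B}$, $1-\tfrac{(A-c)b+c}{Bb}$, and $\tfrac{Ab-(b-2)c}{2Bb}$. Thus $\log|f'|$ takes the values $(\log\lambda)\{1,0,-1\}$ with these probabilities, giving $kl(f)=\tfrac{c}{Bb}\log\lambda$, $\bar\kappa_2(f)=(\log\lambda)^2\big(\tfrac{(A-c)b+c}{Bb}-\tfrac{c^2}{B^2b^2}\big)$, and $\bar\kappa_3(f)=(\log\lambda)^3$ times the bracketed sum of $|1\pm\tfrac{c}{Bb}|^3$ and $|\tfrac{c}{Bb}|^3$ terms.

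The decisive observation is that the common factor $\log\lambda$ enters numerator and denominator of both $\mu$ and $\gamma$ to the same total degree and cancels exactly. Substituting the moments gives $\mu = 2\sqrt{d}\,\tfrac{c}{Bb}\big/\sqrt{\tfrac{(A-c)b+c}{Bb}-\tfrac{c^2}{B^2b^2}}$, which reduces to \eqref{computemu} after multiplying numerator and denominator by $Bb$ and noting $(A-c)Bb^2+Bbc-c^2 = B^2b^2\,\bar\kappa_2(f)/(\log\lambda)^2$; the same substitution reproduces the stated $\gamma$, whose three summands correspond to the three outcomes $\{1,-1,0\}$ and whose centered magnitudes $1+\tfrac{c}{Bb}$, $1-\tfrac{c}{Bb}$, $\tfrac{c}{Bb}$ come from subtracting the mean $\mathbb{E}[\log|f'|]=-\tfrac{c}{Bb}\log\lambda$.

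I expect the main obstacle to be careful bookkeeping rather than a conceptual difficulty. The points needing attention are: verifying that $B>A+c$ keeps all three piece-lengths nonnegative (so $f$ is a genuine tradeoff function and every moment is finite and the probabilities valid); confirming that the centering indeed produces the exact magnitudes appearing in $\gamma$ and that the $\log\lambda$ cancellation is exact rather than merely asymptotic; and justifying the use of the central limit theorem with a fixed, non-vanishing per-coordinate privacy loss. The last point is legitimate here because the $d$ factors are identical with finite third moment, so the normalized privacy-loss sum obeys the ordinary i.i.d.\ central limit theorem and the term $\gamma$ correctly quantifies the $\mathcal{O}(d^{-1/2})$ approximation error, which is exactly the source of the $\sqrt{d}$ (rather than $d$) dimension dependence claimed for the mechanism.
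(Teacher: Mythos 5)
Your proposal is correct and follows essentially the same route as the paper: treat the vector mechanism as the $d$-fold tensor product of the scalar tradeoff function from Theorem \ref{Ternary_Batch_scalar}, apply the Berry--Esseen central limit theorem for $f$-DP of \cite{dong2021gaussian}, and compute the moments of $\log|f'|$ (a three-point distribution at $\pm\log\lambda$ and $0$), with the $\log\lambda$ factor cancelling exactly in both $\mu$ and $\gamma$. The only cosmetic difference is that you work with the centered second moment directly while the paper forms $\kappa_{2}-\mathrm{kl}^{2}$ from the uncentered functionals; these coincide, and your computed values of $\mathrm{kl}$, the variance, and $\bar\kappa_{3}$ all match the paper's.
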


\begin{Remark}[\textbf{Privacy Improvement via Mini-batch Sampling}]
Similar to the classic Gaussian mechanism in which mini-batch SGD reduces the global sensitivity (and therefore improves the privacy) compared to SGD, Theorem \ref{Ternary_Batch_vector} implies that the privacy guarantee of the $ternary$ compressor also improves (i.e., $\mu$ decreases) as $b$ increases. When $b=1$, it recovers the result in \cite{jin2023breaking}. Besides, instead of distributed mean estimation, we focus on distributed learning with analyses on convergence and Byzantine resilience.
\end{Remark}

\begin{Remark}
\cite{xiang2023distributed} proves that the stochastic-sign compressor, which is a special case of the ternary compressor with $A=B$, is $(\epsilon,0)$-DP with $\epsilon = d\log(\frac{B+c}{B-c})$, which has a linear dependency on $d$. Besides showing the privacy amplification effect of mini-batch sampling, Theorem \ref{Ternary_Batch_vector} implies that the privacy guarantee $\mu = \mathcal{O}(\sqrt{d})$.
\end{Remark}

\subsection{Convergence Results in the Absence of Attackers}
To facilitate the convergence analysis for Algorithm \ref{DPSGDAlgorithm}, we make the following commonly adopted assumptions.
\begin{Assumption}\label{A1}(Lower bound).
 For all $\boldsymbol{x}$ and some constant $F^{*}$, we have objective value $F(\boldsymbol{x}) \geq F^{*}$.
\end{Assumption}
\begin{Assumption}\label{A2}(Smoothness).
$\forall \boldsymbol{y},\boldsymbol{x}$, we require for some non-negative constant $L$,
\begin{equation}
F(\boldsymbol{y}) \leq F(\boldsymbol{x}) + \langle\nabla F(\boldsymbol{x}), \boldsymbol{y}-\boldsymbol{x}\rangle + \frac{L}{2}||\boldsymbol{y}-\boldsymbol{x}||_2^{2},
\end{equation}
where $\langle\cdot,\cdot\rangle$ is the standard inner product.
\end{Assumption}
\begin{Assumption}\label{A3}(Variance bound).
For any worker $m$, the stochastic gradient oracle gives an independent unbiased estimate $\boldsymbol{g}_{m}$ that has coordinate bounded variance:
\begin{equation}
\mathbb{E}[\boldsymbol{g}_{m}] = \nabla f_{m}(\boldsymbol{w}),\mathbb{E}[(\boldsymbol{g}_{m,i}-\nabla f_{m}(\boldsymbol{w})_{i})^2] \leq \sigma^2_{i},
\end{equation}
for a vector of non-negative constants $\bar{\boldsymbol{\sigma}} = [\sigma_{1},\cdots,\sigma_{d}]$.
\end{Assumption}
\begin{Assumption}\label{A4}(Gradient bound).
For any worker $m$, the stochastic gradient satisfies $\boldsymbol{g}_{m,i} \leq c, \forall 1\leq i \leq d$.
\end{Assumption}

We note that in the implementation of differentially private SGD algorithms, clipping is usually applied to ensure bounded gradients \cite{abadi2016deep}. In this work, we follow the literature (e.g., \cite{xiang2023distributed}) and adopt the bounded gradient assumption, i.e., Assumption \ref{A4}, for convergence analysis. The impact of gradient clipping has also been studied in the literature, e.g., \cite{zhang2022understanding}, which is left for future work. In the following results, we consider  $\mathcal{C}(\cdot) = ternary(\cdot,A,B)$ in which $B \geq 2A \geq 2c$ and $\mathcal{N}_{t} = \mathcal{M}, \forall t$ in Algorithm \ref{DPSGDAlgorithm}. Moreover, we term Algorithm \ref{DPSGDAlgorithm} with the scheme I aggregator and the scheme II aggregator \textit{TernaryMean} and \textit{TernaryVote}, respectively.

\begin{Theorem}[\textbf{Convergence of \textit{TernaryMean}}]\label{convergerate_schemeI}
Suppose Assumptions \ref{A1}-\ref{A4} are satisfied, then by running Algorithm \ref{DPSGDAlgorithm} with \textit{TernaryMean} for $T$ iterations, we have
\begin{equation}
\begin{split}
\frac{1}{T}\sum_{t=1}^{T}&||\nabla F(\boldsymbol{w}^{(t)})||_{2}^{2} \leq \frac{F(\boldsymbol{w}^{(0)}) - F^{*}}{T\big(\frac{\eta}{B} - \frac{L\eta^2}{2B^{2}}\big)} \\
&+ \frac{L\eta^2}{2B^{2}\big(\frac{\eta}{B} - \frac{L\eta^2}{2B^{2}}\big)}\left[\frac{ABd}{M} + \frac{||\bar{\boldsymbol{\sigma}}||_{2}^{2}}{M}\right].
\end{split}
\end{equation}
\end{Theorem}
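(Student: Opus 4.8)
The plan is to run the standard smoothness-based descent analysis for nonconvex SGD, treating $\hat{\boldsymbol{g}}^{(t)}$ as a stochastic gradient that is unbiased up to a known scalar factor. The two facts I need about the ternary compressor are its first and second moments. A direct computation from Definition \ref{TernaryCompressor} gives $\mathbb{E}[ternary(x,A,B)] = \frac{A+x}{2B} - \frac{A-x}{2B} = \frac{x}{B}$ and $\mathbb{E}[ternary(x,A,B)^2] = \frac{A+x}{2B} + \frac{A-x}{2B} = \frac{A}{B}$, so the conditional variance is $\frac{A}{B} - \frac{x^2}{B^2} \le \frac{A}{B}$ for every $x \in [-c,c]$. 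Because $B \ge 2A \ge 2c$ and Assumption \ref{A4} keeps each coordinate of $\bar{\boldsymbol{g}}_{m}^{(t)}$ in $[-c,c]$, the three compressor probabilities are nonnegative and the compressor is always well defined (no batch shift is needed here). Combining the compressor mean with the unbiasedness in Assumption \ref{A3} and $\nabla F = \frac{1}{M}\sum_{m}\nabla f_{m}$ yields, conditioned on $\boldsymbol{w}^{(t)}$, the key identity $\mathbb{E}[\hat{\boldsymbol{g}}^{(t)}] = \frac{1}{B}\nabla F(\boldsymbol{w}^{(t)})$.

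\textbf{Second-moment bound.} Next I would control $\mathbb{E}[\,||\hat{\boldsymbol{g}}^{(t)}||_{2}^{2}\,]$ by splitting it into the squared mean $||\mathbb{E}[\hat{\boldsymbol{g}}^{(t)}]||_{2}^{2} = \frac{1}{B^{2}}||\nabla F(\boldsymbol{w}^{(t)})||_{2}^{2}$ and the total variance $\frac{1}{M^{2}}\sum_{m}\sum_{i}\mathrm{Var}(Z_{m,i}^{(t)})$, where the cross terms vanish because the $M$ workers draw their mini-batches and compression noise independently. For each worker and coordinate I apply the law of total variance over the two independent randomness sources: the inner rounding contributes at most $A/B$ by the conditional-variance bound above, while the outer sampling contributes $\frac{1}{B^{2}}\mathrm{Var}(\bar{\boldsymbol{g}}_{m,i}^{(t)}) \le \frac{\sigma_{i}^{2}}{B^{2}}$ by Assumption \ref{A3}. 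Summing over the $d$ coordinates and the $M$ workers gives $\mathbb{E}[\,||\hat{\boldsymbol{g}}^{(t)}||_{2}^{2}\,] \le \frac{1}{B^{2}}||\nabla F(\boldsymbol{w}^{(t)})||_{2}^{2} + \frac{Ad}{MB} + \frac{||\bar{\boldsymbol{\sigma}}||_{2}^{2}}{MB^{2}}$, which already exhibits the two variance contributions appearing in the theorem.

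\textbf{Descent and telescoping.} Then I would substitute the update $\boldsymbol{w}^{(t+1)} = \boldsymbol{w}^{(t)} - \eta\hat{\boldsymbol{g}}^{(t)}$ into the smoothness inequality of Assumption \ref{A2} and take conditional expectations, using the mean identity on the inner-product term and the second-moment bound on the quadratic term, to obtain
\begin{equation*}
\mathbb{E}[F(\boldsymbol{w}^{(t+1)})] \le F(\boldsymbol{w}^{(t)}) - \Big(\tfrac{\eta}{B} - \tfrac{L\eta^{2}}{2B^{2}}\Big)||\nabla F(\boldsymbol{w}^{(t)})||_{2}^{2} + \tfrac{L\eta^{2}}{2}\Big(\tfrac{Ad}{MB} + \tfrac{||\bar{\boldsymbol{\sigma}}||_{2}^{2}}{MB^{2}}\Big).
\end{equation*}
Rearranging, summing over $t=1,\dots,T$, cancelling the telescoping $F$ terms with Assumption \ref{A1} ($F \ge F^{*}$), dividing by $T$ and by the positive coefficient $\frac{\eta}{B} - \frac{L\eta^{2}}{2B^{2}}$, and finally rewriting $\frac{L\eta^{2}}{2}\big(\frac{Ad}{MB} + \frac{||\bar{\boldsymbol{\sigma}}||_{2}^{2}}{MB^{2}}\big) = \frac{L\eta^{2}}{2B^{2}}\big(\frac{ABd}{M} + \frac{||\bar{\boldsymbol{\sigma}}||_{2}^{2}}{M}\big)$ yields exactly the stated bound.

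\textbf{Main obstacle.} The descent-lemma bookkeeping is routine; the step that needs care is the second-moment bound. The subtlety is handling the two nested sources of randomness cleanly: the law-of-total-variance split must retain the compression term at $A/B$ (rather than using the tighter $\frac{A}{B} - \frac{x^{2}}{B^{2}}$, which would cancel the sampling variance) so that both the $\frac{ABd}{M}$ and $\frac{||\bar{\boldsymbol{\sigma}}||_{2}^{2}}{M}$ terms survive, and I must invoke cross-worker independence correctly to produce the $1/M$ factor. I also need the step size small enough that $\frac{\eta}{B} - \frac{L\eta^{2}}{2B^{2}} > 0$ so that dividing by it preserves the inequality direction.
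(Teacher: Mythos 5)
Your proposal is correct and follows essentially the same route as the paper's proof: a descent-lemma argument using the compressor's unbiasedness up to the factor $1/B$, together with a bias--variance decomposition of the second moment of the aggregate (your law-of-total-variance split into compression variance at most $A/B$ per coordinate and sampling variance $\sigma_i^2/B^2$ is the same computation the paper performs in two successive mean-variance steps, yielding the identical $\frac{ABd}{M}+\frac{||\bar{\boldsymbol{\sigma}}||_2^2}{M}$ term). Your explicit remark that one must keep the loose bound $A/B$ rather than the exact conditional variance, and that $\frac{\eta}{B}-\frac{L\eta^2}{2B^2}$ must be positive before dividing, matches the paper's treatment.
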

\begin{Remark}
When $\eta \leq \frac{B}{L}$, we have $\frac{\eta}{B} - \frac{L\eta^2}{2B^{2}} \geq \frac{\eta}{2B}$. Setting $\eta=\frac{\sqrt{M}}{\sqrt{TLd}}$ gives $\frac{1}{T}\sum_{t=1}^{T}||\nabla F(\boldsymbol{w}^{(t)})||_{2}^{2} \leq \mathcal{O}(\frac{B}{\sqrt{MT}}+\frac{A}{\sqrt{MT}}+\frac{||\bar{\boldsymbol{\sigma}}||_{2}^{2}}{B\sqrt{MT}}) = \mathcal{O}(\frac{1}{\sqrt{MT}})$, which matches that of distributed SGD \cite{jiang2018linear}.
\end{Remark}

\begin{Theorem}[\textbf{Convergence of \textit{TernaryVote}}]\label{convergerate_schemeII}
Suppose Assumptions \ref{A1}-\ref{A4} are satisfied, and the learning rate is set as $\eta=\frac{1}{\sqrt{TLd}}$. Then by running Algorithm \ref{DPSGDAlgorithm} with \textit{TernaryVote} for $T$ iterations, we have
\begin{equation}
\begin{split}
&\frac{1}{T}\sum_{t=1}^{T}||\nabla F(\boldsymbol{w}^{(t)})||_{1} \leq \frac{(F(\boldsymbol{w}^{(0)}) - F^{*})\sqrt{Ld}}{\sqrt{T}} \\
&+ \frac{\sqrt{Ld}}{2\sqrt{T}}+ \frac{4||\bar{\boldsymbol{\sigma}}||_{1}}{\sqrt{M}}+ \frac{2Bd}{\sqrt{M+1}}\bigg(1-\frac{1}{M+1}\bigg)^{\frac{M}{2}} \\
&\leq \mathcal{O}\left(\frac{1}{\sqrt{T}}+\frac{B}{\sqrt{M}}\right).
\end{split}
\end{equation}
\end{Theorem}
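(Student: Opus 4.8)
The plan is to mirror the classical majority-vote analysis of \textsc{sign}SGD, replacing the per-worker sign by the ternary vote and exploiting the crucial identity $\mathbb{E}[ternary(x,A,B)] = x/B$, so that the aggregated vote is, up to the scale $1/B$, an \emph{unbiased} estimate of the gradient even under data heterogeneity. Throughout I write $g_j = \nabla F(\boldsymbol{w}^{(t)})_j$, $\hat g_j^{(t)} = sign\big(\frac1M\sum_{i\in\mathcal M}\boldsymbol{Z}_{i,j}^{(t)}\big)$, $S_j = \sum_{i\in\mathcal M}\boldsymbol Z_{i,j}^{(t)}$, and I condition on $\boldsymbol w^{(t)}$ so that each $g_j$ is deterministic.

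\textbf{Step 1 (descent lemma).} Since $\hat{\boldsymbol g}^{(t)}$ is ternary, $\|\hat{\boldsymbol g}^{(t)}\|_2^2 \le d$ deterministically. Applying Assumption \ref{A2} to $\boldsymbol w^{(t+1)} = \boldsymbol w^{(t)} - \eta\hat{\boldsymbol g}^{(t)}$ and taking expectation gives $\mathbb{E}[F(\boldsymbol w^{(t+1)})] \le F(\boldsymbol w^{(t)}) - \eta\langle\nabla F(\boldsymbol w^{(t)}),\mathbb{E}[\hat{\boldsymbol g}^{(t)}]\rangle + \tfrac{L\eta^2 d}{2}$. Telescoping, dividing by $\eta T$, and substituting $\eta=1/\sqrt{TLd}$ reproduces the first two terms $\tfrac{(F(\boldsymbol w^{(0)})-F^*)\sqrt{Ld}}{\sqrt T}+\tfrac{\sqrt{Ld}}{2\sqrt T}$, provided I can establish the per-round bound $\langle\nabla F,\mathbb{E}[\hat{\boldsymbol g}]\rangle \ge \|\nabla F\|_1 - (\text{the two error terms})$.

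\textbf{Step 2 (key per-coordinate lemma, the crux).} For $g_j>0$ (symmetric otherwise), $|g_j| - g_j\mathbb{E}[\hat g_j] = |g_j|\big(P(S_j=0)+2P(S_j<0)\big)\le 2|g_j|\,P(S_j\le 0)$, so I must bound the probability the vote has the wrong sign, decoupling the two randomness sources. Conditioning on the stochastic gradients $\{\boldsymbol g_{i,j}\}$, the $\boldsymbol Z_{i,j}\sim ternary(\boldsymbol g_{i,j},A,B)$ are independent; a Chernoff bound together with AM-GM on $\prod_i\mathbb{E}[e^{-sZ_{i,j}}]$ (the arithmetic mean of the moment generating functions depends on the $\boldsymbol g_{i,j}$ only through their average $\bar g_j^{\mathrm{emp}}=\frac1M\sum_i\boldsymbol g_{i,j}$, which is exactly how heterogeneity is absorbed) yields $P(S_j\le 0\mid\{\boldsymbol g\})\le(1-\rho)^M$ with $\rho=(A-\sqrt{A^2-(\bar g_j^{\mathrm{emp}})^2})/B$. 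I then pull out the gradient noise via $g_j\mathbb{E}[\hat g_j]\ge\mathbb{E}[\bar g_j^{\mathrm{emp}}\hat g_j]-\mathbb{E}|\bar g_j^{\mathrm{emp}}-g_j|$ (using $|\hat g_j|\le1$), and $\mathbb{E}|\bar g_j^{\mathrm{emp}}-g_j|\le\sqrt{\mathrm{Var}(\bar g_j^{\mathrm{emp}})}\le\sigma_j/\sqrt M$ by Assumption \ref{A3}; summed over $j$ this produces the $\|\bar{\boldsymbol\sigma}\|_1/\sqrt M$ contribution. Jensen's inequality $\mathbb{E}|\bar g_j^{\mathrm{emp}}|\ge|g_j|$ then reduces the remaining work to bounding $|\bar g_j^{\mathrm{emp}}|(1-\rho)^M$ for every realization.

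\textbf{Step 3 (closed-form optimization) and the main obstacle.} It remains to maximize $|\bar g_j^{\mathrm{emp}}|(1-\rho)^M$ subject to $|\bar g_j^{\mathrm{emp}}|\le c\le A$. The identity $(\bar g_j^{\mathrm{emp}})^2 = B\rho(2A-B\rho)$ combined with $B\ge 2A$ gives $(\bar g_j^{\mathrm{emp}})^2\le B^2\rho$, so after the substitution $\rho'=\rho(2-\rho)$ one obtains $[\,|\bar g_j^{\mathrm{emp}}|(1-\rho)^M]^2\le B^2\rho'(1-\rho')^M\le B^2\max_{\theta\in[0,1]}\theta(1-\theta)^M = B^2\tfrac{M^M}{(M+1)^{M+1}}$; taking square roots gives exactly $\tfrac{B}{\sqrt{M+1}}(1-\tfrac1{M+1})^{M/2}$, and the factor $2$ from $2P(S_j\le 0)$ and the sum over the $d$ coordinates yield the final term $\tfrac{2Bd}{\sqrt{M+1}}(1-\tfrac1{M+1})^{M/2}$. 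I expect Steps 2--3 to be the hard part: obtaining the \emph{exact} constant (not merely the $\mathcal O(B/\sqrt M)$ rate) requires the tight Chernoff/AM-GM argument, the algebraic link between $\bar g_j^{\mathrm{emp}}$ and $\rho$, and the elementary but delicate maximization $\max_\theta\theta(1-\theta)^M=\tfrac{M^M}{(M+1)^{M+1}}$, all while keeping the heterogeneous local gradients and the stochastic-gradient noise cleanly separated so that only $\bar g_j^{\mathrm{emp}}$ and its $\sigma_j/\sqrt M$ fluctuation survive. The descent step and the substitution of $\eta$ are then routine.
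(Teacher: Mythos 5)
Your proposal is correct and yields the stated bound, but it takes a genuinely different route through both of the key technical steps, so it is worth comparing. For the wrong-vote probability, the paper (Lemma \ref{LemmaProbofWrongGenericTernary}) decomposes each ternary output as the average of two $\pm 1$ variables and invokes the generic sign-compressor bound $[4\bar{p}(1-\bar{p})]^{M/2}$ from \cite{jin2024sign}, obtaining $(1-|\bar{g}|^{2}/B^{2})^{M/2}$ and then maximizing $x(1-x^{2}/B^{2})^{M/2}$ at $x=B/\sqrt{M+1}$; you instead run a direct Chernoff/AM-GM argument (valid because the MGF is affine in $\boldsymbol{g}_{i,j}$, so the arithmetic mean depends only on $\bar{g}_{j}^{\mathrm{emp}}$), get the tighter exponent $(1-\rho)^{M}$ with $\rho=(A-\sqrt{A^{2}-(\bar{g}_{j}^{\mathrm{emp}})^{2}})/B$, and recover the \emph{identical} extremal constant $\frac{B}{\sqrt{M+1}}(1-\frac{1}{M+1})^{M/2}$ via the substitution $\rho'=\rho(2-\rho)$ and $\max_{\theta}\theta(1-\theta)^{M}=M^{M}/(M+1)^{M+1}$ (using $B\geq 2A$, which is assumed). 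For the stochastic-gradient noise, the paper union-bounds the wrong-sign event into two bad events and applies Markov twice, accumulating $4\|\bar{\boldsymbol{\sigma}}\|_{1}/\sqrt{M}$; your decomposition $g_{j}\hat{g}_{j}\geq\bar{g}_{j}^{\mathrm{emp}}\hat{g}_{j}-|g_{j}-\bar{g}_{j}^{\mathrm{emp}}|$ combined with Jensen's $\mathbb{E}|\bar{g}_{j}^{\mathrm{emp}}|\geq|g_{j}|$ costs only $\|\bar{\boldsymbol{\sigma}}\|_{1}/\sqrt{M}$, a strictly sharper constant that still implies the theorem. One presentational caveat: the first display of your Step 2 bounds $|g_{j}|-g_{j}\mathbb{E}[\hat{g}_{j}]$ by $2|g_{j}|P(S_{j}\leq 0)$ with the sign referenced to the true gradient $g_{j}$, whereas your conditional Chernoff bound only controls disagreement with $\mathrm{sign}(\bar{g}_{j}^{\mathrm{emp}})$ (it fails when the empirical mean has the wrong sign); this is harmless because your actual argument, via the inner-product decomposition, measures disagreement relative to $\bar{g}_{j}^{\mathrm{emp}}$ itself, but the write-up should make clear that the first display is superseded rather than used.
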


\begin{Remark}
We note that taking the majority vote during aggregation (assuming $sign(0) = 0$) enables downlink compression since the model updates are also ternary. The convergence rate derived in Theorem \ref{convergerate_schemeII} matches that of \textit{StoSign} in \cite{xiang2023distributed}. It is possible to improve the convergence rate to $\mathcal{O}(1/\sqrt{T}+B/M)$ for full-batch gradient descent (c.f. Appendix \ref{convergencefullbatch}).
\end{Remark}

It is worth mentioning that Theorem \ref{convergerate_schemeII} implies a convergence rate of $O(1/\sqrt{T}+B/\sqrt{T})$ when $M \geq T$ and $B$ is some finite constant, which echoes the results in \cite{jin2020stochastic, xiang2023distributed} that the convergence of the sign-based SGD methods approaches that of the vanilla SGD for a large enough $M$. However, when $M$ is small, setting a large $B$ seems to ruin its convergence, while Theorem \ref{Ternary_Batch_vector} implies that increasing $B$ yields better privacy preservation. We address this dilemma with the following theorem.

\begin{Theorem}[\textbf{Convergence of \textit{TernaryVote}}]\label{convergerate_schemeII2}
Suppose Assumptions \ref{A1}-\ref{A4} are satisfied, $|\nabla F(\boldsymbol{w}^{(t)})_{i}| < Q, \forall i, t$, $B \geq 2A = \mathcal{O}(\sqrt{T})$, and $\lim_{T\rightarrow \infty}M/\sqrt{T} = 0$. Then by running Algorithm \ref{DPSGDAlgorithm} with \textit{TernaryVote} and the learning rate $\eta=\frac{1}{\sqrt{TLd}}$ for $T$ iterations, we have
\begin{equation}
\begin{split}
&\frac{1}{T}\sum_{t=1}^{T}||\nabla F(\boldsymbol{w}^{(t)})||_{2}^{2} \leq \frac{1}{\mathcal{I}(A,B,M)} \times \\
&\bigg[\frac{(F(\boldsymbol{w}^{(0)}) - F^{*})\sqrt{Ld}}{\sqrt{T}} + \frac{\sqrt{Ld}}{2\sqrt{T}} \\
&+ \sum_{n=2}^{M}\bigg(1-\frac{A}{B}\bigg)^{M-n}\bigg[{M \choose n}\mathcal{O}\bigg(\frac{A^{n-2}}{B^{n}}\bigg)\bigg]Qd\bigg]\\
&\leq \mathcal{O}\bigg(\frac{B}{\sqrt{T}}\bigg) + \mathcal{O}\bigg(\frac{1}{B}\bigg),
\end{split}
\end{equation}
in which
\begin{equation}\nonumber
\mathcal{I}(A,B,M) = \sum_{n=1}^{M}(1-\frac{A}{B})^{M-n}\bigg[\frac{{n-1 \choose \lfloor\frac{n-1}{2}\rfloor}MA^{n-1}{M-1 \choose n-1}}{2^{n-1}B^{n}}\bigg].
\end{equation}
\end{Theorem}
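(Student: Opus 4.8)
The plan is to run the descent-lemma argument for sign-type methods, but to replace the crude ``probability of a correct vote'' estimate behind Theorem \ref{convergerate_schemeII} with a first-order Taylor expansion of the expected majority vote around a zero gradient. The slope of that expansion turns out to be exactly $\mathcal{I}(A,B,M)$, and it is precisely this refinement that upgrades the $\ell_1$-type guarantee into the $\ell_2^2$ guarantee claimed here. First I would apply Assumption \ref{A2} to the update $\boldsymbol{w}^{(t+1)} = \boldsymbol{w}^{(t)} - \eta\hat{\boldsymbol{g}}^{(t)}$ and take expectations, obtaining
\begin{equation}\nonumber
\mathbb{E}[F(\boldsymbol{w}^{(t+1)})] \leq F(\boldsymbol{w}^{(t)}) - \eta\langle\nabla F(\boldsymbol{w}^{(t)}),\mathbb{E}[\hat{\boldsymbol{g}}^{(t)}]\rangle + \tfrac{L\eta^2}{2}\mathbb{E}\|\hat{\boldsymbol{g}}^{(t)}\|_2^2,
\end{equation}
and since \textit{TernaryVote} yields $\hat{\boldsymbol{g}}^{(t)}\in\{-1,0,1\}^d$ I would bound $\|\hat{\boldsymbol{g}}^{(t)}\|_2^2\leq d$. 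The entire problem then reduces to lower bounding $\langle\nabla F(\boldsymbol{w}^{(t)}),\mathbb{E}[\hat{\boldsymbol{g}}^{(t)}]\rangle$ coordinate by coordinate.

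Second, I would fix a coordinate $i$ and set $p=\nabla F(\boldsymbol{w}^{(t)})_i$. By Assumption \ref{A4} clipping is inactive, so each worker's vote has marginal law $P(Z_{m,i}=\pm1)=(A\pm\nabla f_m(\boldsymbol{w})_i)/(2B)$ and $P(Z_{m,i}=0)=1-A/B$; crucially, the mini-batch variance drops out because these probabilities are affine in the gradient. Conditioning on the number $n$ of nonzero votes and on the sign split among them gives $\mathbb{E}[\hat{\boldsymbol{g}}_i^{(t)}]=\sum_{n=1}^{M}\binom{M}{n}(A/B)^n(1-A/B)^{M-n}\,\mathbb{E}[\mathrm{sign}(2r_n-n)]$ with $r_n\sim\mathrm{Bin}(n,\rho)$ and $\rho=\tfrac12+\tfrac{p}{2A}$ in the homogeneous case. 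Differentiating at $p=0$ and invoking the binomial mean-absolute-deviation identity $\sum_r|2r-n|\binom{n}{r}=2n\binom{n-1}{\lfloor(n-1)/2\rfloor}$ produces exactly the slope $\mathcal{I}(A,B,M)$, which is where the central binomial coefficient enters. A first-order expansion then yields $p\,\mathbb{E}[\hat{\boldsymbol{g}}_i^{(t)}]\geq\mathcal{I}(A,B,M)p^2-|p|\cdot\mathcal{E}_i$, where $\mathcal{E}_i$ bounds the remainder; since every $p$-derivative carries a factor $1/(2A)$, the remainder is controlled by $\sum_{n\geq2}(1-A/B)^{M-n}\binom{M}{n}\mathcal{O}(A^{n-2}/B^n)$, and $|p|<Q$ supplies the factor $Q$.

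Third, I would sum the coordinate bound into $\langle\nabla F,\mathbb{E}[\hat{\boldsymbol{g}}]\rangle\geq\mathcal{I}(A,B,M)\|\nabla F\|_2^2-Qd\,R$ with $R$ the aggregated remainder, substitute into the descent inequality, telescope over $t=1,\dots,T$, and divide by $\eta\mathcal{I}(A,B,M)T$. Inserting $\eta=1/\sqrt{TLd}$ converts $1/(\eta T)$ and $L\eta d/2$ into the $\sqrt{Ld}/\sqrt{T}$ terms, reproducing the bracketed expression verbatim. The final simplification uses $B\geq2A=\mathcal{O}(\sqrt{T})$ and $M/\sqrt{T}\to0$: the leading $n=1$ term keeps $\mathcal{I}(A,B,M)=\Omega(1/B)$, so the $1/\sqrt{T}$ terms divided by $\mathcal{I}$ become $\mathcal{O}(B/\sqrt{T})$, while the remainder divided by $\mathcal{I}$ collapses to $\mathcal{O}(1/B)$, giving $\mathcal{O}(B/\sqrt{T})+\mathcal{O}(1/B)$.

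The hard part will be the coordinate-wise expansion of Step 2: pinning down that the \emph{exact} first-order coefficient of the expected vote equals $\mathcal{I}(A,B,M)$, and then bounding the Taylor remainder \emph{uniformly} by the $A^{n-2}/B^n$ terms over the whole range $|p|<Q$ (a derivative bound along the entire segment, not merely a value at the origin). A related subtlety is confirming that data heterogeneity affects only the remainder: the linear term is symmetric across workers, hence depends on $\sum_m\nabla f_m(\boldsymbol{w})_i=M\,\nabla F(\boldsymbol{w})_i$ and reproduces the same $\mathcal{I}(A,B,M)$, whereas the differing local gradients (each bounded via Assumption \ref{A4}) are relegated to higher-order contributions. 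The combinatorial identity for the mean absolute deviation of a centered binomial is the linchpin that makes the whole reduction go through.
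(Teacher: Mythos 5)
Your proposal is correct and follows the same skeleton as the paper's proof: descent lemma with $\|\hat{\boldsymbol{g}}^{(t)}\|_2^2 \le d$, conditioning on the number $n$ of nonzero votes, extraction of the term linear in the gradients (which depends only on $\sum_m \boldsymbol{g}_{m,i}^{(t)}$ and hence, in expectation, on $M\nabla F(\boldsymbol{w}^{(t)})_i$), relegation of everything else to the $\binom{M}{n}\mathcal{O}(A^{n-2}/B^{n})$ remainder controlled by $|\nabla F_i|<Q$, and the final division by $\mathcal{I}(A,B,M)$. The one place you genuinely diverge is in how the coefficient $\mathcal{I}(A,B,M)$ is obtained: the paper's Lemma \ref{bsufficientlylarge} expands the Poisson-binomial polynomial $\sum_{\mathcal{A}}\prod_{k\in\mathcal{A}}(A+u_k)\prod_{j\notin\mathcal{A}}(A-u_j)$ exactly in powers of $A$, computes the coefficients $a_{n-v,H}$ recursively, and sums over the majority threshold $H$, whereas you differentiate the expected sign of a conditional binomial at the symmetric point $\rho=\tfrac12$ and invoke $\sum_r|2r-n|\binom{n}{r}=2n\binom{n-1}{\lfloor(n-1)/2\rfloor}$; these produce the identical central binomial coefficient, and your route is arguably cleaner for the leading term. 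The trade-off is exactly the one you flag as "the hard part": the paper's exact coefficient extraction handles worker heterogeneity and the uniform bound on all degree-$\ge 2$ terms in one shot (each $|a_{n-v,H}|\lesssim \binom{n}{v}c^{v}\cdot(\text{comb.})$, with the sum over $v\ge 2$ dominated by $v=2$ because $nc/A\to 0$ under $M/\sqrt{T}\to 0$ and $A=\Theta(\sqrt{T})$), while your Taylor framing still owes a segment-wise second-derivative bound and the symmetry argument that heterogeneity only enters at second order — both of which do go through, for the same reason the paper's tail terms are negligible. No gap, just a different bookkeeping of the same expansion.
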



\begin{Remark}
We note that Theorem \ref{Ternary_Batch_vector} characterizes $\mu$-GDP guarantees of the ternary compressor for one iteration. The composition of $\mu$-GDP mechanisms in Lemma \ref{compositiontheorem} gives an overall privacy guarantee $\mu_{T} = \mathcal{O}(\sqrt{T}/\sqrt{AB})$. By setting a fixed sparsity ratio $\frac{A}{B}$ and $A = \mathcal{O}(\sqrt{T}/\mu_{T})$, we obtain $\frac{1}{T}\sum_{t=1}^{T}||\nabla F(\boldsymbol{w}^{(t)})||_{2}^{2} \leq \mathcal{O}(1/\mu_{T}) + \mathcal{O}(\mu_{T}/\sqrt{T})$, which matches that of the classic DP-SGD with the Gaussian mechanism. Note that DP-SGD has a convergence rate of $\mathcal{O}(\log(1/\delta)/\epsilon)$ for $(\epsilon,\delta)$-DP \cite{fang2022improved}, which is equivalent to $\mathcal{O}(1/\mu)$ for $\mu$-GDP.
\end{Remark}

\textbf{The Impact of Worker Sampling:} Theorems \ref{convergerate_schemeI}-\ref{convergerate_schemeII2} assume that all the workers are sampled for training during each communication round. However, the proofs can be readily extended to incorporate worker sampling. For example, suppose that each worker is sampled independently with a probability $p_{s}$ \cite{yang2021achieving}. Combining worker sampling with the ternary stochastic compressor yields
\begin{equation}
\begin{split}
&\mathcal{C}(x,A,B,p_{s}) =
\begin{cases}
\hfill 1, \hfill \text{with probability $\frac{A+x}{2B}p_{s}$},\\
\hfill 0, \hfill \text{with probability $1-\frac{A}{B}p_{s}$},\\
\hfill -1, \hfill \text{with probability $\frac{A-x}{2B}p_{s}$},\\
\end{cases}
\end{split}
\end{equation}
which implies that incorporating the uniform worker sampling strategy is equivalent to increasing the parameter $B$ by a factor of $1/p_{s}$.

\section{Byzantine Resilience}
\noindent In this section, we investigate the Byzantine resilience of \textit{TernaryVote}, i.e., Algorithm \ref{DPSGDAlgorithm} with the Scheme II aggregator. Since each normal worker $m \in \mathcal{M}$ only shares a ternary vector $\boldsymbol{Z}_{m} \in \{-1,0,1\}^{d}$, the Byzantine attackers will be easily identified if it shares anything other than a ternary vector. Therefore, we assume that each Byzantine attacker $k \in \mathcal{K}$ first obtains a gradient estimate $\boldsymbol{g}_{k}^{(t)}$, and then shares $\boldsymbol{Z}_{k}^{(t)} = [ternary(\boldsymbol{g}_{k,1}^{(t)},A,B),\cdots,ternary(\boldsymbol{g}_{k,d}^{(t)},A,B)]$ with the server, in which $\boldsymbol{g}_{k}^{(t)} \in [-c,c]^{d}$ can be arbitrary.

\begin{Theorem}\label{ByzantineTheorem1}
Suppose Assumptions \ref{A1}-\ref{A4} are satisfied, $|\nabla F(\boldsymbol{w}^{(t)})_{i}| < Q, \forall i, t$, and the learning rate is set as $\eta=\frac{1}{\sqrt{TLd}}$, then by running Algorithm \ref{DPSGDAlgorithm} with \textit{TernaryVote} and $\mathcal{N}_{t} = \mathcal{M} \cup \mathcal{K}$ for $T$ iterations, we have
\begin{equation}\label{ByzantineConvergenceEquation}
\begin{split}
&\frac{1}{T}\sum_{t=1}^{T}||\nabla F(\boldsymbol{w}^{(t)})||_{1} \leq \frac{(F(\boldsymbol{w}^{(0)}) - F^{*})\sqrt{Ld}}{\sqrt{T}} \\
&+ \frac{\sqrt{Ld}}{2\sqrt{T}}+ \frac{4K(Q+A)d}{M+K}+ \frac{4\sqrt{M}||\bar{\boldsymbol{\sigma}}||_{1}}{M+K}  \\
&+ \frac{2Bd}{\sqrt{M+K+1}}\bigg(1-\frac{1}{M+K+1}\bigg)^{\frac{M+K}{2}}\\
&\leq \mathcal{O}\left(\frac{1}{\sqrt{T}}+\frac{B}{\sqrt{M+K}} + \frac{\sqrt{M}+K}{M+K}\right).
\end{split}
\end{equation}
\end{Theorem}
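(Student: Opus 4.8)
The plan is to mirror the attacker-free convergence proof of Theorem~\ref{convergerate_schemeII} and to upgrade each step to the electorate $\mathcal{N}_t=\mathcal{M}\cup\mathcal{K}$, crucially exploiting that a Byzantine worker must still route its report through the ternary compressor: each $\boldsymbol{Z}_{k,j}^{(t)}$ lies in $\{-1,0,1\}$ and has mean $g_{k,j}^{(t)}/B\in[-c/B,c/B]$, so its per-coordinate influence is bounded in both range and expectation. First I would apply smoothness (Assumption~\ref{A2}) to the update $\boldsymbol{w}^{(t+1)}=\boldsymbol{w}^{(t)}-\eta\hat{\boldsymbol{g}}^{(t)}$, and use that $\hat{\boldsymbol{g}}^{(t)}$ is a ternary vector with $\|\hat{\boldsymbol{g}}^{(t)}\|_2^2\le d$, to obtain the one-step descent inequality
\begin{equation}\nonumber
\mathbb{E}\big[F(\boldsymbol{w}^{(t+1)})\big]\le F(\boldsymbol{w}^{(t)})-\eta\,\mathbb{E}\big[\langle\nabla F(\boldsymbol{w}^{(t)}),\hat{\boldsymbol{g}}^{(t)}\rangle\big]+\frac{L\eta^2 d}{2}.
\end{equation}
Everything then reduces to a lower bound on the expected signed vote $\mathbb{E}[\langle\nabla F,\hat{\boldsymbol{g}}^{(t)}\rangle]$.

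I would analyse this coordinatewise. Fix $t$, write $s_j=\mathrm{sign}(\nabla F(\boldsymbol{w}^{(t)})_j)$ and $V_j=\sum_{i\in\mathcal{N}_t}\boldsymbol{Z}_{i,j}^{(t)}$, so that $\hat g_j^{(t)}=\mathrm{sign}(V_j)$ with $\mathrm{sign}(0)=0$. The elementary identity $s_j\mathbb{E}[\mathrm{sign}(V_j)]=1-P(V_j=0)-2P(s_jV_j<0)$ yields the exact expression
\begin{equation}\nonumber
\mathbb{E}\big[\langle\nabla F,\hat{\boldsymbol{g}}^{(t)}\rangle\big]=\|\nabla F(\boldsymbol{w}^{(t)})\|_1-\sum_{j=1}^{d}|\nabla F(\boldsymbol{w}^{(t)})_j|\,\big(P(V_j=0)+2P(s_jV_j<0)\big).
\end{equation}
The whole task is therefore to bound the weighted failure mass $\sum_j|\nabla F_j|(P(V_j=0)+2P(s_jV_j<0))$ by the three error terms in \eqref{ByzantineConvergenceEquation}.

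To do so I would split $V_j=W_j+U_j$ into the honest sum $W_j=\sum_{m\in\mathcal{M}}\boldsymbol{Z}_{m,j}^{(t)}$ and the Byzantine sum $U_j=\sum_{k\in\mathcal{K}}\boldsymbol{Z}_{k,j}^{(t)}$. Under Assumptions~\ref{A3}--\ref{A4} the honest sum is unbiased for the signal, $\mathbb{E}[W_j]=\tfrac{M}{B}\nabla F(\boldsymbol{w}^{(t)})_j$, with per-voter variance at most $A/B$; a deviation bound for $W_j$ about its mean controls the honest contribution to $P(s_jV_j<0)$ and produces the diluted variance term $\tfrac{4\sqrt{M}\|\bar{\boldsymbol{\sigma}}\|_1}{M+K}$, in which the factor $M/(M+K)$ records that honest workers are only an $M/(M+K)$ fraction of the electorate. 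Since $|U_j|\le K$, the adversary can overturn a coordinate only by a bounded amount; its worst case $g_{k,j}^{(t)}=-c\,s_j$ biases $\mathbb{E}[V_j]$ against $s_j$ by at most $Kc/B$ and can damage progress only on coordinates where $|\nabla F_j|$ is comparable to this bias, and bounding the lost progress there using $|\nabla F_j|<Q$ together with the compressor range gives the Byzantine term $\tfrac{4K(Q+A)d}{M+K}$. Finally, since $\mathbb{E}[V_j]=\tfrac{M}{B}\nabla F_j$, reliable resolution of a coordinate requires its signal to exceed the noise floor of a sum of $M+K$ ternary votes; for the remaining low-signal coordinates both $P(V_j=0)$ and the residual of $P(s_jV_j<0)$ are controlled by an explicit anti-concentration bound of the form $\tfrac{1}{\sqrt{M+K+1}}\big(1-\tfrac{1}{M+K+1}\big)^{(M+K)/2}$ on the ternary probability mass function, and the $B$ in $\mathbb{E}[V_j]$ rescales this into the tie term $\tfrac{2Bd}{\sqrt{M+K+1}}(1-\tfrac{1}{M+K+1})^{(M+K)/2}$.

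To conclude I would sum the descent inequality over $t=1,\dots,T$, telescope the function values against $F(\boldsymbol{w}^{(0)})-F^*$ (Assumption~\ref{A1}), divide by $\eta T$, and substitute $\eta=1/\sqrt{TLd}$, which turns the optimization and smoothness contributions into $\tfrac{(F(\boldsymbol{w}^{(0)})-F^*)\sqrt{Ld}}{\sqrt{T}}+\tfrac{\sqrt{Ld}}{2\sqrt{T}}$ while leaving the three error terms unchanged; the final $\mathcal{O}(\cdot)$ rate then follows from $\tfrac{\sqrt{M}}{M+K}+\tfrac{K}{M+K}=\tfrac{\sqrt{M}+K}{M+K}$ and the fact that the tie term is $\mathcal{O}(B/\sqrt{M+K})$. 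I expect the main obstacle to be the third step: unlike the honest voters, the Byzantine voters offer neither unbiasedness nor a variance handle, so their randomized-yet-adversarial contribution $U_j$ must be dominated purely through its bounded range and mean; and the tie event must be bounded by a sharp anti-concentration estimate that holds uniformly over the heterogeneous and partly adversarial success probabilities of all $M+K$ ternary votes. These are precisely the points at which the honest-only argument behind Theorem~\ref{convergerate_schemeII} must be genuinely strengthened rather than merely re-indexed.
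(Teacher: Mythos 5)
Your skeleton matches the paper's: smoothness plus $||sign(\cdot)||_2^2 \leq d$ gives the one-step descent inequality, the inner product is expanded coordinatewise into $-||\nabla F(\boldsymbol{w}^{(t)})||_1$ plus a weighted sign-error mass, and the telescoping with $\eta = 1/\sqrt{TLd}$ at the end is identical; you also correctly isolate the key structural fact that Byzantine workers must still emit ternary-compressed values of inputs in $[-c,c]^d$. However, the middle of your argument diverges from the paper's proof in a way that leaves two genuine gaps. First, the paper does not split the vote count $V_j$ into an honest block $W_j$ and an adversarial block $U_j$. It pivots instead on the pre-compression aggregate $\bar{\boldsymbol{g}}_{\mathcal{M}+\mathcal{K},i}^{(t)} = \frac{1}{M+K}\big(\sum_{m\in\mathcal{M}}\boldsymbol{g}_{m,i}^{(t)} + \sum_{k\in\mathcal{K}}\boldsymbol{g}_{k,i}^{(t)}\big)$ and union-bounds the sign error into (i) the majority vote disagreeing with $sign\big(\bar{\boldsymbol{g}}_{\mathcal{M}+\mathcal{K},i}^{(t)}\big)$, which is handled by Lemma \ref{LemmaProbofWrongGenericTernary} applied uniformly to all $M+K$ inputs, adversarial or not (the lemma only needs fixed reals in $[-c,c]$), and (ii) $sign\big(\bar{\boldsymbol{g}}_{\mathcal{M}+\mathcal{K},i}^{(t)}\big)$ disagreeing with $sign(\nabla F(\boldsymbol{w}^{(t)})_i)$, which is handled by a Markov-type bound on $\mathbb{E}\big[\big|\bar{\boldsymbol{g}}_{\mathcal{M}+\mathcal{K},i}^{(t)} - \nabla F(\boldsymbol{w}^{(t)})_i\big|\big]$. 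The term $\frac{4K(Q+A)d}{M+K}$ is then nothing more than a triangle-inequality bound on $\sum_{i=1}^{d}\mathbb{E}\big[\big|\frac{K}{M+K}\nabla F(\boldsymbol{w}^{(t)})_i - \frac{1}{M+K}\sum_{k\in\mathcal{K}}\boldsymbol{g}_{k,i}^{(t)}\big|\big]$ taken over all $d$ coordinates and appearing twice with coefficient $2$; your claim that the adversary can damage progress only on coordinates where $|\nabla F(\boldsymbol{w}^{(t)})_j|$ is comparable to the bias $Kc/B$ is not how this term arises and, as stated, does not yield it, since nothing in the proof restricts attention to such coordinates.

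Second, the term $\frac{2Bd}{\sqrt{M+K+1}}\big(1-\frac{1}{M+K+1}\big)^{\frac{M+K}{2}}$ does not come from an anti-concentration bound on $P(V_j=0)$ over low-signal coordinates. It comes from writing $|\nabla F(\boldsymbol{w}^{(t)})_i| \leq |\nabla F(\boldsymbol{w}^{(t)})_i - \bar{\boldsymbol{g}}_{\mathcal{M}+\mathcal{K},i}^{(t)}| + |\bar{\boldsymbol{g}}_{\mathcal{M}+\mathcal{K},i}^{(t)}|$ inside the weighted error mass and maximizing $h(x)=x(1-x^2/B^2)^{\frac{M+K}{2}}$ over the aggregate magnitude, which peaks at $x = B/\sqrt{M+K+1}$. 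A pointwise anti-concentration estimate for a sum of $M+K$ ternary variables would scale like $\sqrt{B/((M+K)A)}$ and, multiplied by $Qd$, would not reproduce the stated bound. These two devices, the pivot on $\bar{\boldsymbol{g}}_{\mathcal{M}+\mathcal{K}}^{(t)}$ and the maximization of $h$, are precisely the points your proposal flags as the main obstacle, and they are the content you would still need to supply.
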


\begin{Remark}
Compared to Theorem \ref{convergerate_schemeII}, there is an additional term $\mathcal{O}(\frac{\sqrt{M}+K}{M+K})$ in Theorem \ref{SPByzantine1}. If $K = \mathcal{O}(\sqrt{M})$, then the last two terms in (\ref{ByzantineConvergenceEquation}) scale in $M$ with order $\mathcal{O}(\frac{1}{\sqrt{M}})$, which means that when $M = \mathcal{O}(T)$, the proposed \textit{TernaryVote} algorithm can tolerate $\mathcal{O}(\sqrt{M})$ Byzantine attackers.
\end{Remark}

\begin{table*}[t!]
\vspace{-0.1in}
\caption{Test Accuracy on MNIST with $A/B = 0.01$ (200 communication rounds)}
\label{table_mnist}
\begin{center}
\begin{sc}
\begin{tabular}{ccccc}
\toprule
$\mu$ & 0.1 & 0.5 & 1 & 2 \\
\midrule
\makecell{Gaussian Noise \&\\  Random Sparsification} & $24.68\pm 1.63\%$ & $66.57\pm 1.35\%$ & $76.40\pm 1.97\%$ & $84.19\pm 0.24\%$\\
\textit{TernaryMean} & $54.62\pm 2.68\%$ & $79.60\pm 0.45\%$ & $84.07\pm 0.15\%$ & $85.33\pm 0.47\%$\\
\textit{TernaryVote} & $55.25\pm 3.26\%$ & $80.93\pm 0.89\%$ & $84.18\pm 0.94\%$ & $85.53\pm 0.64\%$\\
\bottomrule
\end{tabular}
\end{sc}
\end{center}
\vspace{-0.1in}
\end{table*}

\begin{table*}[t!]
\vspace{-0.1in}
\caption{Test Accuracy on Fashion-MNIST with $A/B = 0.01$ (200 communication rounds)}
\label{table_fashionmnist}
\begin{center}
\begin{sc}
\begin{tabular}{ccccc}
\toprule
$\mu$ & 0.1 & 0.5 & 1 & 2 \\
\midrule
\makecell{Gaussian Noise \&\\  Random Sparsification} & $32.79\pm 4.57\%$ & $66.66\pm 1.39\%$ & $71.55\pm 0.49\%$ & $74.88\pm 0.81\%$\\
\textit{TernaryMean} & $60.77\pm 0.84\%$ & $74.09\pm 0.65\%$ & $75.89\pm 0.78\%$ & $76.31\pm 0.62\%$\\
\textit{TernaryVote} & $61.42\pm 1.58\%$ & $74.89\pm 0.35\%$ & $76.63\pm 0.21\%$ & $76.98\pm 0.78\%$\\
\bottomrule
\end{tabular}
\end{sc}
\end{center}
\vspace{-0.1in}
\end{table*}

\begin{Theorem}\label{ByzantineTheorem2}
Suppose Assumptions \ref{A1}-\ref{A4} are satisfied, $|\nabla F(\boldsymbol{w}^{(t)})_{i}| < Q, \forall i, t$, $B \geq 2A = \mathcal{O}(\sqrt{T})$, and $\lim_{T\rightarrow \infty}(M+K)/\sqrt{T} = 0$. Then by running Algorithm \ref{DPSGDAlgorithm} with \textit{TernaryVote}, $\mathcal{N}_{t} = \mathcal{M}\cup\mathcal{K}$, and the learning rate $\eta=\frac{1}{\sqrt{TLd}}$ for $T$ iterations, we have
\begin{equation}
\begin{split}
&\frac{1}{T}\sum_{t=1}^{T}\sum_{i=1}^{d}\left(M\left|\nabla F(\boldsymbol{w}^{(t)})_{i}\right| - \left|\sum_{k\in\mathcal{K}}\boldsymbol{g}_{k,i}^{(t)}\right|\right)|\nabla F(\boldsymbol{w}^{(t)})_{i}| \\
&\leq \mathcal{O}\bigg(\frac{B}{\sqrt{T}}\bigg) + \mathcal{O}\bigg(\frac{1}{B}\bigg).
\end{split}
\end{equation}
\end{Theorem}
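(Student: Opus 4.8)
The plan is to mirror the proof of Theorem~\ref{convergerate_schemeII2}, but now fold the $K$ adversarial ternary votes into the aggregated sum and isolate how the majority-vote drift splits into an honest signal and an adversarial perturbation. Write $S_i = \sum_{j\in\mathcal{M}\cup\mathcal{K}}\boldsymbol{Z}_{j,i}^{(t)}$, abbreviate $\nabla F_i$ for the $i$-th coordinate of $\nabla F(\boldsymbol{w}^{(t)})$, and let $g_{j,i}$ denote the per-coordinate compressor input of worker $j$ (the clipped mini-batch gradient when $j\in\mathcal{M}$, and the arbitrary value $\boldsymbol{g}_{k,i}^{(t)}\in[-c,c]$ when $j=k\in\mathcal{K}$). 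I would start from the smoothness inequality of Assumption~\ref{A2}. Since $\hat{\boldsymbol{g}}^{(t)}=sign(S)\in\{-1,0,1\}^{d}$ obeys $||\hat{\boldsymbol{g}}^{(t)}||_{2}^{2}\le d$, taking the step $\boldsymbol{w}^{(t+1)}-\boldsymbol{w}^{(t)}=-\eta\hat{\boldsymbol{g}}^{(t)}$ gives
\begin{equation}
\mathbb{E}[F(\boldsymbol{w}^{(t+1)})]\le F(\boldsymbol{w}^{(t)})-\eta\sum_{i=1}^{d}\nabla F_i\,\mathbb{E}[sign(S_i)]+\frac{L\eta^{2}d}{2},
\end{equation}
so the task reduces to lower bounding $\nabla F_i\,\mathbb{E}[sign(S_i)]$ coordinatewise.

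For the sign analysis I would condition on the number $n$ of active (nonzero) votes among the $M+K$ workers, using $P(\boldsymbol{Z}_{j,i}^{(t)}=0)=1-A/B$ and, given activity, $P(+1)-P(-1)=g_{j,i}/A$. The single-active-vote ($n=1$) configuration yields the dominant exactly-linear contribution $\frac{(1-A/B)^{M+K-1}}{B}\sum_{j\in\mathcal{M}\cup\mathcal{K}}g_{j,i}$ to $\mathbb{E}[sign(S_i)]$, which I retain as the useful signal. Taking expectation over the honest stochastic gradients and invoking Assumptions~\ref{A3}--\ref{A4} (unbiasedness, with clipping inactive under the gradient bound) so that $\mathbb{E}[\sum_{m\in\mathcal{M}}g_{m,i}]=M\nabla F_i$, and then pairing with $\nabla F_i$, a triangle inequality produces the honest-minus-adversarial margin
\begin{equation}
\nabla F_i\Big(M\nabla F_i+\sum_{k\in\mathcal{K}}g_{k,i}\Big)\ge |\nabla F_i|\Big(M|\nabla F_i|-\Big|\sum_{k\in\mathcal{K}}g_{k,i}\Big|\Big),
\end{equation}
which is precisely the summand on the left-hand side of the claim. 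Every configuration with $n\ge2$ is then bounded as error: using the same combinatorial factor as in Theorem~\ref{convergerate_schemeII2} (with $M$ replaced by $M+K$) together with $|\nabla F_i|<Q$ gives a contribution of order $\sum_{n=2}^{M+K}(1-A/B)^{M+K-n}{M+K \choose n}\mathcal{O}(A^{n-2}/B^{n})Qd$.

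Finally I would telescope over $t=1,\dots,T$, divide by the leading coefficient $\eta(1-A/B)^{M+K-1}/B$, and substitute $\eta=1/\sqrt{TLd}$: the optimization terms $(F(\boldsymbol{w}^{(0)})-F^{*})/(\eta T)$ and $L\eta d/2$ collapse to $\mathcal{O}(\sqrt{Ld}/\sqrt{T})$ and, after the $\Theta(B)$ rescaling, give $\mathcal{O}(B/\sqrt{T})$, while the $n\ge2$ tail contributes $\mathcal{O}(1/B)$. The hypotheses $B\ge2A=\mathcal{O}(\sqrt{T})$ and $(M+K)/\sqrt{T}\to0$ are exactly what keep the leading coefficient of order $1/B$ and the higher-order sum subdominant, treating $M,K,d,Q,L$ as constants in $T$. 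I expect the main obstacle to be the combinatorial control of $\mathbb{E}[sign(S_i)]$ once adversarial votes are mixed in: the attackers can shift each coordinate's vote distribution arbitrarily, so one must verify that their influence enters the useful (linear) channel only through $\sum_{k\in\mathcal{K}}g_{k,i}$, recovering the clean margin, while their effect on the $n\ge2$ configurations stays inside the same $Qd$-scaled combinatorial tail rather than amplifying it.
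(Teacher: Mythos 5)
Your overall architecture matches the paper's proof of this theorem: start from smoothness with $||sign(\cdot)||_{2}^{2}\leq d$, reduce to lower-bounding $\nabla F(\boldsymbol{w}^{(t)})_{i}\,\mathbb{E}[sign(S_{i})]$ coordinatewise, expand $P(sign(S_{i})=1)-P(sign(S_{i})=-1)$ by conditioning on the number $n$ of active votes among the $M+K$ workers, extract a term linear in $\sum_{m\in\mathcal{M}}\boldsymbol{g}_{m,i}^{(t)}+\sum_{k\in\mathcal{K}}\boldsymbol{g}_{k,i}^{(t)}$, use unbiasedness to turn the honest part into $M\nabla F(\boldsymbol{w}^{(t)})_{i}$, and finish with $M|\nabla F(\boldsymbol{w}^{(t)})_{i}|^{2}+\nabla F(\boldsymbol{w}^{(t)})_{i}\sum_{k\in\mathcal{K}}\boldsymbol{g}_{k,i}^{(t)}\geq|\nabla F(\boldsymbol{w}^{(t)})_{i}|\big(M|\nabla F(\boldsymbol{w}^{(t)})_{i}|-|\sum_{k\in\mathcal{K}}\boldsymbol{g}_{k,i}^{(t)}|\big)$. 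This is precisely what the paper does by applying Lemma \ref{bsufficientlylarge} with $M+K$ workers.

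There is, however, a genuine gap in your bookkeeping of the $n\geq 2$ configurations. Each such configuration contributes $\frac{{n-1\choose\lfloor\frac{n-1}{2}\rfloor}A^{n-1}}{2^{n-1}B^{n}}\sum_{j\in\mathcal{F}^{n}_{\neq 0}}u_{j}+\mathcal{O}(A^{n-2}/B^{n})$ to the drift, and the $\mathcal{O}(A^{n-2}/B^{n})$ remainder you quote is what is left \emph{after} the $A^{n-1}$-order linear term has been pulled out as signal. You propose to keep only the $n=1$ configuration as signal and to push every $n\geq 2$ configuration entirely into the error, yet you still assign that error the post-extraction magnitude $\mathcal{O}(A^{n-2}/B^{n})$; if the $n\geq 2$ linear terms really land in the error, their size is $\mathcal{O}(A^{n-1}/B^{n})$ per configuration, which already at $n=2$ is $\Theta(A/B^{2})$ and, after dividing by your leading coefficient $\Theta\big((1-A/B)^{M+K-1}/B\big)$, becomes $\Theta(A/B)$ --- a non-vanishing constant under the fixed sparsity ratio --- so the final bound degrades to $\mathcal{O}(B/\sqrt{T})+\mathcal{O}(1)$ rather than the claimed $\mathcal{O}(B/\sqrt{T})+\mathcal{O}(1/B)$. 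Nor can these terms be dropped as harmless: they carry the adversarial sum $\sum_{k\in\mathcal{K}}\boldsymbol{g}_{k,i}^{(t)}$, which can have an unfavorable sign against $\nabla F(\boldsymbol{w}^{(t)})_{i}$. The fix is exactly the paper's route: retain the linear components from \emph{all} $n\geq 1$ as signal, so that the normalizing coefficient is the full $\mathcal{I}(A,B,M,K)=\sum_{n=1}^{M+K}(1-\frac{A}{B})^{M+K-n}\frac{{n-1\choose\lfloor\frac{n-1}{2}\rfloor}A^{n-1}{M+K-1\choose n-1}}{2^{n-1}B^{n}}$ and only the genuine $\mathcal{O}(A^{n-2}/B^{n})$ remainders are treated as error; with that correction the rest of your plan goes through.
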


\begin{table*}[h!]
\vspace{-0.1in}
\caption{Test Accuracy on CIFAR-10 with $A/B = 0.01$ (500 communication rounds)}
\label{table_cifar10_0_01}
\begin{center}
\begin{sc}
\begin{tabular}{ccccc}
\toprule
$\mu$ & 0.1 & 0.5 & 1 & 2 \\
\midrule
\makecell{Gaussian Noise \&\\  Random Sparsification} & $13.03\pm 1.24\%$ & $26.80\pm 0.85\%$ & $30.84\pm 0.37\%$ & $35.17\pm 0.42\%$\\
\textit{TernaryMean} & $21.71\pm 1.19\%$  & $32.29\pm 1.32\%$ & $37.35\pm 1.13\%$ & $43.21\pm 0.44\%$\\
\textit{TernaryVote} & $23.15\pm 1.21\%$ & $31.04\pm 1.53\%$ & $36.08\pm 1.52\%$ & $37.66\pm 0.73\%$\\
\bottomrule
\end{tabular}
\end{sc}
\end{center}
\vspace{-0.1in}
\end{table*}


\begin{Remark}[\textbf{Robustness against Blind Attackers}]
Theorem \ref{ByzantineTheorem2} implies that the convergence of the proposed \textit{TernaryVote} algorithm is guaranteed as long as $M|\nabla F(\boldsymbol{w}^{(t)})_{i}| - |\sum_{k\in\mathcal{K}}\boldsymbol{g}_{k,i}^{(t)}| > 0$. For instance, if the attackers have access to the true gradients $\nabla F(\boldsymbol{w}^{(t)})$ and adopt $\mathcal{C}(\nabla F(\boldsymbol{w}^{(t)})) = ternary(-\nabla F(\boldsymbol{w}^{(t)}),A,B)$ (i.e., blind attackers as in \cite{bernstein2018signsgd2}), we have $|\sum_{k\in\mathcal{K}}\boldsymbol{g}_{k,i}^{(t)}| = K|\nabla F(\boldsymbol{w}^{(t)})_{i}|$. Then, the proposed \textit{TernaryVote} algorithm can tolerate $K = M-1$ Byzantine attackers, which is the same as {\scriptsize SIGN}SGD with majority vote with homogeneous data distribution across workers \cite{bernstein2018signsgd2}. Note that {\scriptsize SIGN}SGD fails to converge in the presence of data heterogeneity \cite{jin2020stochastic}), while we do not make assumptions on the data distribution.
\end{Remark}

\begin{table*}[h!]
\vspace{-0.1in}
\caption{Test Accuracy on CIFAR-10 with $A/B = 0.05$ (500 communication rounds)}
\label{table_cifar10_0_05}
\begin{center}
\begin{sc}
\begin{tabular}{ccccc}
\toprule
$\mu$ & 0.1 & 0.5 & 1 & 2 \\
\midrule
\makecell{Gaussian Noise \&\\  Random Sparsification} & $20.27\pm 1.46\%$ & $31.52\pm 0.95\%$ & $33.66\pm 0.59\%$ & $40.31\pm 0.60\%$\\
\textit{TernaryMean} & $24.87\pm 0.65\%$ & $32.62\pm 1.04\%$ & $38.25\pm 1.15\%$ & $41.95\pm 0.96\%$\\
\textit{TernaryVote} & $22.98\pm 1.68\%$ & $31.76\pm 0.73\%$ & $37.44\pm 1.47\%$ & $41.43\pm 0.98\%$\\
\bottomrule
\end{tabular}
\end{sc}
\end{center}
\vspace{-0.1in}
\end{table*}

\begin{figure*}[t!]
\centering
\begin{minipage}{0.3\linewidth}
  \centering
  \includegraphics[width=1\textwidth]{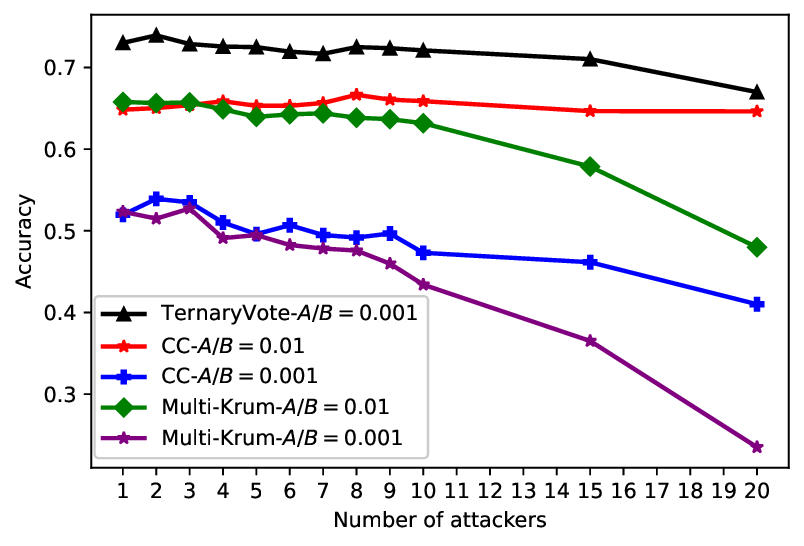}
\end{minipage}
\begin{minipage}{0.3\linewidth}
  \centering
  \includegraphics[width=1\textwidth]{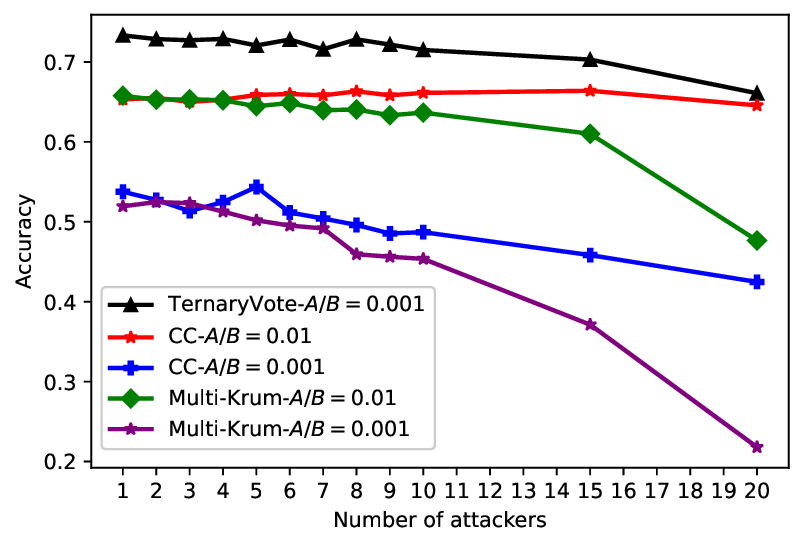}
\end{minipage}
\begin{minipage}{0.3\linewidth}
  \centering
  \includegraphics[width=1\textwidth]{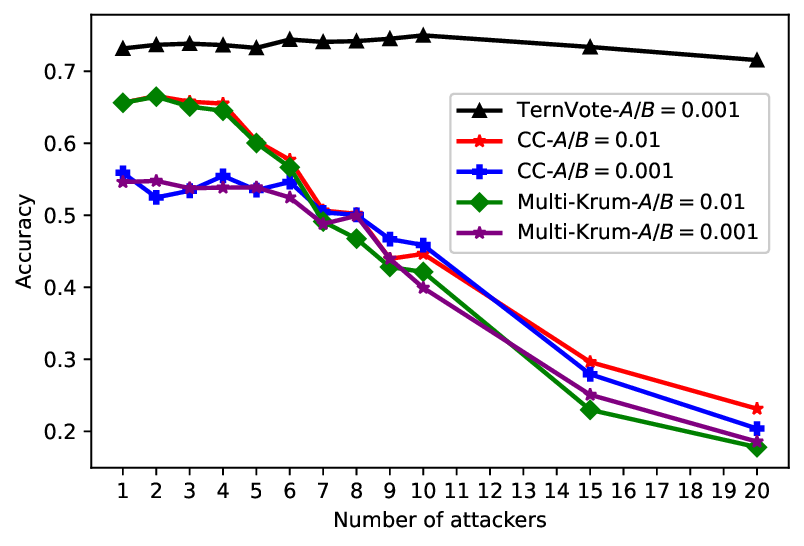}
\end{minipage}
\vspace{-0.1in}
\caption{Left, middle, and right figures compare \textit{TernaryVote} with Centered-Clipping (CC) and Multi-Krum under the flip sign (FS), the fall of empire (FoE), and the little is enough (LIE) attacks, respectively, on Fashion-MNIST given the same setting as that in Section \ref{secaccwithoutatt}.}
\label{ByzantinePerformance}
\end{figure*}

\section{Experimental Results}
In this section, we examine the performance of the proposed methods with a three-layer fully connected neural network on MNIST and Fashion-MNIST, and a CNN with four convolutional layers on CIFAR-10. In the absence of attackers, we compare the proposed algorithm with the combination of the Gaussian mechanism \cite{abadi2016deep} and random sparsification \cite{zhu2023improving} to ensure the same sparsity as the ternary compressor. For the selection of $A$ and $B$, we fix the sparsity ratio $A/B$ and the privacy guarantee $\mu$, and then find the corresponding $A$ and $B$ such that (\ref{computemu}) is satisfied. We use a batch size of $128$ in our experiments and clip the per-example gradient by $L_{2}$ norm with a threshold of $C=2$ \cite{abadi2016deep} for the baseline algorithm and by magnitude with a threshold of $c=0.0003$ for the proposed algorithm. In the presence of attackers, we further incorporate Multi-Krum \cite{blanchard2017machine} and the recently proposed centered clipping \cite{karimireddy2021learning} into the baseline algorithm. We note that in the high-sparsity regime, the median-based and trimmed mean-based methods may fail since the results will be 0 with a high probability. We run all the algorithms for $5$ repeats and present the results for high-sparsity and high-privacy scenarios, which are of more practical interest. More results for lower sparsity scenarios and implementation details can be found in Appendix \ref{suppadditionalresults} and Appendix \ref{DetailsImplementation}, respectively.
\subsection{Accuracy in the Absence of Attackers}\label{secaccwithoutatt}
Tables \ref{table_mnist}-\ref{table_cifar10_0_05} compare the test accuracy of the proposed methods with the baseline algorithm that combines the Gaussian mechanism and random sparsification. For MNIST and Fashion-MNIST, we consider a scenario of $M=100$ normal workers with the training data on each worker drawn independently with class labels following a Dirichlet distribution $Dir(\alpha)$ with $\alpha=0.1$, and 50 workers are sampled uniformly at random for training during each round. For CIFAR-10, we consider a scenario of $M=300$ normal workers with $\alpha=3$, and 90 workers are sampled uniformly at random for training during each communication round. It can be observed that \textit{TernaryMean} outperforms the baseline algorithm for all the examined scenarios, while \textit{TernaryVote} achieves a comparable performance to \textit{TernaryMean}. We note that despite the same sparsity, both \textit{TernaryMean} and \textit{TernaryVote} require only 1 bit to represent the value of each nonzero coordinate, while the baseline algorithm uses 32 bits (assuming that 32 bits are used to represent a float number). In this sense, the proposed methods achieve higher test accuracy while reducing the communication overhead from the workers to the parameter server. In addition, by taking the majority vote on the server side, \textit{TernaryVote} further reduces the communication overhead from the server to the workers compared to \textit{TernaryMean}.

\subsection{Accuracy in the Presence of Attackers}
In this subsection, we consider three types of attackers with the same data distribution as normal workers. The flip sign (FS) attackers flip the signs of gradients \cite{bernstein2018signsgd2} before applying random sparsification or the ternary compressor, the little is enough (LIE) attackers follow the method in \cite{baruch2019little} to generate the perturbed gradients before applying compression, while the fall of empire (FoE) attackers \cite{xie2020fall} flip the signs of the average gradients of the normal workers before applying compression. We assume that the attackers do not add noise (for the baseline algorithm) or set $A = c$ (for \textit{TernaryVote}) since they do not have any privacy concerns. We consider two different Byzantine robust aggregators for the baseline algorithm: the multi-krum aggregator \cite{blanchard2017machine}, and the centered-clipping mechanism \cite{karimireddy2021learning}. Fig. \ref{ByzantinePerformance} compares \textit{TernaryVote} with the baselines under the above three types of attackers with $\mu=0.5$ on Fashion-MNIST. It can be observed that \textit{TernaryVote} with $A/B = 0.001$ outperforms the baseline algorithms with $A/B = 0.01$ and $A/B = 0.001$ in the presence of up to 20 attackers, which corroborates its effectiveness.

\begin{figure}[!tb]
    \begin{overpic}[width=\linewidth]{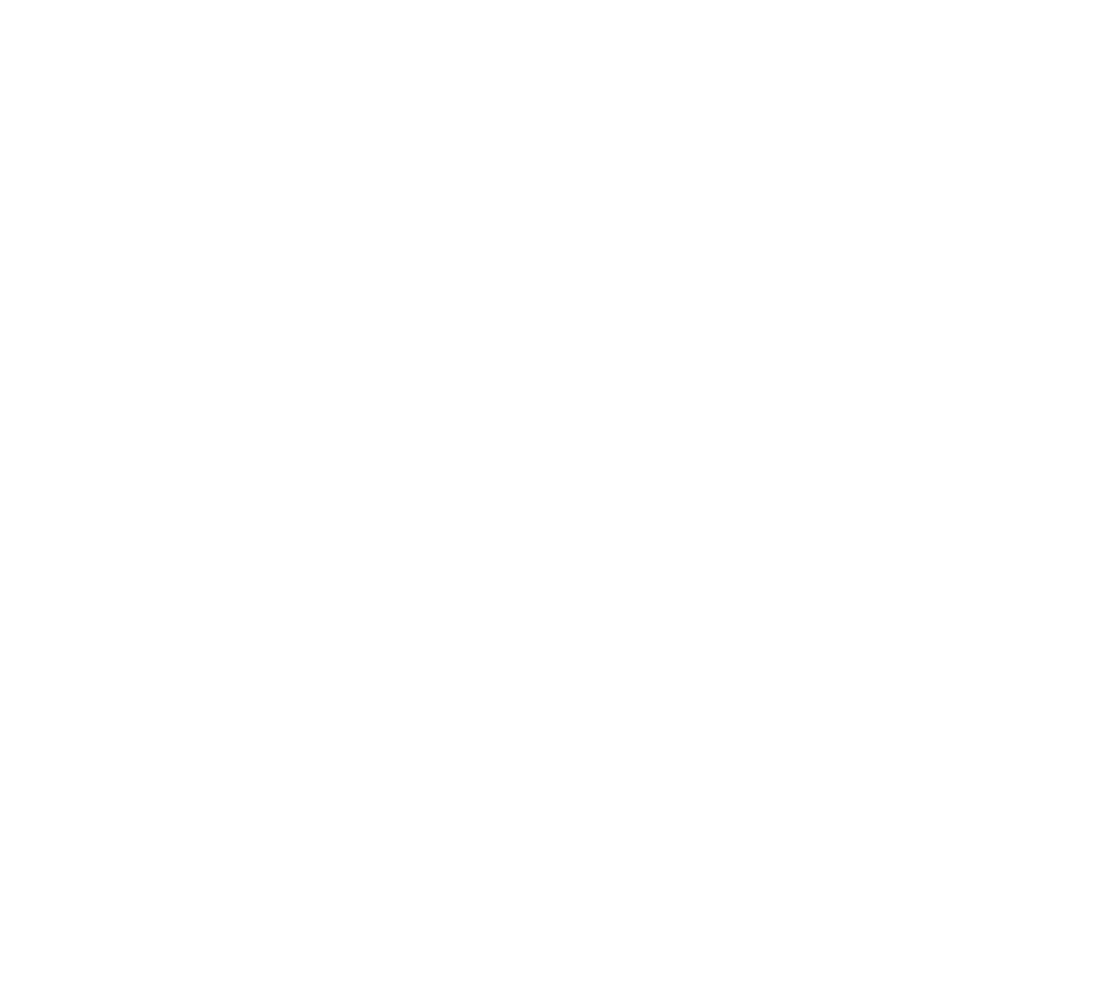}
    \put(17, 69){\includegraphics[width=0.2\linewidth]{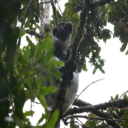}}
    \put(38, 69){\includegraphics[width=0.2\linewidth]{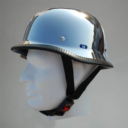}}
    \put(59, 69){\includegraphics[width=0.2\linewidth]{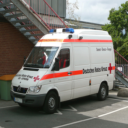}}
    \put(80, 69){\includegraphics[width=0.2\linewidth]{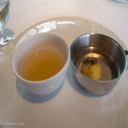}}

    \put(17, 67){\includegraphics[width=0.83\linewidth, height=1.3pt]{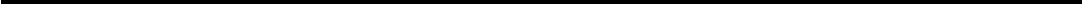}}

    \put(17, 45){\includegraphics[width=0.2\linewidth]{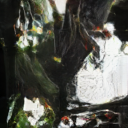}}
    \put(38, 45){\includegraphics[width=0.2\linewidth]{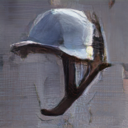}}
    \put(59, 45){\includegraphics[width=0.2\linewidth]{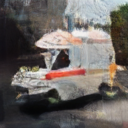}}
    \put(80, 45){\includegraphics[width=0.2\linewidth]{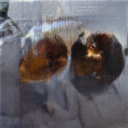}}

    \put(17, 24){\includegraphics[width=0.2\linewidth]{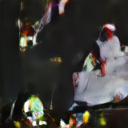}}
    \put(38, 24){\includegraphics[width=0.2\linewidth]{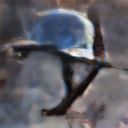}}
    \put(59, 24){\includegraphics[width=0.2\linewidth]{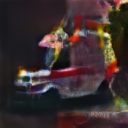}}
    \put(80, 24){\includegraphics[width=0.2\linewidth]{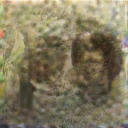}}

    \put(0, 77){\scriptsize Raw Images}
    \put(0, 62){\scriptsize \textit{~~Gaussian}}
    \put(0, 58){\scriptsize \textit{~~~~Noise \& }}
    \put(0, 54){\scriptsize \textit{~~~Random}}
    \put(0, 50){\scriptsize \textit{Sparsification}}
    \put(0, 46){\scriptsize LPIPS $0.336$}
    \put(0, 34.5){\scriptsize \textit{~~~~Ternary}}
    \put(0, 30.5){\scriptsize LPIPS $0.491$}
    \end{overpic}
    \vspace{-1in}
    \caption{Reconstructed images using the ROG attack for $\mu = 1$.}
    \label{rog}
    \vspace{-0.2in}
\end{figure}
\subsection{Protection Against Data Reconstruction Attacks}
Finally, we examine the privacy preservation capability of the proposed method against the reconstruction with obfuscated gradient (ROG) attack  \cite{yue2023gradient}, which is proposed for training data reconstruction based on compressed gradients. We consider training the LeNet \cite{zhu2019deep} using the validation dataset of ImageNet \cite{deng2009imagenet} with a mini-batch size of 32 and $A/B=0.05$. LPIPS is adopted to measure the data reconstruction quality, and a larger LPIPS value indicates better privacy protection against the attack \cite{yue2023gradient}. Overall, we obtain average (over the entire batch) LPIPS values of $0.365$ and $0.446$ for the baseline algorithm (i.e., the combination of the Gaussian mechanism and random sparsification) and the ternary compressor, respectively. Four exemplary images and their average LPIPS values are presented in Fig. \ref{rog}, which validate the effectiveness of the proposed method.

\section{Limitation}
We note that, in this paper, the privacy guarantee $\mu$ is computed for each communication round, and the results in Theorem \ref{Ternary_Batch_vector} asymptotically approximate that of the Gaussian mechanism. While it can be readily extended to accumulate privacy across communication rounds, the privacy amplification due to subsampling has not been accounted for, which will be an interesting and important future direction.

\section{Conclusion}
In this work, we propose a ternary compressor-based algorithm that is differentially private, communication efficient, and Byzantine resilient. Both the privacy guarantees and Byzantine resilience are theoretically quantified, and the convergence is established. It is expected that the proposed methods can find wide applications in areas such as federated learning where privacy, communication efficiency, and robustness are major bottlenecks. Further incorporating error feedback and momentum to reduce compression error and gradient variance remains interesting future works.


\bibliography{Ref-Richeng}
\bibliographystyle{icml2024}

\newpage
\appendix
\onecolumn
\setcounter{Corollary}{0}
\setcounter{Theorem}{1}
\setcounter{Remark}{7}
\setlength{\abovedisplayskip}{1pt}
\setlength{\belowdisplayskip}{1pt}

\section{Additional Experimental Results}\label{suppadditionalresults}

\begin{table*}[h]
\vspace{-0.2in}
\caption{Test Accuracy on MNIST with $A/B = 0.1$ (200 communication rounds)}
\label{table_mnist0_1}
\begin{center}
\begin{sc}
\begin{tabular}{ccccccc}
\toprule
$\mu$ & 0.1 & 0.2 & 0.3 & 0.5 & 1 \\
\midrule
\makecell{Gaussian Noise \&\\  Random Sparsification} & $53.22\pm 1.91\%$ & $69.21\pm 1.23\%$ & $75.08\pm 0.89\%$ & $80.14\pm 1.29\%$ & $87.41\pm 0.31\%$ \\
\textit{TernaryMean} & $56.34\pm 1.99\%$ & $69.73\pm 1.17\%$ & $76.74\pm 1.03\%$ & $82.84\pm 0.84\%$ & $86.99\pm 0.78\%$ \\
\textit{TernaryVote} & $55.57\pm 1.50\%$ & $69.91\pm 1.69\%$ & $76.19\pm 0.83\%$ & $81.38\pm 0.42\%$ & $86.43\pm 0.44\%$ \\
\bottomrule
\end{tabular}
\end{sc}
\end{center}
\vspace{-0.1in}
\end{table*}

\begin{table*}[h]
\vspace{-0.2in}
\caption{Test Accuracy on Fashion-MNIST with $A/B = 0.1$ (200 communication rounds)}
\label{table_fashionmnist0_1}
\begin{center}
\begin{sc}
\begin{tabular}{ccccccc}
\toprule
$\mu$ & 0.1 & 0.2 & 0.3 & 0.5 & 1 \\
\midrule
\makecell{Gaussian Noise \&\\  Random Sparsification} & $50.79\pm 3.48\%$ & $67.43\pm 0.99\%$ & $70.32\pm 1.12\%$ & $73.13\pm 0.47\%$ & $76.24\pm 0.14\%$ \\
\textit{TernaryMean} & $57.53\pm 2.17\%$ & $69.26\pm 0.75\%$ & $72.77\pm 0.97\%$ & $73.38\pm 0.55\%$ & $76.07\pm 0.83\%$ \\
\textit{TernaryVote} & $57.80\pm 3.23\%$ & $68.91\pm 1.16\%$ & $71.50\pm 0.76\%$ & $74.58\pm 0.51\%$ & $77.00\pm 0.27\%$ \\
\bottomrule
\end{tabular}
\end{sc}
\end{center}
\vspace{-0.1in}
\end{table*}

Table. \ref{table_mnist0_1} and Table. \ref{table_fashionmnist0_1} compare \textit{TernaryMean} and \textit{TernaryVote} with the baseline algorithm on MNIST and Fashion-MNIST, respectively, given $A/B= 0.1$. It can be observed that \textit{TernaryMean} and \textit{TernaryVote} outperform the baseline in the high-privacy scenario (i.e., small $\mu$). As $\mu$ increases (i.e., the privacy requirement becomes less stringent), \textit{TernaryMean} and \textit{TernaryVote} do not necessarily outperform the baseline. For instance, when $\mu = 1$, the baseline algorithm achieves a higher test accuracy than \textit{TernaryMean}. We note that the ternary compressor is a combination of the one-bit compressor \cite{jin2020stochastic,xiang2023distributed} and random sparsification. When privacy is less of a concern, the bias introduced by one-bit compression may be larger than that of the Gaussian noise (for the coordinates that are not zeroed out), which leads to performance degradation. In this case, however, the proposed \textit{TernaryMean} and \textit{TernaryVote} still enjoy savings in communication overhead.

\section{Proofs of Theoretical Results}
\subsection{Proof of Theorem \ref{SuppTernary_Batch_scalar}}\label{ProofPrivacyAnalysisforTernaryScalar}
\begin{Theorem}\label{SuppTernary_Batch_scalar}
Assuming that $B > A + c$, the ternary compressor is $f(\alpha)$-DP for the scalar $x_{i}$ with
\begin{equation}
\begin{split}
&f(\alpha) =
\begin{cases}
\hfill 1 - \frac{Ab-(b-2)c}{(A-c)b}\alpha, \hfill \text{for $\alpha \in [0,\frac{A-c}{2B}]$},\\
\hfill 1-\frac{c}{Bb}- \alpha, \hfill \text{for $\alpha \in [\frac{A-c}{2B}, 1-\frac{Ab-(b-2)c}{2Bb}]$},\\
\hfill \frac{(A-c)b}{Ab-(b-2)c}(1 - \alpha), \hfill \text{for $\alpha \in [1-\frac{Ab-(b-2)c}{2Bb},1]$}.\\
\end{cases}
\end{split}
\end{equation}
\end{Theorem}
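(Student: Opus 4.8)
The plan is to cast the privacy guarantee as a hypothesis-testing problem between the two output distributions induced by neighboring datasets, and to compute the worst-case tradeoff function explicitly via the Neyman--Pearson lemma. Since neighboring datasets differ in a single training example, only the private coordinate $x_i$ changes, while the aggregate $y$ of the remaining $b-1$ clipped gradients is a fixed (but unknown) quantity in $[-(b-1)c,(b-1)c]$. The compressor input is therefore $\bar{x}_i=\tfrac{1}{b}(y+x_i)$, and the two hypotheses correspond to two admissible values of $x_i\in[-c,c]$. I would first record the three output probabilities $P(\pm1),P(0)$ and the crucial structural fact that $P(0)=1-A/B=Q(0)$ is invariant across the two hypotheses, so the outcome $0$ carries likelihood ratio exactly $1$ and can only contribute a slope-$(-1)$ segment to the tradeoff curve.

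Next I would fix a candidate neighboring pair, order the three atoms $\{-1,0,1\}$ by their likelihood ratio $dQ/dP$, and assemble the optimal rejection rule by greedily including atoms from the largest ratio downward. Because the support has only three points, the resulting tradeoff function $T(P,Q)$ is piecewise linear with exactly three segments: a steep initial segment governed by the atom whose probability differs most (with slope equal to the ratio of the corresponding $\pm1$ probabilities), the middle slope-$(-1)$ segment coming from the shared atom $0$, and a shallow final segment. I would compute the two breakpoints (at $\alpha$ equal to the $P$-mass of the first atom, and at $1$ minus the $P$-mass of the last atom) and the three slopes in closed form as functions of $u=\tfrac{y+x_i}{b}$ and $v=\tfrac{y+x_i'}{b}$, and symmetrize over the two orderings of the pair (equivalently, intersect $T(P,Q)$ with its inverse) to obtain a genuine symmetric tradeoff function for that pair.

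The heart of the argument is then to identify the worst case over all neighboring datasets, i.e.\ the pointwise infimum of these tradeoff functions over all admissible $(x_i,x_i',y)$. I would argue that the separation is maximized at $x_i=c,\,x_i'=-c$, so that $u-v=2c/b$, and that each segment's slope is monotone in the midpoint $y/b$: writing the initial slope magnitude as $\tfrac{Ab-y+c}{Ab-y-c}$, this is decreasing in $y$ and hence steepest (worst) at $y=(b-1)c$, while by reflection symmetry the shallow tail is worst at $y=-(b-1)c$. Substituting $y=(b-1)c$ (so $u=c,\,v=\tfrac{(b-2)c}{b}$) yields initial slope $-\tfrac{Ab-(b-2)c}{(A-c)b}$, breakpoint $\tfrac{A-c}{2B}$, and middle intercept $1-\tfrac{c}{Bb}$; the reflected tail supplies slope $-\tfrac{(A-c)b}{Ab-(b-2)c}$ and the breakpoint $1-\tfrac{Ab-(b-2)c}{2Bb}$, which is precisely the claimed $f$. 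Finally I would verify that this lower envelope is itself convex (the slopes $-\tfrac{Ab-(b-2)c}{(A-c)b}<-1<-\tfrac{(A-c)b}{Ab-(b-2)c}$ are increasing), is continuous at both breakpoints, that $B>A+c$ guarantees the breakpoints are correctly ordered, and that setting $b=1$ recovers Theorem~\ref{privacyofternarytheorem}.

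I expect the monotonicity-in-$y$ reduction to be the main obstacle: one must show rigorously that the adversary's best choice is the extreme $y=\pm(b-1)c$ and that the pointwise infimum over the continuum of intermediate $y$ (and both hypothesis orderings) is still attained by a single pair, so that the envelope $f$ is an honest tradeoff function rather than merely a lower bound. Handling the two orderings/reflection carefully, and checking that the steep tail from $y=(b-1)c$ and the shallow tail from $y=-(b-1)c$ glue continuously onto the common slope-$(-1)$ middle segment, is where the bookkeeping concentrates.
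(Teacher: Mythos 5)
Your proposal is correct and follows essentially the same route as the paper: the paper likewise computes the three-atom likelihood ratios, invokes a Neyman--Pearson-type lemma from \citet{jin2023breaking} (your greedy atom-ordering construction is exactly that lemma), takes the infimum at $x_i=c$, $x_i'=-c$ for each ordering, and then argues the worst case over $y$ is attained at $y=\pm(b-1)c$ before assembling the piecewise-linear envelope. The only difference is cosmetic --- you re-derive the discrete tradeoff function from first principles where the paper cites it, and the paper is just as terse as you are about the monotonicity-in-$y$ step (``it can be verified that\dots'').
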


The proof of Theorem \ref{SuppTernary_Batch_scalar} utilizes the following lemma from \cite{jin2023breaking}.

\begin{Lemma}[\cite{jin2023breaking}]\label{generaltheorem}
For two neighboring datasets $S$ and $S'$, suppose that the range of the randomized mechanism $\mathcal{R}(\mathcal{M}(S)) \cup \mathcal{R}(\mathcal{M}(S')) = \mathbb{Z}_{\mathcal{M}}^{U} = [\mathcal{Z}^{U}_{L},\dots,\mathcal{Z}^{U}_{R}] \subset \mathbb{Z}$ and $\mathcal{R}(\mathcal{M}(S)) \cap \mathcal{R}(\mathcal{M}(S')) = \mathbb{Z}_{\mathcal{M}}^{I} = [\mathcal{Z}_{L}^{I},\dots,\mathcal{Z}_{R}^{I}] \subset \mathbb{Z}$. Let $X = \mathcal{M}(S)$ and $Y = \mathcal{M}(S')$. Then,

Case \textbf{(1)} If $\mathcal{M}(S) \in [\mathcal{Z}_{L}^{I},\mathcal{Z}_{L}^{I}+1,\dots,\mathcal{Z}^{U}_{R}]$, $\mathcal{M}(S') \in [\mathcal{Z}^{U}_{L},\mathcal{Z}^{U}_{L}+1,\dots,\mathcal{Z}_{R}^{I}]$, and $\frac{P(Y = k)}{P(X = k)}$ is a decreasing function of $k$ for $k \in \mathbb{Z}_{\mathcal{M}}^{I}$, the tradeoff function in Definition \ref{tradeofffunction} is given by
\begin{equation}\label{general1equation}
\begin{split}
&\beta_{\phi}^{+}(\alpha) = \begin{cases}
P(Y \geq k) + \frac{P(Y=k)P(X < k)}{P(X=k)} - \frac{P(Y=k)}{P(X=k)}\alpha,  \hfill \text{if $\alpha \in (P(X < k), P(X \leq k)]$, $k \in [\mathcal{Z}_{L}^{I},\mathcal{Z}_{R}^{I}]$}. \\
0, \hfill \text{if $\alpha \in (P(X < \mathcal{Z}_{R}^{I}+1),1]$.}
\end{cases}
\end{split}
\end{equation}

Case \textbf{(2)} If $\mathcal{M}(S) \in [\mathcal{Z}^{U}_{L},\mathcal{Z}^{U}_{L}+1,\cdots,\mathcal{Z}_{R}^{I}]$, $\mathcal{M}(S') \in [\mathcal{Z}_{L}^{I},\mathcal{Z}_{L}^{I}+1,\cdots,\mathcal{Z}^{U}_{R}]$, and $\frac{P(Y = k)}{P(X = k)}$ is an increasing function of $k$ for $k \in \mathbb{Z}_{\mathcal{M}}^{I}$, the tradeoff function in Definition \ref{tradeofffunction} is given by
\begin{equation}\label{general1equation2}
\begin{split}
&\beta_{\phi}^{-}(\alpha) = \begin{cases}
P(Y \leq k) + \frac{P(Y=k)P(X > k)}{P(X=k)} - \frac{P(Y=k)}{P(X=k)}\alpha,  \hfill \text{if $\alpha \in (P(X > k), P(X \geq k)]$, $k \in [\mathcal{Z}_{L}^{I},\mathcal{Z}_{R}^{I}]$}. \\
0, \hfill \text{if $\alpha \in (P(X > \mathcal{Z}_{L}^{I}-1),1]$.}
\end{cases}
\end{split}
\end{equation}
\end{Lemma}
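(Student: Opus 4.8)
The plan is to reduce the mini-batch privacy question to a scalar likelihood-ratio test on the three-valued output, to read off the exact tradeoff function from Lemma \ref{generaltheorem}, and then to optimize over the unknown batch offset that the mini-batch structure introduces.

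First I would fix the worst-case neighboring configuration. Because neighboring datasets differ only in the private example $s'$, the compressor input is $\bar{x}=(y+x_i)/b$, where $y$ is the aggregate of the remaining clipped gradients (held fixed across the two hypotheses) and $x_i\in[-c,c]$. Since the tradeoff function degrades monotonically in the separation $|x_i-x_i'|$, the extremal pair $\{x_i,x_i'\}=\{c,-c\}$ dominates, so the two hypotheses amount to applying the compressor to the inputs $\bar{x}^{(0)}=(y+c)/b$ and $\bar{x}^{(1)}=(y-c)/b$, which differ by $2c/b$ but are anchored at a location controlled by the nuisance parameter $u:=y/b\in[-(b-1)c/b,(b-1)c/b]$.

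Second I would invoke Lemma \ref{generaltheorem}. The output lies in $\{-1,0,1\}$ with $P(Z=0)=1-A/B$ independent of the input, and the hypothesis $B>A+c$ (together with $A\ge c$ from the compressor's validity) keeps all three probabilities strictly positive, so both hypotheses share the full range $\{-1,0,1\}$ and the interval conditions of the lemma hold trivially. Labeling the larger-input hypothesis as $X$, the ratio $P(Y=k)/P(X=k)$ exceeds $1$ at $k=-1$, equals $1$ at $k=0$, and is below $1$ at $k=1$, hence is decreasing in $k$ and places us in Case (1). Substituting the ternary probabilities into $\beta_\phi^+$ gives, for each fixed $u$, a three-piece tradeoff function: a first segment of slope $-\tfrac{A-u+c/b}{A-u-c/b}$ on $[0,\tfrac{A-u-c/b}{2B}]$, a middle segment $1-\tfrac{c}{Bb}-\alpha$ that is strikingly $u$-free, and a final segment $\tfrac{A+u-c/b}{A+u+c/b}(1-\alpha)$; the opposite labeling lands in Case (2) and, by the sign symmetry of the compressor, returns the inverse function.

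Third, and this is the crux, I would optimize over the hidden offset $u$. The middle slope is pinned at $-1$, but the first-segment slope magnitude $\tfrac{A-u+c/b}{A-u-c/b}$ and the final-segment coefficient $\tfrac{A+u-c/b}{A+u+c/b}$ are both increasing in $u$, so no single $u$ is simultaneously worst at the two ends: the steepest first segment occurs at $u=(b-1)c/b$, yielding slope $-\tfrac{Ab-(b-2)c}{(A-c)b}$ on $[0,\tfrac{A-c}{2B}]$, while the lowest final segment occurs at $u=-(b-1)c/b$, yielding coefficient $\tfrac{(A-c)b}{Ab-(b-2)c}$ on $[1-\tfrac{Ab-(b-2)c}{2Bb},1]$, and these end slopes are exactly reciprocal, the signature of a symmetric tradeoff function. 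I would then stitch the three extremal pieces together and verify continuity at the breakpoints $\tfrac{A-c}{2B}$ and $1-\tfrac{Ab-(b-2)c}{2Bb}$, as well as convexity (the slopes $-\tfrac{Ab-(b-2)c}{(A-c)b},-1,-\tfrac{(A-c)b}{Ab-(b-2)c}$ increase). The main obstacle is the final verification that this stitched $f$ lower-bounds $T(\mathcal{Q}(S),\mathcal{Q}(S'))$ for every admissible $u$ and both labelings: the lower envelope of convex functions need not be convex, so I cannot simply take a pointwise infimum. Instead I would argue via the slope-monotonicity above—on the first-segment region every $\beta_\phi^+$ has a shallower first slope than the $u=(b-1)c/b$ extremal, and wherever an intermediate $u$ places $\alpha$ inside its own steeper first segment, convexity of that $\beta_\phi^+$ forces it to lie above the extended middle line $1-\tfrac{c}{Bb}-\alpha$; the symmetric statement handles the final-segment region. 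This establishes $f\le T(\mathcal{Q}(S),\mathcal{Q}(S'))$ uniformly, which is the $f$-DP claim. Setting $b=1$ recovers Theorem \ref{privacyofternarytheorem} as a consistency check.
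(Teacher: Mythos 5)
Your proposal does not prove the statement you were given. The statement under review is Lemma \ref{generaltheorem} itself --- the exact characterization of the tradeoff function $T(P,Q)$ for integer-valued mechanisms whose likelihood ratio is monotone on the common range (a result this paper imports from \cite{jin2023breaking} without reproving). Your second step explicitly \emph{invokes} Lemma \ref{generaltheorem}, and everything surrounding it --- the reduction to the extremal pair $x_i=c$, $x_i'=-c$, the Case (1)/(2) labeling by the sign of $x_i-x_i'$, the optimization over the nuisance offset with worst cases $y=\pm(b-1)c$, the $b=1$ consistency check --- is a blueprint for Theorem \ref{Ternary_Batch_scalar}, i.e., the downstream result that the paper derives \emph{from} the lemma in its appendix. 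As a proof of the lemma this is circular: the assigned statement is simply assumed. (For what it is worth, as an outline of the proof of Theorem \ref{Ternary_Batch_scalar} your plan tracks the paper's argument closely, and your slope-monotonicity argument for why the minimum over the offset is attained at the endpoints, and why the lower envelope issue is harmless, is more explicit than the paper's ``it can be verified''; but that is a different theorem.)

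What Lemma \ref{generaltheorem} actually requires is a discrete Neyman--Pearson argument, which appears nowhere in your proposal. Concretely, in Case (1): since $P(Y=k)/P(X=k)$ is decreasing on $\mathbb{Z}_{\mathcal{M}}^{I}$ (and is effectively $+\infty$ on the part of $Y$'s range below $\mathcal{Z}_{L}^{I}$ and $0$ on the part of $X$'s range above $\mathcal{Z}_{R}^{I}$), one must show that for a type I budget $\alpha\in(P(X<k),P(X\leq k)]$ the optimal rejection rule rejects deterministically on $\{j<k\}$, always rejects on the $Y$-only outcomes (zero type I cost), and randomizes at $j=k$ with probability $\gamma=\big(\alpha-P(X<k)\big)/P(X=k)$; its type II error is $1-P(Y<k)-\gamma P(Y=k)=P(Y\geq k)+\frac{P(Y=k)P(X<k)}{P(X=k)}-\frac{P(Y=k)}{P(X=k)}\alpha$, which is exactly (\ref{general1equation}), and the Neyman--Pearson optimality direction is needed to certify that no level-$\alpha$ test does better, so that this expression is the infimum in Definition \ref{tradeofffunction} rather than merely an upper bound. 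One must also separately justify the terminal branch: once the rejection region exhausts $Y$'s support, i.e., $\alpha>P(X<\mathcal{Z}_{R}^{I}+1)$, the achievable type II error is identically $0$. Case (2) is the mirror image with thresholds from above. Without this construction, its error computation, and the optimality step, the lemma --- and hence the entire chain you build on it --- has no foundation.
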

Given Lemma \ref{generaltheorem}, we are ready to prove Theorem \ref{SuppTernary_Batch_scalar}.
\begin{proof}
Let $Y = ternary(\frac{1}{b}(y +x'_{i}),A,B)$ and $X = ternary(\frac{1}{b}(y +x_{i}),A,B)$, we have
\begin{equation}
\begin{split}
&\frac{P(Y=-1)}{P(X=-1)} = \frac{A-\frac{1}{b}(y +x'_{i})}{A-\frac{1}{b}(y +x_{i})},   \\
&\frac{P(Y=0)}{P(X=0)} = 1,  \\
&\frac{P(Y=1)}{P(X=1)} = \frac{A+\frac{1}{b}(y +x'_{i})}{A+\frac{1}{b}(y +x_{i})}.
\end{split}
\end{equation}
When $x_{i} > x'_{i}$, we have $\frac{1}{b}(y+x_{i}) > \frac{1}{b}(y+x'_{i})$. It can be observed that $\frac{P(Y=k)}{P(X=k)}$ is a decreasing function of $k$. According to Lemma \ref{generaltheorem}, we have
\begin{equation}
\begin{split}
&\beta_{\phi}^{+}(\alpha) =
\begin{cases}
\hfill 1 - \frac{A-\frac{1}{b}(y+x'_{i})}{A-\frac{1}{b}(y+x_{i})}\alpha, \hfill ~~\text{for $\alpha \in [0,\frac{A-\frac{1}{b}(y+x_{i})}{2B}]$},\\
\hfill 1+\frac{x'_{i}-x_{i}}{2Bb}- \alpha, \hfill ~~\text{for $\alpha \in [\frac{A-\frac{1}{b}(y+x_{i})}{2B}, 1-\frac{A+\frac{1}{b}(y+x_{i}) }{2B}]$},\\
\hfill \frac{A+\frac{1}{b}(y+x'_{i})}{A+\frac{1}{b}(y+x_{i})} - \frac{A+\frac{1}{b}(y+x'_{i})}{A+\frac{1}{b}(y+x_{i})}\alpha, \hfill ~~\text{for $\alpha \in [1-\frac{A+\frac{1}{b}(y+x_{i}) }{2B},1]$}.\\
\end{cases}
\end{split}
\end{equation}

When $x_{i} < x'_{i}$, we have $\frac{1}{b}(y+x_{i}) < \frac{1}{b}(y+x'_{i})$. It can be observed that $\frac{P(Y=k)}{P(X=k)}$ is an increasing function of $k$. According to Lemma \ref{generaltheorem}, we have
\begin{equation}
\begin{split}
&\beta_{\phi}^{-}(\alpha) =
\begin{cases}
\hfill 1 - \frac{A+\frac{1}{b}(y+x'_{i})}{A+\frac{1}{b}(y+x_{i})}\alpha, \hfill ~~\text{for $\alpha \in [0,\frac{A+\frac{1}{b}(y+x_{i})}{2B}]$},\\
\hfill 1-\frac{x'_{i}-x_{i}}{2Bb}- \alpha, \hfill ~~\text{for $\alpha \in [\frac{A+\frac{1}{b}(y+x_{i})}{2B}, 1-\frac{A-\frac{1}{b}(y+x_{i}) }{2B}]$},\\
\hfill \frac{A-\frac{1}{b}(y+x'_{i})}{A-\frac{1}{b}(y+x_{i})} - \frac{A-\frac{1}{b}(y+x'_{i})}{A-\frac{1}{b}(y+x_{i})}\alpha, \hfill ~~\text{for $\alpha \in [1-\frac{A-\frac{1}{b}(y+x_{i}) }{2B},1]$}.\\
\end{cases}
\end{split}
\end{equation}

For any given $y$, the infimum of $\beta_{\phi}^{+}(\alpha)$ is attained when $x_{i} = c$ and $x'_{i}=-c$, while the infimum of $\beta_{\phi}^{-}(\alpha)$ is attained when $x_{i} = -c$ and $x'_{i}=c$. As a result, we have
\begin{equation}
\begin{split}
&\beta_{\phi,\text{inf}}^{+}(\alpha) =
\begin{cases}
\hfill 1 - \frac{A-\frac{1}{b}(y-c)}{A-\frac{1}{b}(y+c)}\alpha, \hfill ~~\text{for $\alpha \in [0,\frac{A-\frac{1}{b}(y+c)}{2B}]$},\\
\hfill 1-\frac{c}{Bb}- \alpha, \hfill ~~\text{for $\alpha \in [\frac{A-\frac{1}{b}(y+c)}{2B}, 1-\frac{A+\frac{1}{b}(y+c) }{2B}]$},\\
\hfill \frac{A+\frac{1}{b}(y-c)}{A+\frac{1}{b}(y+c)} - \frac{A+\frac{1}{b}(y-c)}{A+\frac{1}{b}(y+c)}\alpha, \hfill ~~\text{for $\alpha \in [1-\frac{A+\frac{1}{b}(y+c) }{2B},1]$},\\
\end{cases}
\end{split}
\end{equation}

and

\begin{equation}
\begin{split}
&\beta_{\phi,\text{inf}}^{-}(\alpha) =
\begin{cases}
\hfill 1 - \frac{A+\frac{1}{b}(y+c)}{A+\frac{1}{b}(y-c)}\alpha, \hfill ~~\text{for $\alpha \in [0,\frac{A+\frac{1}{b}(y-c)}{2B}]$},\\
\hfill 1-\frac{c}{Bb}- \alpha, \hfill ~~\text{for $\alpha \in [\frac{A+\frac{1}{b}(y-c)}{2B}, 1-\frac{A-\frac{1}{b}(y-c) }{2B}]$},\\
\hfill \frac{A-\frac{1}{b}(y+c)}{A-\frac{1}{b}(y-c)} - \frac{A-\frac{1}{b}(y+c)}{A-\frac{1}{b}(y-c)}\alpha, \hfill ~~\text{for $\alpha \in [1-\frac{A-\frac{1}{b}(y-c)}{2B},1]$}.\\
\end{cases}
\end{split}
\end{equation}

Assume that $B > A+c$, we have $[\frac{A-\frac{1}{b}(y+c)}{2B}, 1-\frac{A+\frac{1}{b}(y+c) }{2B}] \cap [\frac{A+\frac{1}{b}(y-c)}{2B}, 1-\frac{A-\frac{1}{b}(y-c) }{2B}] \neq \emptyset$. In this case, when $y > 0$,

\begin{equation}
\begin{split}
&\min\{\beta_{\phi,\text{inf}}^{+}(\alpha),\beta_{\phi,\text{inf}}^{-}(\alpha)|y>0\} =
\begin{cases}
\hfill 1 - \frac{A-\frac{1}{b}(y-c)}{A-\frac{1}{b}(y+c)}\alpha, \hfill ~~\text{for $\alpha \in [0,\frac{A-\frac{1}{b}(y+c)}{2B}]$},\\
\hfill 1-\frac{c}{Bb}- \alpha, \hfill ~~\text{for $\alpha \in [\frac{A-\frac{1}{b}(y+c)}{2B}, 1-\frac{A-\frac{1}{b}(y-c) }{2B}]$},\\
\hfill \frac{A-\frac{1}{b}(y+c)}{A-\frac{1}{b}(y-c)} - \frac{A-\frac{1}{b}(y+c)}{A-\frac{1}{b}(y-c)}\alpha, \hfill ~~\text{for $\alpha \in [1-\frac{A-\frac{1}{b}(y-c) }{2B},1]$}.\\
\end{cases}
\end{split}
\end{equation}

When $y < 0$,

\begin{equation}
\begin{split}
&\min\{\beta_{\phi,\text{inf}}^{+}(\alpha),\beta_{\phi,\text{inf}}^{-}(\alpha)|y<0\} =
\begin{cases}
\hfill 1 - \frac{A+\frac{1}{b}(y+c)}{A+\frac{1}{b}(y-c)}\alpha, \hfill ~~\text{for $\alpha \in [0,\frac{A+\frac{1}{b}(y-c)}{2B}]$},\\
\hfill 1-\frac{c}{Bb}- \alpha, \hfill ~~\text{for $\alpha \in [\frac{A+\frac{1}{b}(y-c)}{2B}, 1-\frac{A+\frac{1}{b}(y+c) }{2B}]$},\\
\hfill \frac{A+\frac{1}{b}(y-c)}{A+\frac{1}{b}(y+c)} - \frac{A+\frac{1}{b}(y-c)}{A+\frac{1}{b}(y+c)}\alpha, \hfill ~~\text{for $\alpha \in [1-\frac{A+\frac{1}{b}(y+c) }{2B},1]$}.\\
\end{cases}
\end{split}
\end{equation}

It can be verified that $\min\{\beta_{\phi,\text{inf}}^{+}(\alpha),\beta_{\phi,\text{inf}}^{-}(\alpha)|y>0\}$ and $\min\{\beta_{\phi,\text{inf}}^{+}(\alpha),\beta_{\phi,\text{inf}}^{-}(\alpha)|y<0\}$ are minimized when $y = (b-1)c$ and $y=-(b-1)c$, respectively. As a result,

\begin{equation}
\begin{split}
&f(\alpha) =
\begin{cases}
\hfill 1 - \frac{A-\frac{(b-2)c}{b}}{A-c}\alpha, \hfill ~~\text{for $\alpha \in [0,\frac{A-c}{2B}]$},\\
\hfill 1-\frac{c}{Bb}- \alpha, \hfill ~~\text{for $\alpha \in [\frac{A-c}{2B}, 1-\frac{A-\frac{(b-2)c}{b}}{2B}]$},\\
\hfill \frac{A-c}{A-\frac{(b-2)c}{b}} - \frac{A-c}{A-\frac{(b-2)c}{b}}\alpha, \hfill ~~\text{for $\alpha \in [1-\frac{A-\frac{(b-2)c}{b}}{2B},1]$},\\
\end{cases}
\end{split}
\end{equation}
which completes the proof.
\end{proof}

\subsection{Proof of Theorem \ref{SuppTernary_Batch_vector}}\label{PrivacyAnalysisforTernaryVector}
\begin{Theorem}\label{SuppTernary_Batch_vector}
Assuming that $B > A + c$, the ternary compressor is $f(\alpha)$-DP for the vector $\boldsymbol{x}_{i}$ with
\begin{equation}
\begin{split}
G_{\mu}(\alpha+\gamma)-\gamma \leq f(\alpha) \leq G_{\mu}(\alpha-\gamma)+\gamma,
\end{split}
\end{equation}
in which
\begin{equation}
\mu = \frac{2\sqrt{d}c}{\sqrt{(A-c)Bb^2+Bbc-c^2}},
\end{equation}
\begin{equation}
\begin{split}
\gamma = \frac{0.56\left[\frac{A-c}{2B}\left|1+\frac{c}{Bb}\right|^3+\frac{Ab-(b-2)c}{2Bb}\left|1-\frac{c}{Bb}\right|^3\right]}{(\frac{(A-c)b+c}{Bb}-\frac{c^2}{B^2b^2})^{3/2}d^{1/2}}+\frac{0.56\left[\left(1-\frac{(A-c)b+c}{Bb}\right)\left|\frac{c}{Bb}\right|^{3}\right]}{(\frac{(A-c)b+c}{Bb}-\frac{c^2}{B^2b^2})^{3/2}d^{1/2}}.
\end{split}
\end{equation}
\end{Theorem}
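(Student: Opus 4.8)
The plan is to reduce the vector guarantee to a $d$-fold tensor product of the scalar tradeoff function from Theorem~\ref{SuppTernary_Batch_scalar} and then invoke the Berry--Esseen central limit theorem for $f$-DP of \cite{dong2021gaussian}. First I would observe that the compressor in Algorithm~\ref{TernaryMechanismVector} applies $ternary(\cdot,A,B)$ to each coordinate independently, and that the relevant constraint set, $\boldsymbol{x}_{i,j}\in[-c,c]$ together with $\boldsymbol{y}\in[-(b-1)c,(b-1)c]^d$, is a per-coordinate box (this is exactly where $\ell_\infty$/magnitude clipping matters rather than $\ell_2$ clipping). Consequently the coordinates decouple: for any neighboring pair the joint tradeoff function factorizes as a tensor product, and the infimum over neighbors can be taken coordinate-wise, since a single differing example $s'$ may realize the per-coordinate worst case in every coordinate simultaneously. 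Hence the worst-case vector tradeoff function equals $f^{\otimes d}$, where $f$ is precisely the scalar worst case of Theorem~\ref{SuppTernary_Batch_scalar}. I would flag this factorization as the main structural point to justify carefully.

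Next I would identify the explicit three-point distributions realizing $f$. Reading off the three slopes of the piecewise-linear $f$, the worst case corresponds to distributions $P,Q$ on $\{-1,0,1\}$ whose likelihood ratios are $r,1,1/r$ with $r=\frac{Ab-(b-2)c}{(A-c)b}>1$; concretely $P=(\frac{A-c}{2B},\,1-\frac{(A-c)b+c}{Bb},\,\frac{Ab-(b-2)c}{2Bb})$ and $Q$ is the same triple with the first and last entries swapped. The log-likelihood ratio $L=\log(dP/dQ)$ then takes the values $-\log r,\,0,\,\log r$ with the corresponding $P$-probabilities. I would verify $r>1$ (since the numerator exceeds the denominator by $2c$), so $\log r>0$.

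Then I would apply the CLT to $f^{\otimes d}$, obtaining the sandwich $G_\mu(\alpha+\gamma)-\gamma\le f^{\otimes d}\le G_\mu(\alpha-\gamma)+\gamma$, where $\mu$ matches the standardized first moment, $\mu=\frac{2d\,\mathrm{kl}}{\sqrt{d\,\mathrm{Var}_P(L)}}$, and $\gamma$ is the Berry--Esseen remainder $\gamma=\frac{0.56\,d\,\mathbb{E}_P|L-\mathrm{kl}|^3}{(d\,\mathrm{Var}_P(L))^{3/2}}$. The remaining work is to compute the three moments: $\mathrm{kl}=\mathbb{E}_P[L]=\frac{c}{Bb}\log r$, $\mathrm{Var}_P(L)=(\log r)^2\big(\frac{(A-c)b+c}{Bb}-\frac{c^2}{B^2b^2}\big)$, and $\mathbb{E}_P|L-\mathrm{kl}|^3=|\log r|^3[\,\cdots]$ with the bracket equal to the three-term sum appearing in the statement. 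Substituting, every power of $\log r$ cancels between numerator and denominator, leaving exactly $\mu=\frac{2\sqrt{d}c}{\sqrt{(A-c)Bb^2+Bbc-c^2}}$ and the stated $\gamma$.

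I expect the main obstacle to be two-fold: (i) rigorously justifying that the coordinate-wise infimum realizes the overall worst-case neighbor, i.e. that $f^{\otimes d}$ is genuinely the correct vector tradeoff function, which hinges on the box structure of the clipping set and the monotonicity of tensor products of tradeoff functions; and (ii) invoking the Berry--Esseen $f$-DP CLT under its regularity hypotheses while handling the slight asymmetry of $P,Q$ (the tensor product need not be symmetric, whereas $G_\mu$ is), which is exactly what the two-sided $\pm\gamma$ slack absorbs. Once $P,Q$ are pinned down, the moment computations are mechanical, and the clean cancellation of $\log r$ serves as a useful consistency check on the final closed forms.
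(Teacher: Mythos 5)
Your proposal is correct and follows essentially the same route as the paper: tensorize the scalar tradeoff function of Theorem~\ref{SuppTernary_Batch_scalar} over the $d$ coordinates and apply the Berry--Esseen CLT for $f$-DP of \cite{dong2021gaussian}, with your moments $\mathrm{kl}$, $\mathrm{Var}_P(L)$, and $\mathbb{E}_P|L-\mathrm{kl}|^3$ of the log-likelihood ratio under the worst-case three-point pair $P,Q$ coinciding exactly with the paper's $\text{kl}(f)$, $\kappa_2(f)-\text{kl}(f)^2$, and $\bar{\kappa}_3(f)$ computed as integrals of $\log|f'(x)|$. The tensorization step you flag as needing care is likewise taken for granted in the paper's proof, so your treatment is, if anything, slightly more explicit about the underlying structure.
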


Before proving Theorem \ref{SuppTernary_Batch_vector}, we first define the following functions as in \cite{dong2021gaussian},
\begin{equation}
    \text{kl}(f) = -\int_{0}^{1}\log|f'(x)|dx,
\end{equation}
\begin{equation}
    \kappa_{2}(f) = \int_{0}^{1}\log^{2}|f'(x)|dx,
\end{equation}
\begin{equation}
    \kappa_{3}(f) = \int_{0}^{1}|\log|f'(x)||^3dx,
\end{equation}
\begin{equation}
    \bar{\kappa}_{3}(f) = \int_{0}^{1}|\log|f'(x)|+\text{kl}(f)|^3dx.
\end{equation}

The central limit theorem for $f$-DP is formally introduced as follows.

\begin{Lemma}[\cite{dong2021gaussian}]\label{cltfdp}
Let $f_{1},...,f_{n}$ be symmetric trade-off functions such that $\kappa_{3}(f_{i}) < \infty$ for all $1 \leq i \leq d$. Denote
\begin{equation}\nonumber
\mu = \frac{2||\text{kl}||_{1}}{\sqrt{||\kappa_{2}||_{1}-||\text{kl}||_{2}^{2}}}, \text{and~~} \gamma = \frac{0.56||\bar{\kappa}_{3}||_{1}}{(||\kappa_{2}||_{1}-||\text{kl}||_{2}^{2})^{3/2}},
\end{equation}
and assume $\gamma < \frac{1}{2}$. Then, for all $\alpha \in [\gamma, 1-\gamma]$, we have
\begin{equation}
    G_{\mu}(\alpha+\gamma)-\gamma \leq f_{1}\otimes f_{2}\otimes\cdots \otimes f_{d}(\alpha) \leq G_{\mu}(\alpha-\gamma)+\gamma.
\end{equation}
\end{Lemma}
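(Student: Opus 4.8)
The plan is to recognize this as a Berry--Esseen (quantitative central limit) statement translated into the language of trade-off functions: the tensor product of trade-off functions corresponds to a sum of independent log-likelihood ratios, and its deviation from a Gaussian trade-off function is exactly the Berry--Esseen error.

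First I would recall the hypothesis-testing representation underlying $f$-DP. By the Neyman--Pearson lemma each symmetric trade-off function $f_i$ is realized as $T(P_i,Q_i)$ for a pair of distributions whose optimal rejection rules are likelihood-ratio tests. Parametrizing the optimal trade-off curve by the likelihood-ratio threshold shows that the slope $f_i'$ equals the negative likelihood ratio at the corresponding level, so that traversing the curve with respect to the type-I error $\alpha$ (equivalently, integrating under $P_i$) identifies $\log|f_i'|$ in distribution with the log-likelihood ratio $L_i = \log(dQ_i/dP_i)$. Under this identification the three functionals become moments of $L_i$ under $P_i$: $\text{kl}(f_i) = -\mathbb{E}_{P_i}[L_i]$, $\kappa_2(f_i) = \mathbb{E}_{P_i}[L_i^2]$, and $\bar{\kappa}_3(f_i) = \mathbb{E}_{P_i}[\,|L_i-\mathbb{E}_{P_i}[L_i]|^3]$. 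Hence $||\kappa_2||_1 - ||\text{kl}||_2^2 = \sum_i \text{Var}_{P_i}(L_i)$ is the aggregate variance and $||\bar{\kappa}_3||_1 = \sum_i \mathbb{E}_{P_i}|L_i - \mathbb{E}_{P_i}[L_i]|^3$ the aggregate centered third absolute moment.

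Next I would tensorize. The product $f_1 \otimes \cdots \otimes f_n$ is the trade-off function $T(\prod_i P_i, \prod_i Q_i)$, whose log-likelihood ratio is the independent sum $S_n = \sum_{i=1}^n L_i$. Applying Neyman--Pearson to a threshold $t$ on $S_n$, the trade-off curve is described entirely by the CDFs $F_P, F_Q$ of $S_n$ under the two product measures: thresholding gives type-I error $\alpha(t) = 1 - F_P(t)$ and type-II error $\beta(t) = F_Q(t)$. The symmetry assumed for each $f_i$ forces $L_i$ under $Q_i$ to equal $-L_i$ under $P_i$ in distribution, so $S_n$ has a common variance $v = ||\kappa_2||_1 - ||\text{kl}||_2^2$ under both measures and means separated by $2||\text{kl}||_1$; in standardized units this separation is exactly $\mu = 2||\text{kl}||_1/\sqrt{v}$. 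Eliminating $t$ between $\alpha(t)$ and $\beta(t)$ with Gaussian CDFs in place of $F_P, F_Q$ recovers $\beta = \Phi(\Phi^{-1}(1-\alpha) - \mu) = G_\mu(\alpha)$, confirming that $G_\mu$ is the limiting object and explaining the factor of $2$.

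Finally I would invoke the Berry--Esseen theorem. After standardization, the CDF of $S_n$ under either product measure lies uniformly within $\gamma = 0.56\,||\bar{\kappa}_3||_1/v^{3/2}$ of the standard normal CDF $\Phi$. Substituting these two approximations into $\alpha(t)$ and $\beta(t)$ and eliminating $t$ propagates the two uniform $\gamma$-errors into a simultaneous horizontal shift $\alpha \mapsto \alpha \pm \gamma$ and vertical shift $\mp\gamma$ of the limiting curve $G_\mu$, which is precisely the sandwich $G_\mu(\alpha+\gamma)-\gamma \le f_1\otimes\cdots\otimes f_n(\alpha) \le G_\mu(\alpha-\gamma)+\gamma$; the restriction $\alpha \in [\gamma, 1-\gamma]$ guarantees the shifted arguments remain in $[0,1]$. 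The main obstacle I anticipate is this last step: Berry--Esseen controls the \emph{vertical} gap between the CDF of $S_n$ and $\Phi$, whereas the trade-off function is obtained by composing $F_P$ with the (decreasing) generalized inverse of $F_Q$, so I must show that the $\gamma$-error survives this composition as the stated horizontal-plus-vertical perturbation rather than being amplified. Controlling both error directions with the same $\mu$ relies essentially on the symmetry hypothesis equalizing the two variances, and making that propagation rigorous --- rather than the moment bookkeeping --- is the delicate part.
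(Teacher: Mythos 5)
One point of order first: this paper contains no proof of this lemma at all. It is imported verbatim (up to an $n$-versus-$d$ indexing typo in the restatement) from \cite{dong2021gaussian}, and the paper only ever \emph{uses} it, by computing $\mathrm{kl}$, $\kappa_2$, $\kappa_3$, $\bar{\kappa}_3$ for the ternary tradeoff function in the proof of the vector-case privacy theorem. So the comparison must be with the proof in that reference, and your proposal reconstructs that proof essentially exactly. Your moment identifications are the right ones: parametrizing the optimal curve by the type-I level, $\log|f_i'(U)|$ with $U\sim\mathrm{Unif}[0,1]$ has the law of the log-likelihood ratio $L_i$ under $P_i$ (sanity check on $f=G_\mu$: there $\log|f'|$ has law $\mu Z-\mu^2/2$, $Z\sim N(0,1)$, giving $\mathrm{kl}=\mu^2/2$ and variance $\kappa_2-\mathrm{kl}^2=\mu^2$, so the formula $2\|\mathrm{kl}\|_1/\sqrt{v}$ returns $\mu$, confirming your factor of $2$); symmetry of each $f_i$ gives $L_i$ under $Q_i$ equal in law to $-L_i$ under $P_i$, hence the common variance $v=\|\kappa_2\|_1-\|\mathrm{kl}\|_2^2$ and mean gap $2\|\mathrm{kl}\|_1$; and the constant $0.56$ is Shevtsova's Berry--Esseen constant for independent non-identically distributed summands, which is exactly why the \emph{centered} third absolute moment $\bar{\kappa}_3$ appears in $\gamma$ while $\kappa_3<\infty$ is only an integrability hypothesis. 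The one step you flag as delicate is indeed where all the care goes in \cite{dong2021gaussian}, and your sketch resolves it correctly: for the upper bound, choose the threshold $t$ whose standardized level satisfies $\Phi(-s)=\alpha-\gamma$, so that the true type-I error is at most $(\alpha-\gamma)+\gamma=\alpha$ while the type-II error is at most $\Phi(s-\mu)+\gamma=G_\mu(\alpha-\gamma)+\gamma$; for the lower bound, Neyman--Pearson reduces any test to a (randomized) likelihood-ratio threshold, the randomized points are convex combinations of adjacent threshold points, and these stay above $G_\mu(\alpha+\gamma)-\gamma$ because that curve is convex and decreasing---this also disposes of the atoms of $S_n$ that your ``generalized inverse'' worry is really about. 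With that last step written out (and the restriction $\alpha\in[\gamma,1-\gamma]$ keeping the shifted arguments in $[0,1]$, as you note), your outline is a complete and faithful proof; there is no gap beyond the rigor you already identified yourself.
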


Given Lemma \ref{cltfdp}, we are ready to prove Theorem \ref{SuppTernary_Batch_vector}.

\begin{proof}
Given $f_{i}(\alpha)$ in (\ref{fternarymechanism}), we have
\begin{equation}
\begin{split}
\text{kl}(f) &= -\left[\frac{A-c}{2B}\log\left(\frac{A-c+\frac{2c}{b}}{A-c}\right) + \frac{A-\frac{b-2}{b}c}{2B}\log\left(\frac{A-c}{A-c+\frac{2c}{b}}\right)\right]\\
&=\left[\frac{A-\frac{b-2}{b}c}{2B} - \frac{A-c}{2B}\right]\log\left(\frac{A-c+\frac{2c}{b}}{A-c}\right)\\
&=\frac{c}{Bb}\log\left(\frac{A-c+\frac{2c}{b}}{A-c}\right),
\end{split}
\end{equation}
\begin{equation}
\begin{split}
\kappa_{2}(f) &= \left[\frac{A-c}{2B}\log^2\left(\frac{A-c+\frac{2c}{b}}{A-c}\right) + \frac{A-\frac{b-2}{b}c}{2B}\log^2\left(\frac{A-c}{A-c+\frac{2c}{b}}\right)\right] =\frac{A-c+\frac{c}{b}}{B}\log^2\left(\frac{A-c+\frac{2c}{b}}{A-c}\right),
\end{split}
\end{equation}
\begin{equation}
\begin{split}
\kappa_{3}(f) &= \left[\frac{A-c}{2B}\left|\log\left(\frac{A-c+\frac{2c}{b}}{A-c}\right)\right|^3 + \frac{A-\frac{b-2}{b}c}{2B}\left|\log\left(\frac{A-c}{A-c+\frac{2c}{b}}\right)\right|^3\right] =\frac{A-c+\frac{c}{b}}{B}\left|\log\left(\frac{A-c+\frac{2c}{b}}{A-c}\right)\right|^3,
\end{split}
\end{equation}
\begin{equation}
\begin{split}
\bar{\kappa}_{3}(f) = \left[\frac{A-c}{2B}\left|1+\frac{c}{Bb}\right|^3+\frac{A-\frac{b-2}{b}c}{2B}\left|1-\frac{c}{Bb}\right|^3+\left(1-\frac{A-c+\frac{c}{b}}{B}\right)\left|\frac{c}{Bb}\right|^{3}\right]\left|\log\left(\frac{A-c+\frac{2c}{b}}{A-c}\right)\right|^3 .
\end{split}
\end{equation}

The corresponding $\mu$ and $\gamma$ are given as follows
\begin{equation}
\mu = \frac{2d\frac{c}{Bb}}{\sqrt{\frac{A-c+\frac{c}{b}}{B}d-\frac{c^2}{B^2b^2}d}} = \frac{2\sqrt{d}c}{\sqrt{(A-c+\frac{c}{b})Bb^2-c^2}},
\end{equation}

\begin{equation}
\gamma = \frac{0.56\left[\frac{A-c}{2B}\left|1+\frac{c}{Bb}\right|^3+\frac{A-\frac{b-2}{b}c}{2B}\left|1-\frac{c}{Bb}\right|^3+\left(1-\frac{A-c+\frac{c}{b}}{B}\right)\left|\frac{c}{Bb}\right|^{3}\right]}{(\frac{A-c+\frac{c}{b}}{B}-\frac{c^2}{B^2b^2})^{3/2}d^{1/2}},
\end{equation}
which completes the proof.
\end{proof}

\subsection{Proof of Theorem \ref{Suppconvergerate_schemeI}}
\begin{Theorem}[\textbf{Convergence of \textit{TernaryMean}}]\label{Suppconvergerate_schemeI}
Suppose Assumptions \ref{A1}-\ref{A4} are satisfied, then by running Algorithm \ref{DPSGDAlgorithm} with \textit{TernaryMean} for $T$ iterations, we have
\begin{equation}\label{SuppConvergenceEquation}
\begin{split}
\frac{1}{T}\sum_{t=1}^{T}||\nabla F(\boldsymbol{w}^{(t)})||_{2}^{2} \leq \frac{F(\boldsymbol{w}^{(0)}) - F^{*}}{T\left(\frac{\eta}{B} - \frac{L\eta^2}{2B^{2}}\right)} + \frac{L\eta^2}{2B^{2}\left(\frac{\eta}{B} - \frac{L\eta^2}{2B^{2}}\right)}\left[\frac{ABd}{M} + \frac{||\bar{\boldsymbol{\sigma}}||_{2}^{2}}{M}\right].
\end{split}
\end{equation}
\end{Theorem}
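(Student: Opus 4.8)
The plan is to run the standard descent-lemma argument for a smooth nonconvex objective, where essentially all of the problem-specific work is an exact characterization of the first two moments of the ternary compressor. First I would record two elementary facts about $ternary(x,A,B)$ from Definition \ref{TernaryCompressor}: averaging over the compression randomness gives $\mathbb{E}[ternary(x,A,B)] = \frac{A+x}{2B}-\frac{A-x}{2B} = \frac{x}{B}$, so the compressor is unbiased up to the fixed scalar $1/B$; and its second moment is $\mathbb{E}[ternary(x,A,B)^2] = \frac{A+x}{2B}+\frac{A-x}{2B} = \frac{A}{B}$, independent of $x$. Applying these coordinatewise, using Assumption \ref{A4} so that clipping is inactive and Assumption \ref{A3} so that $\mathbb{E}[\bar{\boldsymbol{g}}_i^{(t)}]=\nabla f_i(\boldsymbol{w}^{(t)})$, and recalling that $\mathcal{N}_t=\mathcal{M}$ with $|\mathcal{N}_t|=M$ under \textit{TernaryMean}, I obtain the crucial identity $\mathbb{E}[\hat{\boldsymbol{g}}^{(t)}] = \frac{1}{MB}\sum_{i\in\mathcal{M}}\nabla f_i(\boldsymbol{w}^{(t)}) = \frac{1}{B}\nabla F(\boldsymbol{w}^{(t)})$.

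The second step is the descent inequality. Applying smoothness (Assumption \ref{A2}) to $\boldsymbol{w}^{(t+1)} = \boldsymbol{w}^{(t)} - \eta\hat{\boldsymbol{g}}^{(t)}$ yields
\[
F(\boldsymbol{w}^{(t+1)}) \leq F(\boldsymbol{w}^{(t)}) - \eta\langle\nabla F(\boldsymbol{w}^{(t)}),\hat{\boldsymbol{g}}^{(t)}\rangle + \frac{L\eta^2}{2}\|\hat{\boldsymbol{g}}^{(t)}\|_2^2 .
\]
Taking the conditional expectation and inserting the unbiasedness identity converts the inner-product term into $\frac{\eta}{B}\|\nabla F(\boldsymbol{w}^{(t)})\|_2^2$; this is precisely where the coefficient $\frac{\eta}{B}$, and ultimately the effective step factor $\frac{\eta}{B}-\frac{L\eta^2}{2B^2}$, originates once the gradient piece of the second-moment term is absorbed.

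The main obstacle is bounding $\mathbb{E}[\|\hat{\boldsymbol{g}}^{(t)}\|_2^2]$, which requires disentangling the two independent sources of randomness — mini-batch sampling and the stochastic compressor — while exploiting independence across the $M$ workers. I would decompose, coordinatewise, $\hat{\boldsymbol{g}}_j^{(t)} - \frac{1}{B}\nabla F(\boldsymbol{w}^{(t)})_j$ into a sampling-noise part $\frac{1}{MB}\sum_{i}(\bar{\boldsymbol{g}}_{i,j}^{(t)} - \nabla f_i(\boldsymbol{w}^{(t)})_j)$ and a compression-noise part $\frac{1}{M}\sum_{i}(\boldsymbol{Z}_{i,j}^{(t)} - \frac{1}{B}\bar{\boldsymbol{g}}_{i,j}^{(t)})$. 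The cross term vanishes because the compression noise has zero conditional mean given the mini-batch gradients, so only the two squared parts survive. Bounding the sampling part via Assumption \ref{A3} contributes $\frac{\|\bar{\boldsymbol{\sigma}}\|_2^2}{MB^2}$, and bounding the conditional compression variance by $\frac{A}{B}$ (discarding the favorable $-\bar{\boldsymbol{g}}_{i,j}^2/B^2$ term) contributes $\frac{Ad}{MB}$, giving
\[
\mathbb{E}[\|\hat{\boldsymbol{g}}^{(t)}\|_2^2] \leq \frac{1}{B^2}\|\nabla F(\boldsymbol{w}^{(t)})\|_2^2 + \frac{Ad}{MB} + \frac{\|\bar{\boldsymbol{\sigma}}\|_2^2}{MB^2}.
\]
The division by $M$ rather than a constant is the payoff of averaging independent per-worker noise, and getting the two noise terms in a clean, interpretable form is the part that needs the most care.

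Finally I would substitute this bound into the descent inequality, collect the gradient terms to isolate the factor $\frac{\eta}{B}-\frac{L\eta^2}{2B^2}$, and telescope over the $T$ rounds, using the lower bound $F\geq F^*$ (Assumption \ref{A1}) to cap the telescoped objective gap by $F(\boldsymbol{w}^{(0)})-F^*$. Dividing by $T\big(\frac{\eta}{B}-\frac{L\eta^2}{2B^2}\big)$ and rewriting $\frac{L\eta^2}{2}\cdot\frac{Ad}{MB} = \frac{L\eta^2}{2B^2}\cdot\frac{ABd}{M}$ together with the analogous rewriting of the $\bar{\boldsymbol{\sigma}}$ term recovers the stated bracket $\big[\frac{ABd}{M}+\frac{\|\bar{\boldsymbol{\sigma}}\|_2^2}{M}\big]$, which completes the proof.
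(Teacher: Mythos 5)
Your proposal is correct and follows essentially the same route as the paper's proof: a descent-lemma argument combined with the unbiasedness of the (rescaled) ternary compressor and a two-stage orthogonal decomposition of the second moment into compression variance (bounded via the per-coordinate second moment $A/B$) and mini-batch sampling variance (bounded via Assumption \ref{A3}), yielding exactly the bracket $\frac{ABd}{M}+\frac{\|\bar{\boldsymbol{\sigma}}\|_2^2}{M}$. The only difference is cosmetic — you phrase the decomposition as a sum of two zero-mean noises with a vanishing cross term, while the paper applies the bias--variance identity twice in nested form.
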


The proof of Theorem \ref{Suppconvergerate_schemeI} follows the well-known strategy of relating the norm of the gradient to the expected improvement of the global objective in a single iteration. Then accumulating the improvement over the iterations yields the convergence rate of the algorithm.

\begin{proof}
According to Assumption \ref{A2}, we have
\begin{equation}\label{suppprot1}
\begin{split}
&F(\boldsymbol{w}^{(t+1)}) - F(\boldsymbol{w}^{(t)}) \\
&\leq \langle\nabla F(\boldsymbol{w}^{(t)}), \boldsymbol{w}^{(t+1)}-\boldsymbol{w}^{(t)}\rangle + \frac{L}{2}||\boldsymbol{w}^{(t+1)}-\boldsymbol{w}^{(t)}||_{2}^{2} \\
& =-\eta \left\langle\nabla F(\boldsymbol{w}^{(t)}), \frac{1}{M}\sum_{m=1}^{M}ternary(\boldsymbol{g}_{m}^{(t)},A,B)\right\rangle +\frac{L\eta^{2}}{2}\bigg|\bigg|\frac{1}{M}\sum_{m=1}^{M}ternary(\boldsymbol{g}_{m}^{(t)},A,B)\bigg|\bigg|^2. \\
\end{split}
\end{equation}

Noticing that $\mathbb{E}[Bternary(\boldsymbol{g}_{m}^{(t)},A,B)] = \boldsymbol{g}_{m}^{(t)}$, we have
\begin{equation}
\begin{split}
&\mathbb{E}\left[\bigg|\bigg|\frac{1}{M}\sum_{m=1}^{M}Bternary(\boldsymbol{g}_{m}^{(t)},A,B)\bigg|\bigg|^2\right] \\
&= \mathbb{E}\left[\bigg|\bigg|\frac{1}{M}\sum_{m=1}^{M}Bternary(\boldsymbol{g}_{m}^{(t)},A,B) - \frac{1}{M}\sum_{m=1}^{M}\boldsymbol{g}_{m}^{(t)}\bigg|\bigg|^2\right] + \mathbb{E}\left[\bigg|\bigg|\frac{1}{M}\sum_{m=1}^{M}\boldsymbol{g}_{m}^{(t)}\bigg|\bigg|^2\right]\\
&= \frac{1}{M^{2}}\sum_{m=1}^{M}\mathbb{E}\left[ABd - ||\boldsymbol{g}_{m}^{(t)}||^{2}\right] + \mathbb{E}\left[\bigg|\bigg|\frac{1}{M}\sum_{m=1}^{M}\boldsymbol{g}_{m}^{(t)}\bigg|\bigg|^2\right] \\
&\leq \frac{ABd}{M} +  \mathbb{E}\left[\bigg|\bigg|\frac{1}{M}\sum_{m=1}^{M}\boldsymbol{g}_{m}^{(t)} - \nabla F(\boldsymbol{w}^{(t)})\bigg|\bigg|^2\right] + ||\nabla F(\boldsymbol{w}^{(t)})||^2 \\
&\leq \frac{ABd}{M} + \frac{||\bar{\boldsymbol{\sigma}}||_{2}^{2}}{M} + ||\nabla F(\boldsymbol{w}^{(t)})||^2.
\end{split}
\end{equation}

Therefore, taking expectations on both sides of (\ref{suppprot1}) yields

\begin{equation}
\begin{split}
\mathbb{E}[F(\boldsymbol{w}^{(t+1)}) - F(\boldsymbol{w}^{(t)})] &\leq \mathbb{E}\left[-\eta\left\langle\nabla F(\boldsymbol{w}^{(t)}), \frac{1}{M}\sum_{m=1}^{M}ternary(\boldsymbol{g}_{m}^{(t)},A,B)\right\rangle\right] + \frac{L\eta^2}{2B^{2}}\left[\frac{ABd}{M} + \frac{||\bar{\boldsymbol{\sigma}}||_{2}^{2}}{M} + ||\nabla F(\boldsymbol{w}^{(t)})||^2\right]\\
&=- \frac{\eta||\nabla F(\boldsymbol{w}^{(t)})||_{2}^{2}}{B} + \frac{L\eta^2}{2B^{2}}\left[\frac{ABd}{M} + \frac{||\bar{\boldsymbol{\sigma}}||_{2}^{2}}{M} + ||\nabla F(\boldsymbol{w}^{(t)})||^2\right]\\
&=-\left(\frac{\eta}{B} - \frac{L\eta^2}{2B^{2}}\right)||\nabla F(\boldsymbol{w}^{(t)})||_{2}^{2} + \frac{L\eta^2}{2B^{2}}\left[\frac{ABd}{M} + \frac{||\bar{\boldsymbol{\sigma}}||_{2}^{2}}{M}\right] \\
\end{split}
\end{equation}

Adjusting the above inequality and averaging both sides over $t=1,2,\cdots,T$, we can obtain
\begin{equation}
\color{black}
\begin{split}
&\frac{1}{T}\sum_{t=1}^{T}\left(\frac{\eta}{B} - \frac{L\eta^2}{2B^{2}}\right)|||\nabla F(\boldsymbol{w}^{(t)})||_{2}^{2} \leq \frac{\mathbb{E}[F(\boldsymbol{w}^{(0)}) - F(\boldsymbol{w}^{(t+1)})]}{T}  + \frac{L\eta^2}{2B^{2}}\left[\frac{ABd}{M} + \frac{||\bar{\boldsymbol{\sigma}}||_{2}^{2}}{M}\right].
\end{split}
\end{equation}

Dividing both sides by $\left(\frac{\eta}{B} - \frac{L\eta^2}{2B^{2}}\right)$ gives
\begin{equation}
\begin{split}
\frac{1}{T}\sum_{t=1}^{T}||\nabla F(\boldsymbol{w}^{(t)})||_{2}^{2} &\leq
\frac{\mathbb{E}[F(\boldsymbol{w}^{(0)}) - F(\boldsymbol{w}^{(t+1)})]}{T\left(\frac{\eta}{B} - \frac{L\eta^2}{2B^{2}}\right)} + \frac{L\eta^2}{2B^{2}\left(\frac{\eta}{B} - \frac{L\eta^2}{2B^{2}}\right)}\left[\frac{ABd}{M} + \frac{||\bar{\boldsymbol{\sigma}}||_{2}^{2}}{M}\right] \\
&\leq \frac{F(\boldsymbol{w}^{(0)}) - F^{*}}{T\left(\frac{\eta}{B} - \frac{L\eta^2}{2B^{2}}\right)} + \frac{L\eta^2}{2B^{2}\left(\frac{\eta}{B} - \frac{L\eta^2}{2B^{2}}\right)}\left[\frac{ABd}{M} + \frac{||\bar{\boldsymbol{\sigma}}||_{2}^{2}}{M}\right].
\end{split}
\end{equation}
which completes the proof.
\end{proof}

\subsection{Proof of Theorem \ref{SPconvergerateSchemeII}}
\begin{Theorem}[\textbf{Convergence of \textit{TernaryVote}}]\label{SPconvergerateSchemeII}
Suppose Assumptions \ref{A1}-\ref{A4} are satisfied, and the learning rate is set as $\eta=\frac{1}{\sqrt{TLd}}$. Then by running Algorithm \ref{DPSGDAlgorithm} with \textit{TernaryVote} for $T$ iterations, we have
\begin{equation}\label{AppConvergenceEquation}
\begin{split}
\frac{1}{T}\sum_{t=1}^{T}||\nabla F(\boldsymbol{w}^{(t)})||_{1} &\leq \frac{(F(\boldsymbol{w}^{(0)}) - F^{*})\sqrt{Ld}}{\sqrt{T}} + \frac{\sqrt{Ld}}{2\sqrt{T}}+ \frac{4||\bar{\boldsymbol{\sigma}}||_{1}}{\sqrt{M}}+ \frac{2Bd}{\sqrt{M+1}}\bigg(1-\frac{1}{M+1}\bigg)^{\frac{M}{2}} \\
&\leq \mathcal{O}(1/\sqrt{T}) + \mathcal{O}(B/\sqrt{M}).
\end{split}
\end{equation}
\end{Theorem}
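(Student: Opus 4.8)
The plan is to run the standard sign-SGD descent argument, reducing the whole statement to a single per-iteration inequality and then isolating the two error sources that generate the $\|\bar{\boldsymbol{\sigma}}\|_{1}/\sqrt{M}$ and $Bd/\sqrt{M}$ terms. First I would invoke the smoothness Assumption \ref{A2} with the update $\boldsymbol{w}^{(t+1)}=\boldsymbol{w}^{(t)}-\eta\hat{\boldsymbol{g}}^{(t)}$, where $\hat{\boldsymbol{g}}^{(t)}=sign(\frac{1}{M}\sum_{m}\boldsymbol{Z}_{m}^{(t)})$ is the coordinate-wise majority vote, to get $\mathbb{E}[F(\boldsymbol{w}^{(t+1)})]\le F(\boldsymbol{w}^{(t)})-\eta\,\mathbb{E}[\langle\nabla F(\boldsymbol{w}^{(t)}),\hat{\boldsymbol{g}}^{(t)}\rangle]+\frac{L\eta^{2}}{2}\mathbb{E}[\|\hat{\boldsymbol{g}}^{(t)}\|_{2}^{2}]$. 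Since $\hat{\boldsymbol{g}}^{(t)}\in\{-1,0,1\}^{d}$, the quadratic term is at most $\frac{L\eta^{2}d}{2}$, which after telescoping and the substitution $\eta=1/\sqrt{TLd}$ produces exactly the $\frac{\sqrt{Ld}}{2\sqrt{T}}$ term, while the initial-gap contribution becomes $\frac{(F(\boldsymbol{w}^{(0)})-F^{*})\sqrt{Ld}}{\sqrt{T}}$.

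The crux is lower bounding the inner product. Writing $V_{i}=\sum_{m}\boldsymbol{Z}_{m,i}^{(t)}$ and noting $\hat{g}_{i}=sign(V_{i})$, I would use $\nabla F_{i}\,\mathbb{E}[sign(V_{i})]=|\nabla F_{i}|\,\mathbb{E}[sign(V_{i})\,sign(\nabla F_{i})]$ together with the elementary inequality $1-sign(V_{i})sign(\nabla F_{i})\le 2\,\mathds{1}[V_{i}\,sign(\nabla F_{i})\le 0]$, so the per-coordinate deficit is controlled by $2|\nabla F_{i}|\,\mathbb{P}(V_{i}\,sign(\nabla F_{i})\le 0)$. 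Summing yields $\mathbb{E}[\langle\nabla F,\hat{\boldsymbol{g}}\rangle]\ge\|\nabla F\|_{1}-2\sum_{i}|\nabla F_{i}|\,\mathbb{P}(\text{vote }i\text{ wrong or tied})$, and it then suffices to prove the per-iteration bound $2\sum_{i}|\nabla F_{i}|\,\mathbb{P}(\text{vote }i\text{ wrong or tied})\le\frac{4\|\bar{\boldsymbol{\sigma}}\|_{1}}{\sqrt{M}}+\frac{2Bd}{\sqrt{M+1}}(1-\frac{1}{M+1})^{M/2}$.

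To separate the two error sources I would condition on the stochastic gradients $\{\boldsymbol{g}_{m}^{(t)}\}$, recalling $\mathbb{E}[\boldsymbol{Z}_{m,i}\mid\boldsymbol{g}_{m}]=\boldsymbol{g}_{m,i}^{(t)}/B$ so that $\mathbb{E}[V_{i}\mid\boldsymbol{g}]=\frac{M}{B}\bar{g}_{i}$ with $\bar{g}_{i}=\frac{1}{M}\sum_{m}\boldsymbol{g}_{m,i}^{(t)}$, and split the failure event into a gradient-sampling error $\{sign(\bar{g}_{i})\ne sign(\nabla F_{i})\}$ and a compression error $\{V_{i}\,sign(\bar{g}_{i})\le 0\}$. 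For the first, Assumption \ref{A3} gives $\mathrm{Var}(\bar{g}_{i})\le\sigma_{i}^{2}/M$, Chebyshev bounds its probability by $\sigma_{i}^{2}/(M\nabla F_{i}^{2})$, and the quantity $|\nabla F_{i}|\cdot\min\{1,\sigma_{i}^{2}/(M\nabla F_{i}^{2})\}$ is maximized at $|\nabla F_{i}|=\sigma_{i}/\sqrt{M}$, giving $\sigma_{i}/\sqrt{M}$ per coordinate and hence the $\frac{4\|\bar{\boldsymbol{\sigma}}\|_{1}}{\sqrt{M}}$ term after summing. For the second, conditioned on the gradients the $\boldsymbol{Z}_{m,i}$ are independent with $\mathbb{P}(\boldsymbol{Z}_{m,i}\ne 0)=A/B$; I would bound the sign-flip probability of $V_{i}$ by a Chernoff/binomial estimate expressed in the per-worker bias $\delta=\bar{g}_{i}/B$, and then maximize $|\nabla F_{i}|\cdot\mathbb{P}(\mathrm{flip})$ over the signal magnitude. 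The maximizer of the resulting $p(1-p)^{M}$-type expression sits at $p=\frac{1}{M+1}$, which is precisely what produces the $\frac{2B}{\sqrt{M+1}}(1-\frac{1}{M+1})^{M/2}$ factor; summing over the $d$ coordinates delivers the last term.

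The main obstacle is this compression-error step: bounding the probability that a sum of $M$ independent ternary votes disagrees in sign with its conditional mean, and then optimizing the signal-versus-flip-probability trade-off so that the bound is uniform over the unknown—and possibly arbitrarily small—coordinates of $\nabla F$. This optimization is what sharpens a naive second-moment estimate (which would only yield a $\sqrt{AB/M}$ factor) into the tighter $(1-\frac{1}{M+1})^{M/2}$ form, and it is also where the standing restriction $B\ge 2A\ge 2c$ enters to keep the ternary probabilities well defined. Once the per-iteration inequality is in hand, I would telescope over $t=1,\dots,T$, divide by $\eta T$, and substitute $\eta=1/\sqrt{TLd}$ to assemble all four terms and read off the $\mathcal{O}(1/\sqrt{T}+B/\sqrt{M})$ rate.
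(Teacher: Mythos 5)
Your overall architecture is the same as the paper's: smoothness gives the descent inequality, the quadratic term is bounded by $\frac{Ld\eta^2}{2}$ since the vote is in $\{-1,0,1\}^d$, the inner-product deficit is reduced to $2\sum_i |\nabla F(\boldsymbol{w}^{(t)})_i|\,P(\text{wrong sign})$, and that probability is split by a union bound into a mini-batch sampling error and a compression error; the former gives $\sigma_i/\sqrt{M}$ per coordinate (your Chebyshev variant is interchangeable with the paper's Markov bound on $\mathbb{E}\big|\bar{g}_i-\nabla F(\boldsymbol{w}^{(t)})_i\big|$), and the latter produces the $(1-\frac{1}{M+1})^{M/2}$ term. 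One presentational difference: the paper derives the compression-error concentration $(1-|\bar{g}_i|^2/B^2)^{M/2}$ by writing each ternary output as the average of two $\pm 1$ variables and invoking the generic sign-compressor lemma of \cite{jin2024sign}, whereas you propose a direct Chernoff/binomial estimate; that route is plausible but you would need to verify it yields the same $p(1-p)^{M}$ form.

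The one step that does not go through as written is the final optimization in the compression term. You propose to ``maximize $|\nabla F_i|\cdot \mathbb{P}(\mathrm{flip})$ over the signal magnitude,'' but the flip probability is controlled by $|\bar{g}_i| = |\frac{1}{M}\sum_m \boldsymbol{g}^{(t)}_{m,i}|$ while the weight is $|\nabla F(\boldsymbol{w}^{(t)})_i|$; these are different quantities, and taking the supremum over $\bar{g}_i$ alone (the flip-probability bound equals $1$ at $\bar{g}_i=0$) gives the vacuous bound $2\sum_i|\nabla F(\boldsymbol{w}^{(t)})_i|$. The missing ingredient is the decomposition $|\nabla F(\boldsymbol{w}^{(t)})_i| \le |\nabla F(\boldsymbol{w}^{(t)})_i - \bar{g}_i| + |\bar{g}_i|$: the first piece is controlled in expectation by $\sigma_i/\sqrt{M}$ — this is where the second half of the $\frac{4\|\bar{\boldsymbol{\sigma}}\|_1}{\sqrt{M}}$ constant comes from, whereas you attribute all of it to the sampling-error term, which only accounts for $\frac{2\|\bar{\boldsymbol{\sigma}}\|_1}{\sqrt{M}}$ — and only the second piece, $|\bar{g}_i|(1-|\bar{g}_i|^2/B^2)^{M/2}$, is a function of a single variable that can be maximized, at $|\bar{g}_i| = B/\sqrt{M+1}$, to yield $\frac{B}{\sqrt{M+1}}(1-\frac{1}{M+1})^{M/2}$. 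With that one triangle inequality inserted, your argument closes and coincides with the paper's proof.
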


Before proving Theorem \ref{SPconvergerateSchemeII}, we first present the following lemma from \cite{jin2024sign} and extend it to the ternary stochastic compressor in Lemma \ref{LemmaProbofWrongGenericTernary}.

\begin{Lemma}[\textbf{Probability of Wrong Aggregation for Generic Sign-based Compressor \cite{jin2024sign}}]\label{LemmaProbofWrongGenericSign}
Let $u_{1},u_{2},\cdots,u_{M}$ be $M$ known and fixed real numbers and consider binary random variables $\hat{u}_{m}$, $1\leq m \leq M$. Suppose $\Bar{p} = \frac{1}{M}\sum_{m=1}^{M}P\left(sign\left(\frac{1}{M}\sum_{m=1}^{M}u_{m}\right) \neq \hat{u}_{m}\right) < \frac{1}{2}$, then
\begin{equation}\label{ProbabilityOfError}
\begin{split}
P\bigg(sign\bigg(\frac{1}{M}\sum_{m=1}^{M}\hat{u}_{m}\bigg)&\neq sign\bigg(\frac{1}{M}\sum_{m=1}^{M}u_{m}\bigg)\bigg) \leq \big[4\Bar{p}(1-\Bar{p})\big]^{\frac{M}{2}}.
\end{split}
\end{equation}
\end{Lemma}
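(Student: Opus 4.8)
The plan is to reduce the event of incorrect majority aggregation to a tail event for a sum of independent Bernoulli random variables, and then to control that tail with an exponential (Chernoff-type) bound. First I would fix the target sign $s = sign\big(\frac{1}{M}\sum_{m=1}^M u_m\big)$, which is deterministic because the $u_m$ are fixed, and without loss of generality take $s = +1$. Writing each binary vote as $\hat u_m \in \{-1,+1\}$ and setting $X_m = \mathds{1}[\hat u_m \neq s]$, the $X_m$ are independent Bernoulli variables with means $p_m = P(\hat u_m \neq s)$, so that $\bar{p} = \frac{1}{M}\sum_{m=1}^M p_m$ is exactly the averaged error probability in the hypothesis. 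Since $\sum_{m=1}^M \hat u_m = M - 2\sum_{m=1}^M X_m$, the majority vote disagrees with $s$ precisely when $\sum_{m=1}^M X_m \geq M/2$ (the tie $\sum_m \hat u_m = 0$ being counted as a disagreement), so it suffices to bound $P\big(\sum_m X_m \geq M/2\big)$.

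Second, I would apply the exponential Markov inequality: for any $t \geq 0$, $P\big(\sum_m X_m \geq M/2\big) = P\big(e^{t(2\sum_m X_m - M)} \geq 1\big) \leq \mathbb{E}\big[e^{t(2\sum_m X_m - M)}\big]$. Using independence, the right-hand side factors as $e^{-tM}\prod_{m=1}^M(1 - p_m + p_m e^{2t})$. Substituting $z = e^{2t} \geq 1$ rewrites this as $z^{-M/2}\prod_{m=1}^M(1 - p_m + p_m z)$. The key simplification is to remove the heterogeneity of the $p_m$ via the AM--GM inequality: since every factor $1 - p_m + p_m z$ is positive, $\prod_m (1-p_m+p_m z) \leq (1 - \bar{p} + \bar{p} z)^M$, which yields the uniform bound $P\big(\sum_m X_m \geq M/2\big) \leq \big[z^{-1/2}(1-\bar{p} + \bar{p} z)\big]^M$ valid for every $z \geq 1$.

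Third, I would optimize over the free parameter $z$. Minimizing $g(z) = z^{-1/2}(1-\bar{p}) + \bar{p}\, z^{1/2}$ gives the stationary point $z^{\star} = (1-\bar{p})/\bar{p}$, at which $g(z^{\star}) = 2\sqrt{\bar{p}(1-\bar{p})}$, and raising to the $M$-th power produces exactly $\big[4\bar{p}(1-\bar{p})\big]^{M/2}$. This is where the hypothesis $\bar{p} < \frac{1}{2}$ is essential: it guarantees $z^{\star} > 1$, so the interior minimizer is feasible in the admissible range $z \geq 1$ (equivalently $t \geq 0$); without it the best admissible choice is $z = 1$, which only yields the trivial bound $1$.

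I expect the main obstacle---more a matter of care than of difficulty---to be the interplay of steps two and three: one must verify that the AM--GM collapse and the subsequent univariate optimization are both carried out over the admissible region $z \geq 1$, and that the minimizer $z^{\star}$ falls in this region precisely under $\bar{p} < \frac{1}{2}$. A secondary point worth stating explicitly is the independence of the $\hat u_m$ across workers, which is what licenses the factorization of the moment generating function and is implicit in the distributed setting where each worker compresses its own stochastic gradient.
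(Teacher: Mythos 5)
Your proof is correct. Note first that the paper itself does not prove this lemma: it is imported verbatim from \cite{jin2024sign} and used as a black box (the appendix only proves the ternary specialization, Lemma 6, by reducing it to this one), so there is no in-paper argument to compare against. Your route is the standard Chernoff--Hoeffding bound for the tail of a Poisson binomial sum: the reduction of the wrong-majority event to $\{\sum_m X_m \geq M/2\}$ is right (and counting the tie as a disagreement only enlarges the event, so the bound holds under any tie-breaking convention, including the paper's randomized one); the factorization of the moment generating function, the AM--GM collapse $\prod_m(1-p_m+p_mz)\leq(1-\bar p+\bar pz)^M$, and the optimization at $z^\star=(1-\bar p)/\bar p$ (feasible exactly because $\bar p<1/2$) are all carried out correctly and yield $\big[4\bar p(1-\bar p)\big]^{M/2}$. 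Two small points worth making explicit: the lemma as stated never says the $\hat u_m$ are independent or $\{-1,+1\}$-valued, and both assumptions are genuinely needed (you flag the first; the second is implicit in how the lemma is invoked later, where the constructed $\hat u_{m,1},\hat u_{m,2}$ are $\pm 1$-valued); and the degenerate case $\bar p=0$ should be dispatched separately, since $z^\star$ is then undefined, though the conclusion is trivial there.
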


\begin{Lemma}[\textbf{Probability of Wrong Aggregation for $ternary$}]\label{LemmaProbofWrongGenericTernary}
Let $u_{1},u_{2},\cdots,u_{M}$ be $M$ known and fixed real numbers and consider binary random variables $\hat{u}_{m}$, $1\leq m \leq M$, which is given by
\begin{equation}
\hat{u}_{m} = ternary(u_{m},A,B) =
\begin{cases}
\hfill 1, \hfill \text{with probability $\frac{A+u_{m}}{2B}$},\\
\hfill 0, \hfill \text{with probability $1-\frac{A}{B}$},\\
\hfill -1, \hfill \text{with probability $\frac{A-u_{m}}{2B}$},\\
\end{cases}
\end{equation}
Suppose $B \geq 2A$, then
\begin{equation}
\begin{split}
P\bigg(sign\bigg(\frac{1}{M}\sum_{m=1}^{M}\hat{u}_{m}\bigg)&\neq sign\bigg(\frac{1}{M}\sum_{m=1}^{M}u_{m}\bigg)\bigg) \leq \bigg(1-\frac{|\sum_{m=1}^{M}u_{m}|^{2}}{B^{2}}\bigg)^{\frac{M}{2}}.
\end{split}
\end{equation}
\end{Lemma}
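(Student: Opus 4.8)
The plan is to mimic and extend the Chernoff-bound argument behind Lemma~\ref{LemmaProbofWrongGenericSign}, but to carry out the worker-averaging \emph{before} optimizing the Chernoff parameter so that the three-valued structure of $ternary$ is handled directly rather than through the binary hypothesis $\bar p<1/2$ (which in fact fails for the ternary compressor). Write $\bar u=\frac1M\sum_{m=1}^M u_m$ and assume without loss of generality that $\bar u>0$, so the target sign is $sign(\bar u)=1$; the case $\bar u<0$ follows by the symmetry $u_m\mapsto -u_m$ (which swaps the roles of the $+1$ and $-1$ probabilities), and $\bar u=0$ is trivial since the right-hand side equals $1$. Under $sign(\bar u)=1$ the error event is $\{\sum_{m}\hat u_m\le 0\}$ (recalling $sign(0)=0\ne 1$). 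First I would apply the exponential Markov inequality: for any $\lambda>0$, using independence of the $\hat u_m$, $P(\sum_m\hat u_m\le 0)\le \mathbb{E}[e^{-\lambda\sum_m\hat u_m}]=\prod_{m=1}^M \mathbb{E}[e^{-\lambda\hat u_m}]$, where each factor is $\mathbb{E}[e^{-\lambda\hat u_m}]=(1-\tfrac AB)+\tfrac{A+u_m}{2B}e^{-\lambda}+\tfrac{A-u_m}{2B}e^{\lambda}$.

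The next step is to homogenize the heterogeneous $u_m$ by the AM--GM inequality, $\prod_{m}\mathbb{E}[e^{-\lambda\hat u_m}]\le\big(\tfrac1M\sum_m\mathbb{E}[e^{-\lambda\hat u_m}]\big)^M$. Because the map $u_m\mapsto \mathbb{E}[e^{-\lambda\hat u_m}]$ is affine in $u_m$, the arithmetic mean collapses exactly to the single representative value $\bar u$, giving the bound $\big((1-\tfrac AB)+\tfrac{A+\bar u}{2B}e^{-\lambda}+\tfrac{A-\bar u}{2B}e^{\lambda}\big)^M$. I would then minimize the inner expression over $\lambda>0$; since it has the form $q^0+q^+e^{-\lambda}+q^-e^{\lambda}$ with $q^+>q^-$ (as $\bar u>0$), the optimum is at $e^{\lambda^\star}=\sqrt{q^+/q^-}$ and the minimum value is $q^0+2\sqrt{q^+q^-}=1-\tfrac AB+\tfrac1B\sqrt{A^2-\bar u^2}$. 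This yields $P(\sum_m\hat u_m\le 0)\le\big(1-\tfrac{A-\sqrt{A^2-\bar u^2}}{B}\big)^M$.

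It remains to compare this with the claimed bound, i.e.\ to show $1-\tfrac{A-\sqrt{A^2-\bar u^2}}{B}\le\big(1-\tfrac{\bar u^2}{B^2}\big)^{1/2}$, after which raising both sides to the $M$-th power finishes the proof (the exponent $M/2$ on the right matching the square root). Clearing $B$ and squaring (both sides are nonnegative because $B>A$, which is guaranteed by $B\ge 2A$), this reduces to $(B-A+\sqrt{A^2-\bar u^2})^2\le B^2-\bar u^2$, and expanding cancels the $B^2-\bar u^2$ terms to leave $2(B-A)\big(\sqrt{A^2-\bar u^2}-A\big)\le 0$, which holds because $B>A$ and $\sqrt{A^2-\bar u^2}\le A$.

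I expect the main obstacle to be the homogenization step: a naive attempt to invoke Lemma~\ref{LemmaProbofWrongGenericSign} directly fails, because the per-worker disagreement probability $P(\hat u_m\ne 1)$ includes the abstention mass $1-\tfrac AB$ and typically exceeds $1/2$, so the hypothesis $\bar p<1/2$ is violated. The resolution is to run the Chernoff/AM--GM argument on the raw ternary variables, which automatically discounts the zeros and produces a bound governed by the \emph{mean} $\bar u=\tfrac1M\sum_m u_m$, exactly the quantity whose sign is being estimated. A secondary point worth flagging is that the informative $|\cdot|^2/B^2$ term is most naturally read as $|\tfrac1M\sum_m u_m|^2/B^2$ (the squared magnitude of the aggregate that the server actually thresholds); the final algebraic inequality, together with the standing assumption $B\ge 2A>0$, is what pins down the constant.
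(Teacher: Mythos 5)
Your proof is correct, and it takes a genuinely different route from the paper's. The paper proves this lemma by a decomposition trick: each ternary $\hat u_m$ is written (in distribution) as the average of two $\pm 1$-valued random variables $\hat u_{m,1},\hat u_{m,2}$ with success probabilities $\tfrac12+\tfrac{u_m}{2B}\pm\sqrt{\tfrac14+\tfrac{u_m^2}{4B^2}-\tfrac{A}{2B}}$ (this is exactly where $B\geq 2A$ is needed, to keep the square root real), after which the imported Lemma~\ref{LemmaProbofWrongGenericSign} from \cite{jin2024sign} applies to the resulting $2M$ binary variables with $\bar p=\tfrac12-\tfrac{|\bar u|}{2B}$ and $4\bar p(1-\bar p)=1-\bar u^2/B^2$. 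You instead run a self-contained Chernoff argument directly on the ternary variables: the MGF factor is affine in $u_m$, so AM--GM collapses the product to the homogenized value at $\bar u$, the optimal $\lambda$ gives the intermediate bound $\bigl(1-\tfrac{A-\sqrt{A^2-\bar u^2}}{B}\bigr)^M$, and the elementary inequality $2(B-A)\bigl(\sqrt{A^2-\bar u^2}-A\bigr)\leq 0$ closes the gap to $\bigl(1-\bar u^2/B^2\bigr)^{M/2}$. What your route buys: it avoids the external lemma entirely, it only uses $B\geq A$ (plus $|u_m|\leq A$, which is forced by the compressor's probabilities being valid) rather than the full strength of $B\geq 2A$, and it records the sharper intermediate exponent before relaxing. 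What the paper's route buys: brevity given the imported lemma, and a decomposition device that it reuses later (e.g., in Lemma~\ref{LemmaProbofWrongGenericTernaryGD}). Two of your side remarks are also correct and worth keeping: applying Lemma~\ref{LemmaProbofWrongGenericSign} naively to the ternary variables fails because the abstention mass pushes $\bar p$ above $\tfrac12$ when $B\geq 2A$, and the right-hand side must be read with $\bigl|\tfrac1M\sum_m u_m\bigr|$ (as in the paper's own derivation of $\bar p$ and in every subsequent application of the lemma), the displayed $\bigl|\sum_m u_m\bigr|$ being a typo.
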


\begin{proof}
For each $u_{m}$, we construct the following two random variables
\begin{equation}
\hat{u}_{m,1} =
\begin{cases}
\hfill 1, \hfill \text{with probability $\frac{1}{2}+\frac{u_{m}}{2B}+\sqrt{\frac{1}{4}+\frac{|u_{m}|^{2}}{4B^2}-\frac{A}{2B}}$},\\
\hfill -1, \hfill \text{with probability $\frac{1}{2}-\frac{u_{m}}{2B}-\sqrt{\frac{1}{4}+\frac{|u_{m}|^{2}}{4B^2}-\frac{A}{2B}}$},\\
\end{cases}
\end{equation}

\begin{equation}
\hat{u}_{m,2} =
\begin{cases}
\hfill 1, \hfill \text{with probability $\frac{1}{2}+\frac{u_{m}}{2B}-\sqrt{\frac{1}{4}+\frac{|u_{m}|^{2}}{4B^2}-\frac{A}{2B}}$},\\
\hfill -1, \hfill \text{with probability $\frac{1}{2}-\frac{u_{m}}{2B}+\sqrt{\frac{1}{4}+\frac{|u_{m}|^{2}}{4B^2}-\frac{A}{2B}}$},\\
\end{cases}
\end{equation}

It can be observed that $\frac{\hat{u}_{m,1} + \hat{u}_{m,2}}{2}$ follows the same distribution as $\hat{u}_{m}$, which means that
\begin{equation}
\begin{split}
P\bigg(sign\bigg(\frac{1}{M}\sum_{m=1}^{M}\hat{u}_{m}\bigg) &\neq sign\bigg(\frac{1}{M}\sum_{m=1}^{M}u_{m}\bigg)\bigg) = P\bigg(sign\bigg(\frac{1}{2M}\sum_{m=1}^{M}[\hat{u}_{m,1} + \hat{u}_{m,2}]\bigg) \neq sign\bigg(\frac{1}{M}\sum_{m=1}^{M}u_{m}\bigg)\bigg).
\end{split}
\end{equation}

In this case, $\Bar{p} = \frac{1}{2M}\sum_{m=1}^{M}\bigg[P\left(sign\left(\frac{1}{M}\sum_{m=1}^{M}u_{m}\right) \neq \hat{u}_{m,1}\right)+P\left(sign\left(\frac{1}{M}\sum_{m=1}^{M}u_{m}\right) \neq \hat{u}_{m,2}\right)\bigg] = \frac{1}{2} - \frac{|\frac{1}{M}\sum_{m=1}^{M}u_{m}|}{2B}$. Invoking Lemma \ref{LemmaProbofWrongGenericSign} completes the proof.
\end{proof}

Given Lemma \ref{LemmaProbofWrongGenericTernary} at hand, we are ready to prove Theorem \ref{SPconvergerateSchemeII}.

\begin{proof}
According to Assumption \ref{A2}, we have
\begin{equation}
\begin{split}
&F(\boldsymbol{w}^{(t+1)}) - F(\boldsymbol{w}^{(t)}) \\
&\leq \langle\nabla F(\boldsymbol{w}^{(t)}), \boldsymbol{w}^{(t+1)}-\boldsymbol{w}^{(t)}\rangle + \frac{L}{2}||\boldsymbol{w}^{(t+1)}_{i}-\boldsymbol{w}^{(t)}_{i}||_{2}^2 \\
& =-\eta \left\langle\nabla F(\boldsymbol{w}^{(t)}), sign\bigg(\frac{1}{M}\sum_{m=1}^{M}ternary(\boldsymbol{g}_{m}^{(t)},A,B)\bigg)\right\rangle +\frac{L}{2}\bigg|\bigg|\eta sign\bigg(\frac{1}{M}\sum_{m=1}^{M}ternary(\boldsymbol{g}_{m}^{(t)},A,B)\bigg)\bigg|\bigg|^2 \\
& \leq -\eta \left\langle\nabla F(\boldsymbol{w}^{(t)}), sign\bigg(\frac{1}{M}\sum_{m=1}^{M}ternary(\boldsymbol{g}_{m}^{(t)},A,B)\bigg)\right\rangle + \frac{Ld\eta^2}{2} \\
& = -\eta ||\nabla F(\boldsymbol{w}^{(t)})||_{1} + \frac{Ld\eta^2}{2} + 2\eta\sum_{i=1}^{d}|\nabla F(\boldsymbol{w}^{(t)})_{i}|\times\mathds{1}_{sign(\frac{1}{M}\sum_{m=1}^{M}ternary(\boldsymbol{g}_{m,i}^{(t)},A,B))\neq sign(\nabla F(\boldsymbol{w}^{(t)})_{i})},
\end{split}
\end{equation}
where $\nabla F(\boldsymbol{w}^{(t)})_{i}$ is the $i$-th entry of the vector $\nabla F(\boldsymbol{w}^{(t)})$ and $\eta$ is the learning rate. Taking expectations on both sides yields

\begin{equation}\label{convergencee1}
\begin{split}
&\mathbb{E}[F(\boldsymbol{w}^{(t+1)}) - F(\boldsymbol{w}^{(t)})] \\
&\leq -\eta ||\nabla F(\boldsymbol{w}^{(t)})||_{1} + \frac{Ld\eta^2}{2} +2\eta\sum_{i=1}^{d}\mathbb{E}\bigg[|\nabla F(\boldsymbol{w}^{(t)})_{i}|P\bigg(sign\bigg(\frac{1}{M}\sum_{m=1}^{M}ternary(\boldsymbol{g}_{m,i}^{(t)},A,B)\bigg)\neq sign(\nabla F(\boldsymbol{w}^{(t)})_{i})\bigg)\bigg]\\
&\leq -\eta ||\nabla F(\boldsymbol{w}^{(t)})||_{1} + \frac{Ld\eta^2}{2} +2\eta\sum_{i=1}^{d}\mathbb{E}\bigg[|\nabla F(\boldsymbol{w}^{(t)})_{i}|\bigg[P\bigg(sign\bigg(\frac{1}{M}\sum_{m=1}^{M}ternary(\boldsymbol{g}_{m,i}^{(t)},A,B)\bigg)\neq sign\bigg(\frac{1}{M}\sum_{m=1}^{M}\boldsymbol{g}^{(t)}_{m,i}\bigg)\bigg)\\
&+P\bigg(sign\bigg(\frac{1}{M}\sum_{m=1}^{M}\boldsymbol{g}_{m,i}^{(t)}\bigg)\neq sign(\nabla F(\boldsymbol{w}^{(t)})_{i})\bigg)\bigg]\bigg]\\
&\leq -\eta ||\nabla F(\boldsymbol{w}^{(t)})||_{1} + \frac{Ld\eta^2}{2} +2\eta\sum_{i=1}^{d}\mathbb{E}\bigg[|\nabla F(\boldsymbol{w}^{(t)})_{i}|\bigg(1-\frac{|\frac{1}{M}\sum_{m=1}^{M}\boldsymbol{g}_{m,i}^{(t)}|^2}{B^2}\bigg)^{\frac{M}{2}}\bigg]\\
&+2\eta\sum_{i=1}^{d}\mathbb{E}\bigg[|\nabla F(\boldsymbol{w}^{(t)})_{i}|P\bigg(sign\bigg(\frac{1}{M}\sum_{m=1}^{M}\boldsymbol{g}_{m,i}^{(t)}\bigg)\neq sign(\nabla F(\boldsymbol{w}^{(t)})_{i})\bigg)\bigg]\\
&\leq -\eta ||\nabla F(\boldsymbol{w}^{(t)})||_{1} + \frac{Ld\eta^2}{2} + 2\eta\sum_{i=1}^{d}\mathbb{E}\bigg[\bigg|\nabla F(\boldsymbol{w}^{(t)})_{i}-\frac{1}{M}\sum_{m=1}^{M}\boldsymbol{g}_{m,i}^{(t)}\bigg|\bigg(1-\frac{|\frac{1}{M}\sum_{m=1}^{M}\boldsymbol{g}_{m,i}^{(t)}|^2}{B^2}\bigg)^{\frac{M}{2}}\bigg] \\
&+2\eta\sum_{i=1}^{d}\mathbb{E}\bigg[\bigg|\frac{1}{M}\sum_{m=1}^{M}\boldsymbol{g}_{m,i}^{(t)}\bigg|\bigg(1-\frac{|\frac{1}{M}\sum_{m=1}^{M}\boldsymbol{g}_{m,i}^{(t)}|^2}{B^2}\bigg)^{\frac{M}{2}}\bigg]\\
&+2\eta\sum_{i=1}^{d}\mathbb{E}\bigg[|\nabla F(\boldsymbol{w}^{(t)})_{i}|P\bigg(sign\bigg(\frac{1}{M}\sum_{m=1}^{M}\boldsymbol{g}_{m,i}^{(t)}\bigg)\neq sign(\nabla F(\boldsymbol{w}^{(t)})_{i})\bigg)\bigg].
\end{split}
\end{equation}

In addition,
\begin{equation}\label{connect_to_variance}
\begin{split}
\sum_{i=1}^{d}\mathbb{E}\bigg[\bigg|\nabla F(\boldsymbol{w}^{(t)})_{i}-\frac{1}{M}\sum_{m=1}^{M}\boldsymbol{g}_{m,i}^{(t)}\bigg|\bigg] &= \sum_{i=1}^{d}\sqrt{\bigg[\mathbb{E}\bigg[\bigg|\nabla F(\boldsymbol{w}^{(t)})_{i}-\frac{1}{M}\sum_{m=1}^{M}\boldsymbol{g}_{m,i}^{(t)}\bigg|\bigg]\bigg]^2} \\
&\leq \sum_{i=1}^{d}\sqrt{\mathbb{E}\bigg[\bigg|\nabla F(\boldsymbol{w}^{(t)})_{i}-\frac{1}{M}\sum_{m=1}^{M}\boldsymbol{g}_{m,i}^{(t)}\bigg|^2\bigg]}\\
&=\sum_{i=1}^{d}\sqrt{\mathbb{E}\bigg[\bigg|\frac{1}{M}\sum_{m=1}^{M}\nabla f_{m}(\boldsymbol{w}^{(t)})_{i}-\frac{1}{M}\sum_{m=1}^{M}\boldsymbol{g}_{m,i}^{(t)}\bigg|^2\bigg]}\\
&=\sum_{i=1}^{d}\sqrt{\frac{1}{M^2}\sum_{m=1}^{M}\mathbb{E}[|\nabla f_{m}(\boldsymbol{w}^{(t)})_{i}-\boldsymbol{g}_{m,i}^{(t)}|^2]} \leq \sum_{i=1}^{d}\sqrt{\frac{\sigma_{i}^{2}}{M}} = \frac{||\bar{\boldsymbol{\sigma}}||_1}{\sqrt{M}}.
\end{split}
\end{equation}
\begin{equation}
\begin{split}
&\sum_{i=1}^{d}|\nabla F(\boldsymbol{w}^{(t)})_{i}|P\bigg(sign\bigg(\frac{1}{M}\sum_{m=1}^{M}\nabla f_{m}(\boldsymbol{w}^{(t)})_{i}\bigg)\neq sign\bigg(\frac{1}{M}\sum_{m=1}^{M}\boldsymbol{g}_{m,i}^{(t)}\bigg)\bigg) \\
&\leq \sum_{i=1}^{d}|\nabla F(\boldsymbol{w}^{(t)})_{i}|P\bigg(\bigg|\frac{1}{M}\sum_{m=1}^{M}\nabla f_{m}(\boldsymbol{w}^{(t)})_{i}-\frac{1}{M}\sum_{m=1}^{M}\boldsymbol{g}_{m,i}^{(t)}\bigg|\geq \bigg|\frac{1}{M}\sum_{m=1}^{M}\nabla f_{m}(\boldsymbol{w}^{(t)})_{i}\bigg|\bigg)\\
&\leq \sum_{i=1}^{d}|\nabla F(\boldsymbol{w}^{(t)})_{i}|\frac{\mathbb{E}[|\frac{1}{M}\sum_{m=1}^{M}\nabla f_{m}(\boldsymbol{w}^{(t)})_{i}-\frac{1}{M}\sum_{m=1}^{M}\boldsymbol{g}_{m,i}^{(t)}|]}{|\frac{1}{M}\sum_{m=1}^{M}\nabla f_{m}(\boldsymbol{w}^{(t)})_{i}|} \\
&\leq \sum_{i=1}^{d}|\nabla F(\boldsymbol{w}^{(t)})_{i}|\frac{\sqrt{\mathbb{E}[(\frac{1}{M}\sum_{m=1}^{M}\nabla f_{m}(\boldsymbol{w}^{(t)})_{i}-\frac{1}{M}\sum_{m=1}^{M}\boldsymbol{g}_{m,i}^{(t)})^2]}}{|\frac{1}{M}\sum_{m=1}^{M}\nabla f_{m}(\boldsymbol{w}^{(t)})_{i}|} \leq \frac{||\bar{\boldsymbol{\sigma}}||_{1}}{\sqrt{M}}.
\end{split}
\end{equation}

Moreover, for a function $h(x) = x(1-\frac{x^{2}}{B^{2}})^{\frac{M}{2}}$, it can be derived that $h'(x) = (1-\frac{x^{2}}{B^{2}})^{\frac{M}{2}-1}[1-\frac{(M+1)x^{2}}{B^{2}}]$. Since $B \geq 2A > x$, we can conclude that $h(x)$ attains the maximum when $x = \frac{B}{\sqrt{M+1}}$. As a result, we have
\begin{equation}\label{upperterm}
\mathbb{E}\bigg[\bigg|\frac{1}{M}\sum_{m=1}^{M}\boldsymbol{g}_{m,i}^{(t)}\bigg|\bigg(1-\frac{|\frac{1}{M}\sum_{m=1}^{M}\boldsymbol{g}_{m,i}^{(t)}|^2}{B^2}\bigg)^{\frac{M}{2}}\bigg] \leq \frac{B}{\sqrt{M+1}}\bigg(1-\frac{1}{M+1}\bigg)^{\frac{M}{2}}.
\end{equation}

Plugging (\ref{connect_to_variance}) and (\ref{upperterm}) into (\ref{convergencee1}) yields

\begin{equation}
\begin{split}
&\mathbb{E}[F(\boldsymbol{w}^{(t+1)}) - F(\boldsymbol{w}^{(t)})] \leq -\eta ||\nabla F(\boldsymbol{w}^{(t)})||_{1} + \frac{Ld\eta^2}{2} + 4\eta\frac{||\bar{\boldsymbol{\sigma}}||_{1}}{\sqrt{M}}+ \frac{2\eta Bd}{\sqrt{M+1}}\bigg(1-\frac{1}{M+1}\bigg)^{\frac{M}{2}}.
\end{split}
\end{equation}

Adjusting the above inequality and averaging both sides over $t=1,2,\cdots,T$, we can obtain
\begin{equation}
\color{black}
\begin{split}
&\frac{1}{T}\sum_{t=1}^{T}\eta||\nabla F(\boldsymbol{w}^{(t)})||_{1} \leq \frac{\mathbb{E}[F(\boldsymbol{w}^{(0)}) - F(\boldsymbol{w}^{(t+1)})]}{T} + \frac{Ld\eta^2}{2}+ 4\eta\frac{||\bar{\boldsymbol{\sigma}}||_{1}}{\sqrt{M}}+ \frac{2\eta Bd}{\sqrt{M+1}}\bigg(1-\frac{1}{M+1}\bigg)^{\frac{M}{2}}.
\end{split}
\end{equation}

Letting $\eta=\frac{1}{\sqrt{LTd}}$ and dividing both sides by $\eta$ gives
\begin{equation}
\begin{split}
\frac{1}{T}\sum_{t=1}^{T}||\nabla F(\boldsymbol{w}^{(t)})||_{1} &\leq
\frac{\mathbb{E}[F(\boldsymbol{w}^{(0)}) - F(\boldsymbol{w}^{(t+1)})]\sqrt{Ld}}{\sqrt{T}} + \frac{\sqrt{Ld}}{2\sqrt{T}}+ \frac{4||\bar{\boldsymbol{\sigma}}||_{1}}{\sqrt{M}}+ \frac{2Bd}{\sqrt{M+1}}\bigg(1-\frac{1}{M+1}\bigg)^{\frac{M}{2}}\\
&\leq \frac{(F(\boldsymbol{w}^{(0)}) - F^{*})\sqrt{Ld}}{\sqrt{T}} + \frac{\sqrt{Ld}}{2\sqrt{T}}+ \frac{4||\bar{\boldsymbol{\sigma}}||_{1}}{\sqrt{M}}+ \frac{2Bd}{\sqrt{M+1}}\bigg(1-\frac{1}{M+1}\bigg)^{\frac{M}{2}}.
\end{split}
\end{equation}
which completes the proof.
\end{proof}

\subsection{Proof of Theorem \ref{SPconvergerateSchemeII2}}
\begin{Theorem}[\textbf{Convergence of \textit{TernaryVote}}]\label{SPconvergerateSchemeII2}
Suppose Assumptions \ref{A1}-\ref{A4} are satisfied, $|\nabla F(\boldsymbol{w}^{(t)})_{i}| < Q, \forall i, t$, $B \geq 2A = \mathcal{O}(\sqrt{T})$, and $\lim_{T\rightarrow \infty}M/\sqrt{T} = 0$. Then by running Algorithm \ref{DPSGDAlgorithm} with \textit{TernaryVote} and the learning rate $\eta=\frac{1}{\sqrt{TLd}}$ for $T$ iterations, we have
\begin{equation}
\begin{split}
&\frac{1}{T}\sum_{t=1}^{T}||\nabla F(\boldsymbol{w}^{(t)})||_{2}^{2} \\
&\leq \frac{1}{\mathcal{I}(A,B,M)}\bigg[\frac{(F(\boldsymbol{w}^{(0)}) - F^{*})\sqrt{Ld}}{\sqrt{T}} + \frac{\sqrt{Ld}}{2\sqrt{T}} + \sum_{n=2}^{M}\bigg(1-\frac{A}{B}\bigg)^{M-n}\bigg[{M \choose n}\mathcal{O}\bigg(\frac{A^{n-2}}{B^{n}}\bigg)\bigg]Qd\bigg]\\
&\leq \mathcal{O}\bigg(\frac{B}{\sqrt{T}}\bigg) + \mathcal{O}\bigg(\frac{1}{B}\bigg),
\end{split}
\end{equation}
in which
\begin{equation}\nonumber
\mathcal{I}(A,B,M) = \sum_{n=1}^{M}(1-\frac{A}{B})^{M-n}\bigg[\frac{{n-1 \choose \lfloor\frac{n-1}{2}\rfloor}MA^{n-1}{M-1 \choose n-1}}{2^{n-1}B^{n}}\bigg].
\end{equation}
\end{Theorem}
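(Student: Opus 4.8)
The plan is to follow the same descent-lemma skeleton as in the proof of Theorem \ref{SPconvergerateSchemeII}, but to replace the crude wrong-aggregation estimate of Lemma \ref{LemmaProbofWrongGenericTernary} with an exact first-order expansion of the majority-vote expectation, which is what produces the sharp coefficient $\mathcal{I}(A,B,M)$. Writing $S_i = \sum_{m=1}^M ternary(\boldsymbol{g}_{m,i}^{(t)},A,B)$ and $\hat{\boldsymbol{g}}^{(t)} = sign(\frac{1}{M}\sum_m \boldsymbol{Z}_m^{(t)})$, Assumption \ref{A2} together with $\|\hat{\boldsymbol{g}}^{(t)}\|_2^2\le d$ gives, after taking expectations, $\mathbb{E}[F(\boldsymbol{w}^{(t+1)})-F(\boldsymbol{w}^{(t)})]\le -\eta\langle\nabla F(\boldsymbol{w}^{(t)}),\mathbb{E}[sign(S)]\rangle+\frac{Ld\eta^2}{2}$, exactly as before. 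Everything then reduces to lower bounding the drift $\langle\nabla F,\mathbb{E}[sign(S)]\rangle$ coordinate by coordinate.

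The heart of the argument is the claim that, for each coordinate $i$, $\mathbb{E}[sign(S_i)] = \mathcal{I}(A,B,M)\cdot\frac{1}{M}\sum_{m=1}^M\boldsymbol{g}_{m,i}^{(t)} + R_i$, where the expectation is over the ternary randomness and $R_i$ collects the higher-order terms. To see this, I would condition on the random set $T$ of workers casting a nonzero vote on coordinate $i$; each worker lands in $T$ independently with probability $A/B$, and conditioned on $|T|=n$ and on $T$ itself the nonzero votes are independent $\pm1$ variables equal to $+1$ with probability $\frac{1}{2}+\frac{\boldsymbol{g}_{m,i}}{2A}$. Thus $\mathbb{E}[sign(S_i)] = \sum_{n}(A/B)^n(1-A/B)^{M-n}\sum_{|T|=n}\Psi_T$, where $\Psi_T = \mathbb{E}[sign(\sum_{m\in T}\xi_m)]$ is a multilinear polynomial in the $\boldsymbol{g}_{m,i}$ that vanishes at the origin. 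Differentiating $\Psi_T$ at zero gradient reduces to the pivotal-vote probability $\mathbb{E}[sign(W+1)-sign(W-1)]$ for a balanced sum $W$ of the remaining $n-1$ votes, which equals $2\binom{n-1}{\lfloor(n-1)/2\rfloor}2^{-(n-1)}$ for both parities of $n$; dividing by the chain-rule factor $2A$ gives each first-order coefficient $\frac{1}{A}\binom{n-1}{\lfloor(n-1)/2\rfloor}2^{-(n-1)}$. Using $\sum_{|T|=n}\sum_{m\in T}\boldsymbol{g}_{m,i}=\binom{M-1}{n-1}\sum_m\boldsymbol{g}_{m,i}$ and the identity $n\binom{M}{n}=M\binom{M-1}{n-1}$ collapses the binomial factors into exactly $\mathcal{I}(A,B,M)\cdot\frac{1}{M}\sum_m\boldsymbol{g}_{m,i}$. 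I would then bound the quadratic-and-higher remainder of each $\Psi_T$, whose leading order is $\mathcal{O}(2^{-n}A^{-2})$, so that after multiplying by $(A/B)^n\binom{M}{n}$ the total remainder per coordinate is at most $\sum_{n=2}^M(1-A/B)^{M-n}\binom{M}{n}\mathcal{O}(A^{n-2}/B^n)$.

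With the expansion in hand, I would take expectation over the SGD noise. Because the first-order term is linear in the stochastic gradients and $\mathbb{E}[\frac{1}{M}\sum_m\boldsymbol{g}_{m,i}]=\nabla F(\boldsymbol{w}^{(t)})_i$ by Assumption \ref{A3}, the signal contributes exactly $\mathcal{I}(A,B,M)\,\nabla F(\boldsymbol{w}^{(t)})_i$, so that $\nabla F(\boldsymbol{w}^{(t)})_i\,\mathbb{E}[sign(S_i)]$ contributes $\mathcal{I}(A,B,M)(\nabla F(\boldsymbol{w}^{(t)})_i)^2$; summing over $i$ yields the drift lower bound $\mathcal{I}(A,B,M)\|\nabla F(\boldsymbol{w}^{(t)})\|_2^2$. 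The remainder is controlled by $|\nabla F(\boldsymbol{w}^{(t)})_i|<Q$, giving a total error of at most $Qd\sum_{n=2}^M(1-A/B)^{M-n}\binom{M}{n}\mathcal{O}(A^{n-2}/B^n)$. Substituting into the descent inequality, telescoping over $t=1,\dots,T$, dividing by $\eta\mathcal{I}(A,B,M)$ and inserting $\eta=1/\sqrt{TLd}$ reproduces the first two terms $\frac{(F(\boldsymbol{w}^{(0)})-F^*)\sqrt{Ld}}{\sqrt{T}}+\frac{\sqrt{Ld}}{2\sqrt{T}}$ together with the stated correction, i.e. the displayed bound.

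Finally, the asymptotic rate follows by orders of magnitude: with $A/B$ held fixed and $A,B=\mathcal{O}(\sqrt{T})$, the leading ($n=1$) contribution shows $\mathcal{I}(A,B,M)=\Theta(1/B)$ up to constants in $M$, so the $\Theta(1/\sqrt{T})$ optimization terms scale as $\mathcal{O}(B/\sqrt{T})$, while dividing the $\Theta(1/B^2)$ correction by $\mathcal{I}$ gives $\mathcal{O}(1/B)$; the hypotheses $B\ge 2A$ and $\lim_{T\to\infty}M/\sqrt{T}=0$ are what guarantee that the higher-order ($n\ge 2$) corrections remain negligible relative to $\mathcal{I}$ and that the remainder series is dominated by its first nonzero term. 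The main obstacle I anticipate is the combinatorial core of the second paragraph: verifying that the pivotal-vote derivative produces precisely $\binom{n-1}{\lfloor(n-1)/2\rfloor}/2^{n-1}$ uniformly in the parity of $n$, that the subset sums collapse to the $\binom{M-1}{n-1}$ and $\binom{M}{n}$ factors appearing in $\mathcal{I}$ and in the remainder, and --- most delicately --- that the Taylor remainder of the nonsmooth $sign$ composed with a discrete sum can be bounded at the claimed order $\mathcal{O}(A^{n-2}/B^n)$ even though the quantities $\boldsymbol{g}_{m,i}/A$ need not be asymptotically small.
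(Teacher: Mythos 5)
Your proposal is correct and follows essentially the same route as the paper: its Lemma~\ref{bsufficientlylarge} performs exactly your expansion---conditioning on the nonzero voters, extracting the term linear in the $u_m$'s with coefficient $\binom{n-1}{\lfloor(n-1)/2\rfloor}A^{n-1}\binom{M-1}{n-1}/(2^{n-1}B^{n})$ and bounding the rest by $\binom{M}{n}\mathcal{O}(A^{n-2}/B^{n})$---only organized as a polynomial expansion in $A$ rather than via your pivotal-vote derivative, after which the descent-lemma bookkeeping is identical. The difficulty you flag at the end is not a real obstacle: since the votes are discrete, each $\Psi_T$ is an exact multilinear polynomial of degree $n$ in the quantities $\boldsymbol{g}_{m,i}/(2A)$ (no Taylor remainder for the nonsmooth $sign$ is ever needed), and Assumption~\ref{A4} together with $A=\mathcal{O}(\sqrt{T})$ growing and $\lim_{T\to\infty}M/\sqrt{T}=0$ forces $nc/A\to 0$, so the degree-$\geq 2$ part is dominated by its quadratic term---which is precisely how the paper justifies the $\mathcal{O}(A^{n-2}/B^{n})$ remainder.
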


\begin{Remark}
Since $\frac{n^{n+1}e^{-n}\sqrt{2\pi}}{\sqrt{n}} \leq n! < \frac{n^{n+1}e^{-n}\sqrt{2\pi}}{\sqrt{n-1}}$ \cite{batir2008sharp}, we can readily show that ${n-1 \choose \lfloor\frac{n-1}{2}\rfloor}/2^{n-1} \geq 2(n-1)/[\sqrt{2\pi}(n-1)^{3/2}] = \mathcal{O}(1/\sqrt{n})$. Therefore, utilizing the fact that $M/\sqrt{n} \geq \sqrt{M}$, we have $\mathcal{I}(A,B,M) \geq \sum_{n=1}^{M}\big(1-\frac{A}{B}\big)^{M-n}{M-1 \choose n-1}\frac{A^{n-1}}{B^{n-1}}\mathcal{O}(\frac{\sqrt{M}}{B})=\mathcal{O}(\frac{\sqrt{M}}{B})$, which measures the impact of $M$.
\end{Remark}

Before proving Theorem \ref{SPconvergerateSchemeII2}, we first prove the following lemma.

\begin{Lemma}\label{bsufficientlylarge}
Let $u_{1},u_{2},\cdots,u_{M}$ be $M$ known and fixed real numbers with $|u_{m}| < c, \forall m$, and consider $\hat{u}_{m} = ternary(u_{m},A,B)$, $1\leq m \leq M$. Suppose that $B \geq 2A = \mathcal{O}(\sqrt{T})$ and $\lim_{T\rightarrow \infty}\frac{M}{\sqrt{T}} = 0$, then
\begin{equation}
\begin{split}
&P\bigg(sign\bigg(\frac{1}{M}\sum_{m=1}^{M}\hat{u}_m\bigg)=1\bigg) - P\bigg(sign\bigg(\frac{1}{M}\sum_{m=1}^{M}\hat{u}_m\bigg)=-1\bigg) \\
& = \sum_{n=1}^{M}\bigg(1-\frac{A}{B}\bigg)^{M-n}\bigg[\frac{{n-1 \choose \lfloor\frac{n-1}{2}\rfloor}A^{n-1}}{2^{n-1}B^{n}}{M-1 \choose n-1}\sum_{m=1}^{M}u_{m}\bigg] + \sum_{n=2}^{M}\bigg(1-\frac{A}{B}\bigg)^{M-n}{M \choose n}\mathcal{O}\bigg(\frac{A^{n-2}}{B^{n}}\bigg).
\end{split}
\end{equation}
\end{Lemma}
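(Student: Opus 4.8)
The plan is to reduce the claim to a conditional computation over the random support of the nonzero outputs of $ternary$. Writing $S=\sum_{m=1}^{M}\hat{u}_{m}\in\mathbb{Z}$, observe that $sign\big(\frac{1}{M}\sum_{m}\hat{u}_{m}\big)=sign(S)$, so the left-hand side equals $P(S>0)-P(S<0)=\mathbb{E}[sign(S)]$. The crucial structural fact is that each $\hat{u}_{m}$ equals $0$ with probability $1-\frac{A}{B}$ \emph{independently of} $u_{m}$, while conditioned on $\hat{u}_{m}\neq 0$ it is $\pm 1$ with $P(\hat{u}_{m}=1\mid \hat{u}_{m}\neq 0)=\frac{A+u_{m}}{2A}=\frac{1}{2}\big(1+\frac{u_{m}}{A}\big)$. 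I would therefore condition on the random support $\mathcal{S}\subseteq\{1,\dots,M\}$ of nonzero entries, which satisfies $P(\mathrm{supp}=\mathcal{S})=\big(\frac{A}{B}\big)^{|\mathcal{S}|}\big(1-\frac{A}{B}\big)^{M-|\mathcal{S}|}$, reducing the task to evaluating $g(\mathcal{S}):=\mathbb{E}\big[sign\big(\sum_{m\in\mathcal{S}}b_{m}\big)\big]$ for independent signs $b_{m}\in\{\pm 1\}$ with $\mathbb{E}[b_{m}]=\frac{u_{m}}{A}$.

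For a fixed $\mathcal{S}$ with $|\mathcal{S}|=n$, the idea is to expand the joint law $\prod_{m\in\mathcal{S}}\frac{1}{2}\big(1+b_{m}\frac{u_{m}}{A}\big)$ and sum it against $sign\big(\sum_{m}b_{m}\big)$, organizing the expansion by the subset $T\subseteq\mathcal{S}$ of indices that contribute a factor $b_{m}\frac{u_{m}}{A}$. The term $T=\emptyset$ vanishes by the symmetry $b\mapsto -b$ (so the $n=0$ case contributes nothing, matching the sums beginning at $n=1$); the linear terms $|T|=1$ give $C_{n}\sum_{m\in\mathcal{S}}\frac{u_{m}}{A}$, where $C_{n}=2^{-n}\sum_{b\in\{\pm1\}^{n}}sign\big(\sum_{k}b_{k}\big)\,b_{1}$ is independent of the chosen index by relabeling symmetry; and every term with $|T|\geq 2$ is a product of at most $|u_{m}/A|\leq 1$ factors against a coefficient in $[-1,1]$, contributing a remainder of order $\mathcal{O}(1/A^{2})$ per subset.

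The crux is the evaluation of $C_{n}$. Conditioning on $b_{1}$ gives $C_{n}=\frac{1}{2}\,\mathbb{E}\big[sign(1+S')-sign(-1+S')\big]$, where $S'$ is a sum of $n-1$ Rademacher variables; the bracketed quantity is nonzero only on $S'\in\{-1,0,1\}$, telescoping to $\tfrac12\big(2P(S'=0)+P(S'=1)+P(S'=-1)\big)$. Splitting on the parity of $n-1$ then collapses both cases — the even case via $P(S'=0)$ and the odd case via $P(S'=\pm1)$ — to the single central-binomial expression $C_{n}=\binom{n-1}{\lfloor (n-1)/2\rfloor}/2^{\,n-1}$. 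I expect this parity bookkeeping to be the main obstacle: one must check that the even and odd branches genuinely produce the same closed form, using $\binom{n-1}{n/2}=\binom{n-1}{\lfloor (n-1)/2\rfloor}$ when $n$ is even.

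Finally I would reassemble the pieces: multiply $g(\mathcal{S})$ by $P(\mathrm{supp}=\mathcal{S})$ and sum over all $\mathcal{S}$, grouping by $n=|\mathcal{S}|$. The counting identity $\sum_{|\mathcal{S}|=n}\sum_{m\in\mathcal{S}}u_{m}=\binom{M-1}{n-1}\sum_{m=1}^{M}u_{m}$ together with $\big(\frac{A}{B}\big)^{n}/A=A^{n-1}/B^{n}$ turns the linear part into exactly the first claimed sum, while the $\binom{M}{n}$ subsets of size $n$, each carrying an $\mathcal{O}(1/A^{2})$ remainder and weighted by $\big(\frac{A}{B}\big)^{n}\big(1-\frac{A}{B}\big)^{M-n}$, produce the error term $\sum_{n=2}^{M}\big(1-\frac{A}{B}\big)^{M-n}\binom{M}{n}\mathcal{O}\big(A^{n-2}/B^{n}\big)$. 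The decomposition itself is an exact identity given $|u_{m}|\leq A$; the hypotheses $B\geq 2A=\mathcal{O}(\sqrt{T})$ and $M/\sqrt{T}\to 0$ serve to guarantee that the remainder is asymptotically dominated by the linear term, a fact that will be invoked in the subsequent convergence analysis rather than in establishing the identity here.
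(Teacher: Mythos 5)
Your proposal is correct and follows essentially the same route as the paper: both condition on the (value-independent) support of the nonzero outputs, expand the conditional law in the degree of dependence on the $u_m$'s, kill the degree-zero term by sign symmetry, extract the central binomial coefficient from the linear term, and absorb everything of degree at least two into the $\mathcal{O}(A^{n-2}/B^{n})$ remainder before recombining over supports via $\binom{M-1}{n-1}$. The only cosmetic difference is that you evaluate the key coefficient as $C_{n}=\mathbb{E}[\mathrm{sign}(S)\,b_{1}]$ by conditioning on $b_{1}$, whereas the paper sums the coefficients $a_{n-1,H}$ of $A^{n-1}$ over $H$ explicitly in separate odd/even cases; both telescope to $\binom{n-1}{\lfloor (n-1)/2\rfloor}/2^{n-1}$.
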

\begin{proof}
According to the definition of the $ternary$ compressor, we have
\begin{equation}
\hat{u}_{m} = ternary(x,A,B) =
\begin{cases}
\hfill 1, \hfill \text{with probability $\frac{A+u_{m}}{2B}$},\\
\hfill 0, \hfill \text{with probability $1-\frac{A}{B}$},\\
\hfill -1, \hfill \text{with probability $\frac{A-u_{m}}{2B}$},\\
\end{cases}
\end{equation}

Suppose that $n \geq 2$ of the $\hat{u}_{m}$'s are non-zero and denote the set by $\mathcal{F}_{\neq 0}^{n}$. In this case, $sign\big(\frac{1}{M}\sum_{m=1}^{M}\hat{u}_m\big) = sign\big(\frac{1}{n}\sum_{i \in \mathcal{F}_{\neq 0}^{n}}\hat{u}_i\big)$.
Let $\hat{Z} = \sum_{i \in \mathcal{F}_{\neq 0}^{n}}\mathds{1}_{\hat{u}_{i}=1}$, then
\begin{equation}
\begin{split}
P\bigg(sign\bigg(\frac{1}{n}\sum_{i \in \mathcal{F}_{\neq 0}^{n}}\hat{u}_i\bigg)=1\bigg) &= P\bigg(\hat{Z} > \frac{n}{2}\bigg) + \frac{1}{2}P\bigg(\hat{Z} = \frac{n}{2}\bigg), \\
\end{split}
\end{equation}
in which we break the tie randomly. Particularly, there are two possible cases.

\textbf{Case 1: $n$ is odd}. In this case, $P\big(\hat{Z} = \frac{n}{2}\big) = 0$, and $P\big(\hat{Z} > \frac{n}{2}\big) = \sum_{H=\frac{n+1}{2}}^{n}P(\hat{Z} = H)$, where
\begin{equation}\label{Chapter4-SPhatZ}
\begin{split}
P(\hat{Z}=H) &= \frac{\sum_{\mathcal{A} \in \mathcal{F}_H}\prod_{k \in \mathcal{A}}(A+u_{k})\prod_{j \in \mathcal{F}_{\neq 0}^{n}\setminus\mathcal{A}}(A-u_{j})}{(2B)^{n}} = \frac{a_{n,H}A^{n} + a_{n-1,H}A^{n-1} + \cdots + a_{0,H}A^{0}}{(2B)^{n}},
\end{split}
\end{equation}
and
\begin{equation}\label{Chapter4-SPhatZ2}
\begin{split}
&\sum_{H = \frac{n+1}{2}}^{n}P(\hat{Z}=H) = \frac{\sum_{H = \frac{n+1}{2}}^{n}a_{n,H}A^{n}}{(2B)^{n}} + \frac{\sum_{H = \frac{n+1}{2}}^{n}a_{n-1,H}A^{n-1}}{(2B)^{n}} + \cdots + \frac{\sum_{H = \frac{n+1}{2}}^{n}a_{0,H}A^{0}}{(2B)^{n}},
\end{split}
\end{equation}
in which $F_H$ is the set of all subsets of $H$ integers that can be selected from $\mathcal{F}_{\neq 0}^{n}$; $a_{i,H}, \forall 0\leq i \leq n$ is some constant. It can be easily verified that $a_{n,H} = {n \choose H}$.

In particular, $\forall i$, we have
\begin{equation}
\begin{split}
&\sum_{\mathcal{A} \in F_H}\prod_{k \in \mathcal{A}}(A+u_{k})\prod_{j \in \mathcal{F}_{\neq 0}^{n}\setminus\mathcal{A}}(A-u_{j}) \\
&= (A+u_{i})\sum_{\mathcal{A} \in F_H, i\in \mathcal{A}}\prod_{k \in \mathcal{A}\setminus\{i\}}(A+u_{k})\prod_{j \in \mathcal{F}_{\neq 0}^{n}\setminus\mathcal{A}}(A-u_{j}) + (A-u_{i})\sum_{\mathcal{A} \in F_H, i\notin \mathcal{A}}\prod_{k \in \mathcal{A}}(A+u_{k})\prod_{j \in \mathcal{F}_{\neq 0}^{n}\setminus\mathcal{A},\{i\}}(A-u_{j}).
\end{split}
\end{equation}
As a result, when $1 \leq H \leq n-1$, the $u_{i}$ related term in $a_{n-1,H}$ is given by
\begin{equation}
\bigg[{n-1 \choose H-1} - {n-1 \choose H}\bigg]u_{i}.
\end{equation}
When $H = n$, the $u_{i}$ related term in $a_{n-1,H}$ is given by
\begin{equation}
\bigg[{n-1 \choose H-1}\bigg]u_{i}.
\end{equation}
When $H = 0$, the $u_{i}$ related term in $a_{n-1,H}$ is given by
\begin{equation}
\bigg[{n-1 \choose H}\bigg]u_{i}.
\end{equation}
By summing over $i$, we have
\begin{equation}
\begin{split}
a_{n-1,H} = \bigg[{n-1 \choose H-1} - {n-1 \choose H}\bigg]\sum_{i \in \mathcal{F}^{n}_{\neq 0}}u_{i}, ~~~~~\text{if}~~~ 1 \leq H \leq n-1,
\end{split}
\end{equation}
\begin{equation}
a_{n-1,H} = \bigg[{n-1 \choose H-1}\bigg]\sum_{i \in \mathcal{F}^{n}_{\neq 0}}u_{i}, ~~~~~\text{if}~~~H = n,
\end{equation}
and
\begin{equation}
a_{n-1,H} = -\bigg[{n-1 \choose H}\bigg]\sum_{i \in \mathcal{F}^{n}_{\neq 0}}u_{i}, ~~~~~\text{if}~~~H = 0.
\end{equation}

By summing over $H$, we have
\begin{equation}
\sum_{H = \frac{n+1}{2}}^{n}a_{n,H} = \sum_{H = \frac{n+1}{2}}^{n}{n \choose H} = 2^{n-1},
\end{equation}
\begin{equation}
\sum_{H = 0}^{\frac{n-1}{2}}a_{n,H} = \sum_{H = 0}^{\frac{n-1}{2}}{n \choose H} = 2^{n-1},
\end{equation}
\begin{equation}
\sum_{H = \frac{n+1}{2}}^{n}a_{n-1,H} = {n-1 \choose \frac{n-1}{2}}\sum_{i \in \mathcal{F}^{n}_{\neq 0}}u_{i},
\end{equation}
and
\begin{equation}
\sum_{H = 0}^{\frac{n-1}{2}}a_{n-1,H} = -{n-1 \choose \frac{n-1}{2}}\sum_{i \in \mathcal{F}^{n}_{\neq 0}}u_{i}.
\end{equation}
Following the same procedure, it can be shown that
\begin{equation}
\begin{split}
a_{n-v,H} = \left[\sum_{i=0}^{v}{v \choose v-i}{n-v \choose H-v+i}(-1)^{i}\right]\sum_{F_{v}}\prod_{j \in F_{v}}u_{j},
\end{split}
\end{equation}
in which $F_v$ is the set of all subsets of $v$ integers that can be selected from $\mathcal{F}_{\neq 0}^{n}$ and ${M-v \choose H-v+i} = 0$ if $H-v+i > M-v$. For $v > 0$, summing over $H$ yields
\begin{equation}
\begin{split}
&\sum_{H = \frac{n+1}{2}}^{n}a_{n-v,H} = \left[\sum_{i=0}^{v-1}\left[\sum_{j=0}^{i}{v \choose j}(-1)^{j}\right]{n-v \choose \frac{n+1}{2}-v+i} \right]\sum_{F_{v}}\prod_{j \in F_{v}}u_{j}.
\end{split}
\end{equation}

In particular,
\begin{equation}
\begin{split}
\sum_{H = \frac{n+1}{2}}^{n}a_{n-2,H} &=\left[{n-2 \choose \frac{n+1}{2}-2}-{n-2 \choose \frac{n+1}{2}-1}\right]\sum_{F_{2}}\prod_{j \in F_{2}}u_{j}=0,
\end{split}
\end{equation}
and
\begin{equation}
\begin{split}
\left|\sum_{H = \frac{n+1}{2}}^{n}a_{n-v,H}\right| &= \left|\left[\sum_{i=0}^{v-1}\left[\sum_{j=0}^{i}{v \choose j}(-1)^{j}\right]{n-v \choose \frac{n+1}{2}-v+i} \right]\sum_{F_{v}}\prod_{k \in F_{v}}u_{j}\right| \\
&\leq \left|\left[\sum_{i=0}^{v-1}\left[\sum_{j=0}^{i}{v \choose j}(-1)^{j}\right]{n-v \choose \frac{n+1}{2}-v+i} \right]{n \choose v}c^{v}\right|.
\end{split}
\end{equation}

Since $\lim_{T\rightarrow \infty}\frac{n}{\sqrt{T}} = 0$, $\sum_{H = \frac{n+1}{2}}^{n}P(\hat{Z}=H)$ is dominated by the first two terms in (\ref{Chapter4-SPhatZ2}) when $T$ is large enough. As a result,
\begin{equation}
\begin{split}
P\bigg(sign\bigg(\frac{1}{n}\sum_{i \in \mathcal{F}_{\neq 0}^{n}}\hat{u}_i\bigg)=1\bigg) &= P\bigg(\hat{Z} > \frac{n}{2}\bigg)  = \sum_{H = \frac{n+1}{2}}^{n}P(\hat{Z} = H) \\
&= \frac{2^{n-1}A^{n} + {n-1 \choose \frac{n-1}{2}}\sum_{i \in \mathcal{F}^{n}_{\neq 0}}u_{i}A^{n-1}}{(2B)^{n}} + \mathcal{O}\bigg(\frac{A^{n-2}}{B^{n}}\bigg)\\
& = \frac{A^{n}}{2B^{n}} + \frac{{n-1 \choose \frac{n-1}{2}}A^{n-1}}{2^{n}B^{n}}\sum_{i \in \mathcal{F}^{n}_{\neq 0}}u_{i} + \mathcal{O}\bigg(\frac{A^{n-2}}{B^{n}}\bigg).
\end{split}
\end{equation}

Similarly,
\begin{equation}
\begin{split}
P\bigg(sign\bigg(\frac{1}{n}\sum_{i \in \mathcal{F}_{\neq 0}^{n}}\hat{u}_i\bigg)= -1\bigg) &= P\bigg(\hat{Z} < \frac{n}{2}\bigg)  = \sum_{H = 0}^{\frac{n-1}{2}}P(\hat{Z} = H) \\
&= \frac{2^{n-1}A^{n} - {n-1 \choose \frac{n-1}{2}}\sum_{i \in \mathcal{F}^{n}_{\neq 0}}u_{i}A^{n-1}}{(2B)^{n}} + \mathcal{O}\bigg(\frac{A^{n-2}}{B^{n}}\bigg)\\
& = \frac{A^{n}}{2B^{n}} - \frac{{n-1 \choose \frac{n-1}{2}}A^{n-1}}{2^{n}B^{n}}\sum_{i \in \mathcal{F}^{n}_{\neq 0}}u_{i} + \mathcal{O}\bigg(\frac{A^{n-2}}{B^{n}}\bigg).
\end{split}
\end{equation}

Therefore,
\begin{equation}
\begin{split}
P\bigg(sign\bigg(\frac{1}{n}\sum_{i \in \mathcal{F}_{\neq 0}^{n}}\hat{u}_i\bigg)=1\bigg) - P\bigg(sign\bigg(\frac{1}{n}\sum_{i \in \mathcal{F}_{\neq 0}^{n}}\hat{u}_i\bigg)= -1\bigg) = \frac{{n-1 \choose \frac{n-1}{2}}A^{n-1}}{2^{n-1}B^{n}}\sum_{i \in \mathcal{F}^{n}_{\neq 0}}u_{i} + \mathcal{O}\bigg(\frac{A^{n-2}}{B^{n}}\bigg).
\end{split}
\end{equation}

\textbf{Case 2: $n$ is even.} In this case, $P\big(sign\big(\frac{1}{n}\sum_{i \in \mathcal{F}_{\neq 0}^{n}}\hat{u}_i\big)=1\big) - P\big(sign\big(\frac{1}{n}\sum_{i \in \mathcal{F}_{\neq 0}^{n}}\hat{u}_i\big)= -1\big) = P(\hat{Z} > \frac{n}{2}) - P(\hat{Z} < \frac{n}{2})$. Similarly,
\begin{equation}\label{Chapter4-SPhatZ2_2}
\begin{split}
P(\hat{Z}=H) &= \frac{\sum_{\mathcal{A} \in \mathcal{F}_H}\prod_{k \in \mathcal{A}}(A+u_{k})\prod_{j \in \mathcal{F}_{\neq 0}^{n}\setminus\mathcal{A}}(A-u_{j})}{(2B)^{n}} = \frac{a_{n,H}A^{n} + a_{n-1,H}A^{n-1} + \cdots + a_{0,H}A^{0}}{(2B)^{n}},
\end{split}
\end{equation}
and
\begin{equation}
\sum_{H = 0}^{\frac{n}{2}-1}a_{n,H} = \sum_{H = 0}^{\frac{n}{2}-1}{n \choose H} = \sum_{H = \frac{n}{2}+1}^{n}{n \choose H} = \sum_{H = \frac{n}{2}+1}^{n}a_{n,H},
\end{equation}
\begin{equation}
\sum_{H = \frac{n}{2}+1}^{n}a_{n-1,H} = {n-1 \choose \frac{n}{2}}\sum_{i \in \mathcal{F}^{n}_{\neq 0}}u_{i},
\end{equation}
\begin{equation}
\sum_{H = 0}^{\frac{n}{2}-1}a_{n-1,H} = -{n-1 \choose \frac{n}{2} - 1}\sum_{i \in \mathcal{F}^{n}_{\neq 0}}u_{i} = -{n-1 \choose \frac{n}{2}}\sum_{i \in \mathcal{F}^{n}_{\neq 0}}u_{i}.
\end{equation}
Following the same procedure as that when $n$ is odd, it can be shown that
\begin{equation}
\begin{split}
P\bigg(sign\bigg(\frac{1}{n}\sum_{i \in \mathcal{F}_{\neq 0}^{n}}\hat{u}_i\bigg)=1\bigg) - P\bigg(sign\bigg(\frac{1}{n}\sum_{i \in \mathcal{F}_{\neq 0}^{n}}\hat{u}_i\bigg)= -1\bigg) = \frac{{n-1 \choose \frac{n}{2}-1}A^{n-1}}{2^{n-1}B^{n}}\sum_{i \in \mathcal{F}^{n}_{\neq 0}}u_{i} + \mathcal{O}\bigg(\frac{A^{n-2}}{B^{n}}\bigg).
\end{split}
\end{equation}

Overall, we have
\begin{equation}
\begin{split}
P\bigg(sign\bigg(\frac{1}{n}\sum_{i \in \mathcal{F}_{\neq 0}^{n}}\hat{u}_i\bigg)=1\bigg) - P\bigg(sign\bigg(\frac{1}{n}\sum_{i \in \mathcal{F}_{\neq 0}^{n}}\hat{u}_i\bigg)= -1\bigg) = \frac{{n-1 \choose \lfloor\frac{n-1}{2}\rfloor}A^{n-1}}{2^{n-1}B^{n}}\sum_{i \in \mathcal{F}^{n}_{\neq 0}}u_{i} + \mathcal{O}\bigg(\frac{A^{n-2}}{B^{n}}\bigg).
\end{split}
\end{equation}

Then we consider the scenario $n < 2$. It is obvious that $P\big(sign\big(\frac{1}{n}\sum_{i \in \mathcal{F}_{\neq 0}^{n}}\hat{u}_i\big)=1\big) - P\big(sign\big(\frac{1}{n}\sum_{i \in \mathcal{F}_{\neq 0}^{n}}\hat{u}_i\big)= -1\big) = 0$ when $n = 0$. In addition, when $n=1$, we have
\begin{equation}
\begin{split}
P\bigg(sign\bigg(\frac{1}{n}\sum_{i \in \mathcal{F}_{\neq 0}^{1}}\hat{u}_i\bigg)=1\bigg) - P\bigg(sign\bigg(\frac{1}{n}\sum_{i \in \mathcal{F}_{\neq 0}^{1}}\hat{u}_i\bigg)= -1\bigg) = \sum_{i \in \mathcal{F}_{\neq 0}^{1}}\frac{u_{i}}{B}.
\end{split}
\end{equation}

Therefore,
\begin{equation}
\begin{split}
&P\bigg(sign\bigg(\frac{1}{M}\sum_{m=1}^{M}\hat{u}_m\bigg)=1\bigg) - P\bigg(sign\bigg(\frac{1}{M}\sum_{m=1}^{M}\hat{u}_m\bigg)=-1\bigg) \\
&= \sum_{n=2}^{M}\bigg(1-\frac{A}{B}\bigg)^{M-n}\sum_{\mathcal{F}_{\neq 0}^{n}}\bigg[\frac{{n-1 \choose \lfloor\frac{n-1}{2}\rfloor}A^{n-1}}{2^{n-1}B^{n}}\sum_{i \in \mathcal{F}^{n}_{\neq 0}}u_{i} + \mathcal{O}\bigg(\frac{A^{n-2}}{B^{n}}\bigg)\bigg] + \bigg(1-\frac{A}{B}\bigg)^{M-1}\sum_{\mathcal{F}_{\neq 0}^{1}}\bigg[\frac{1}{B}\sum_{i \in \mathcal{F}^{1}_{\neq 0}}u_{i}\bigg]\\
& = \sum_{n=2}^{M}\bigg(1-\frac{A}{B}\bigg)^{M-n}\bigg[\frac{{n-1 \choose \lfloor\frac{n-1}{2}\rfloor}A^{n-1}}{2^{n-1}B^{n}}{M-1 \choose n-1}\sum_{m=1}^{M}u_{m} + {M \choose n}\mathcal{O}\bigg(\frac{A^{n-2}}{B^{n}}\bigg)\bigg] + \bigg(1-\frac{A}{B}\bigg)^{M-1}\bigg[\frac{1}{B}\sum_{m=1}^{M}u_{m}\bigg]\\
& = \sum_{n=1}^{M}\bigg(1-\frac{A}{B}\bigg)^{M-n}\bigg[\frac{{n-1 \choose \lfloor\frac{n-1}{2}\rfloor}A^{n-1}}{2^{n-1}B^{n}}{M-1 \choose n-1}\sum_{m=1}^{M}u_{m}\bigg] + \sum_{n=2}^{M}\bigg(1-\frac{A}{B}\bigg)^{M-n}{M \choose n}\mathcal{O}\bigg(\frac{A^{n-2}}{B^{n}}\bigg).
\end{split}
\end{equation}
which completes the proof.
\end{proof}

Now, we are ready to prove Theorem \ref{SPconvergerateSchemeII2}.

\begin{proof}
According to Assumption \ref{A2}, we have
\begin{equation}
\begin{split}
&F(\boldsymbol{w}^{(t+1)}) - F(\boldsymbol{w}^{(t)}) \\
&\leq \langle\nabla F(\boldsymbol{w}^{(t)}), \boldsymbol{w}^{(t+1)}-\boldsymbol{w}^{(t)}\rangle + \frac{L}{2}||\boldsymbol{w}^{(t+1)}_{i}-\boldsymbol{w}^{(t)}_{i}||_{2}^2 \\
& =-\eta \left\langle\nabla F(\boldsymbol{w}^{(t)}), sign\bigg(\frac{1}{M}\sum_{m=1}^{M}ternary(\boldsymbol{g}_{m}^{(t)},A,B)\bigg)\right\rangle +\frac{L}{2}\bigg|\bigg|\eta sign\bigg(\frac{1}{M}\sum_{m=1}^{M}ternary(\boldsymbol{g}_{m}^{(t)},A,B)\bigg)\bigg|\bigg|^2 \\
& \leq -\eta \left\langle\nabla F(\boldsymbol{w}^{(t)}), sign\bigg(\frac{1}{M}\sum_{m=1}^{M}ternary(\boldsymbol{g}_{m}^{(t)},A,B)\bigg)\right\rangle + \frac{Ld\eta^2}{2},
\end{split}
\end{equation}
where $\eta$ is the learning rate. Taking expectations on both sides yields

\begin{equation}
\begin{split}
&\mathbb{E}[F(\boldsymbol{w}^{(t+1)}) - F(\boldsymbol{w}^{(t)})] \\
&\leq \frac{Ld\eta^2}{2} -\eta \sum_{i=1}^{d}\nabla F(\boldsymbol{w}^{(t)})_{i}\times\\
&\mathbb{E}\bigg[P\bigg(sign\bigg(\frac{1}{M}\sum_{m=1}^{M}ternary(\boldsymbol{g}_{m,i}^{(t)},A,B)\bigg)=1\bigg) - P\bigg(sign\bigg(\frac{1}{M}\sum_{m=1}^{M}ternary(\boldsymbol{g}_{m,i}^{(t)},A,B)\bigg) = -1\bigg)\bigg] \\
&= \frac{Ld\eta^2}{2} -\eta \sum_{i=1}^{d}\nabla F(\boldsymbol{w}^{(t)})_{i}\sum_{n=1}^{M}\bigg(1-\frac{A}{B}\bigg)^{M-n}\mathbb{E}\bigg[\frac{{n-1 \choose \lfloor\frac{n-1}{2}\rfloor}A^{n-1}}{2^{n-1}B^{n}}{M-1 \choose n-1}\sum_{m=1}^{M}\boldsymbol{g}_{m,i}^{(t)}\bigg] \\
&-\eta \sum_{i=1}^{d}\nabla F(\boldsymbol{w}^{(t)})_{i}\sum_{n=2}^{M}\bigg(1-\frac{A}{B}\bigg)^{M-n}{M \choose n}\mathcal{O}\bigg(\frac{A^{n-2}}{B^{n}}\bigg) \\
&=\frac{Ld\eta^2}{2} - \eta\sum_{n=1}^{M}\bigg(1-\frac{A}{B}\bigg)^{M-n}\bigg[\frac{{n-1 \choose \lfloor\frac{n-1}{2}\rfloor}MA^{n-1}}{2^{n-1}B^{n}}{M-1 \choose n-1}\bigg]||\nabla F(\boldsymbol{w}^{(t)})||_{2}^{2}\\
&+\eta \sum_{i=1}^{d}|\nabla F(\boldsymbol{w}^{(t)})_{i}|\sum_{n=2}^{M}\bigg(1-\frac{A}{B}\bigg)^{M-n}\bigg[{M \choose n}\mathcal{O}\bigg(\frac{A^{n-2}}{B^{n}}\bigg)\bigg].
\end{split}
\end{equation}

Adjusting the above inequality and averaging both sides over $t=1,2,\cdots, T$ yields
\begin{equation}
\color{black}
\begin{split}
&\frac{1}{T}\sum_{t=1}^{T}\eta\sum_{n=1}^{M}\bigg(1-\frac{A}{B}\bigg)^{M-n}\bigg[\frac{{n-1 \choose \lfloor\frac{n-1}{2}\rfloor}MA^{n-1}}{2^{n-1}B^{n}}{M-1 \choose n-1}\bigg]||\nabla F(\boldsymbol{w}^{(t)})||_{2}^{2} \\
&\leq \frac{\mathbb{E}[F(\boldsymbol{w}^{(0)}) - F(\boldsymbol{w}^{(t+1)})]}{T} + \frac{Ld\eta^2}{2}+ \frac{\eta}{T}\sum_{t=1}^{T}\sum_{i=1}^{d}\nabla F(\boldsymbol{w}^{(t)})_{i}\sum_{n=2}^{M}\bigg(1-\frac{A}{B}\bigg)^{M-n}\bigg[{M \choose n}\mathcal{O}\bigg(\frac{A^{n-2}}{B^{n}}\bigg)\bigg] \\
&\leq \frac{\mathbb{E}[F(\boldsymbol{w}^{(0)}) - F(\boldsymbol{w}^{(t+1)})]}{T} + \frac{Ld\eta^2}{2}+ \eta\sum_{n=2}^{M}\bigg(1-\frac{A}{B}\bigg)^{M-n}\bigg[{M \choose n}\mathcal{O}\bigg(\frac{A^{n-2}}{B^{n}}\bigg)\bigg]Qd. \\
\end{split}
\end{equation}

Let $\eta=\frac{1}{\sqrt{LTd}}$ and $\mathcal{I}(A,B,M) = \sum_{n=1}^{M}(1-\frac{A}{B})^{M-n}\big[\frac{{n-1 \choose \lfloor\frac{n-1}{2}\rfloor}MA^{n-1}{M-1 \choose n-1}}{2^{n-1}B^{n}}\big]$. Dividing both sides by $\mathcal{I}(A,B,M)$ gives
\begin{equation}
\begin{split}
\frac{1}{T}\sum_{t=1}^{T}||\nabla F(\boldsymbol{w}^{(t)})||_{2}^{2} &\leq
\frac{\mathbb{E}[F(\boldsymbol{w}^{(0)}) - F(\boldsymbol{w}^{(t+1)})]\sqrt{Ld}}{\sqrt{T}\mathcal{I}(A,B,M)} + \frac{\sqrt{Ld}}{2\sqrt{T}\mathcal{I}(A,B,M)}+ \frac{\sum_{n=2}^{M}\big(1-\frac{A}{B}\big)^{M-n}\bigg[{M \choose n}\mathcal{O}\bigg(\frac{A^{n-2}}{B^{n}}\bigg)\bigg]Qd}{\mathcal{I}(A,B,M)}\\
&\leq \frac{(F(\boldsymbol{w}^{(0)}) - F^{*})\sqrt{Ld}}{\sqrt{T}\mathcal{I}(A,B,M)} + \frac{\sqrt{Ld}}{2\sqrt{T}\mathcal{I}(A,B,M)}+ \frac{\sum_{n=2}^{M}\big(1-\frac{A}{B}\big)^{M-n}\bigg[{M \choose n}\mathcal{O}\bigg(\frac{A^{n-2}}{B^{n}}\bigg)\bigg]Qd}{\mathcal{I}(A,B,M)}\\
&\leq \mathcal{O}\bigg(\frac{B}{\sqrt{T}}\bigg) + \mathcal{O}\bigg(\frac{1}{B}\bigg).
\end{split}
\end{equation}
which completes the proof.
\end{proof}

\subsection{Proof of Theorem \ref{SPByzantine1}}
\begin{Theorem}\label{SPByzantine1}
Suppose Assumptions \ref{A1}-\ref{A4} are satisfied, $|\nabla F(\boldsymbol{w}^{(t)})_{i}| < Q, \forall i, t$, and the learning rate is set as $\eta=\frac{1}{\sqrt{TLd}}$, then by running Algorithm \ref{DPSGDAlgorithm} with \textit{TernaryVote} and $\mathcal{N}_{t} = \mathcal{M} \cup \mathcal{K}$ for $T$ iterations, we have
\begin{equation}\label{APPByzantineConvergenceEquation}
\begin{split}
\frac{1}{T}\sum_{t=1}^{T}||\nabla F(\boldsymbol{w}^{(t)})||_{1} &\leq \frac{(F(\boldsymbol{w}^{(0)}) - F^{*})\sqrt{Ld}}{\sqrt{T}} + \frac{\sqrt{Ld}}{2\sqrt{T}}+ \frac{4\sqrt{M}||\bar{\boldsymbol{\sigma}}||_{1}}{M+K} \\
&+ \frac{4K(Q+A)d}{M+K} + \frac{2Bd}{\sqrt{M+K+1}}\big(1-\frac{1}{M+K+1}\big)^{\frac{M+K}{2}}\\
&\leq \mathcal{O}\left(\frac{1}{\sqrt{T}}\right) + \mathcal{O}\left(\frac{B}{\sqrt{M+K}}\right) + \mathcal{O}\left(\frac{\sqrt{M}+K}{M+K}\right).
\end{split}
\end{equation}\end{Theorem}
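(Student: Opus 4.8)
The plan is to mirror the attacker-free argument of Theorem \ref{SPconvergerateSchemeII}, carefully re-accounting for the enlarged participant set $\mathcal{N}_t=\mathcal{M}\cup\mathcal{K}$ of size $M+K$ and for the bias that the arbitrary attacker votes inject into the empirical average. First I would invoke Assumption \ref{A2}: since \textit{TernaryVote} updates via $\boldsymbol{w}^{(t+1)}-\boldsymbol{w}^{(t)}=-\eta\,sign(\cdot)\in\{-\eta,0,\eta\}^{d}$, the quadratic term is at most $\frac{Ld\eta^{2}}{2}$, and the linear term expands as $-\eta||\nabla F(\boldsymbol{w}^{(t)})||_{1}+2\eta\sum_{i=1}^{d}|\nabla F(\boldsymbol{w}^{(t)})_{i}|\,\mathds{1}_{\{\text{mismatch}\}}$, where the mismatch event is that the aggregated sign differs from $sign(\nabla F(\boldsymbol{w}^{(t)})_{i})$. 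Taking expectations turns the indicator into the sign-error probability, so the whole task reduces to bounding $\sum_{i}|\nabla F(\boldsymbol{w}^{(t)})_{i}|\,P(\text{mismatch})$.

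Then I would introduce the empirical coordinate average over all participants, $\bar{g}_{i}^{(t)}=\frac{1}{M+K}\sum_{j\in\mathcal{M}\cup\mathcal{K}}\boldsymbol{g}_{j,i}^{(t)}$ (attacker gradients included), and split the mismatch via the triangle inequality into a compression-error event $sign(\text{aggregate})\neq sign(\bar{g}_{i}^{(t)})$ and a bias event $sign(\bar{g}_{i}^{(t)})\neq sign(\nabla F(\boldsymbol{w}^{(t)})_{i})$. The first is controlled by the ternary analogue of the majority-vote lemma, Lemma \ref{LemmaProbofWrongGenericTernary}, applied with $M+K$ votes, giving $\big(1-|\bar{g}_{i}^{(t)}|^{2}/B^{2}\big)^{(M+K)/2}$; applying $|\nabla F(\boldsymbol{w}^{(t)})_{i}|\le|\nabla F(\boldsymbol{w}^{(t)})_{i}-\bar{g}_{i}^{(t)}|+|\bar{g}_{i}^{(t)}|$ and maximizing $h(x)=x(1-x^{2}/B^{2})^{(M+K)/2}$ over $x$, exactly as in the attacker-free proof (the maximizer is now $x=B/\sqrt{M+K+1}$), yields the $\frac{2Bd}{\sqrt{M+K+1}}(1-\frac{1}{M+K+1})^{(M+K)/2}$ term. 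The second event I would control by Markov's inequality: a sign disagreement forces $|\bar{g}_{i}^{(t)}-\nabla F(\boldsymbol{w}^{(t)})_{i}|\ge|\nabla F(\boldsymbol{w}^{(t)})_{i}|$, so $|\nabla F(\boldsymbol{w}^{(t)})_{i}|\,P(\cdot)\le\mathbb{E}|\bar{g}_{i}^{(t)}-\nabla F(\boldsymbol{w}^{(t)})_{i}|$.

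Both residual pieces then hinge on bounding $\sum_{i}\mathbb{E}|\bar{g}_{i}^{(t)}-\nabla F(\boldsymbol{w}^{(t)})_{i}|$, which is where the Byzantine bookkeeping lives. I would decompose this difference into three parts: a zero-mean stochastic part $\frac{1}{M+K}\sum_{m\in\mathcal{M}}(\boldsymbol{g}_{m,i}^{(t)}-\nabla f_{m}(\boldsymbol{w}^{(t)})_{i})$, whose $L_{1}$ mass is at most $\frac{\sqrt{M}||\bar{\boldsymbol{\sigma}}||_{1}}{M+K}$ by Jensen's inequality and Assumption \ref{A3} (the variance of the $M$-term sum rescaled by $(M+K)^{-2}$ is $M\sigma_{i}^{2}/(M+K)^{2}$); a rescaling bias $\frac{K}{M+K}\nabla F(\boldsymbol{w}^{(t)})_{i}$ bounded by $\frac{KQ}{M+K}$ per coordinate using $|\nabla F(\boldsymbol{w}^{(t)})_{i}|<Q$; and the attacker injection $\frac{1}{M+K}\sum_{k\in\mathcal{K}}\boldsymbol{g}_{k,i}^{(t)}$ bounded by $\frac{Kc}{M+K}\le\frac{KA}{M+K}$ via Assumption \ref{A4} and $A\ge c$. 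Summed over the $d$ coordinates this gives $\frac{\sqrt{M}||\bar{\boldsymbol{\sigma}}||_{1}}{M+K}+\frac{K(Q+A)d}{M+K}$, which appears twice (once per split event) and produces the factor-$4$ constants in (\ref{APPByzantineConvergenceEquation}).

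Collecting everything, telescoping over $t=1,\dots,T$, dividing by $\eta=\frac{1}{\sqrt{TLd}}$, and using $F^{*}\le F(\boldsymbol{w}^{(T+1)})$ yields the stated bound, with the $\mathcal{O}(1/\sqrt{T})$ rate coming from the descent and smoothness terms, $\mathcal{O}(B/\sqrt{M+K})$ from the majority-vote term, and $\mathcal{O}((\sqrt{M}+K)/(M+K))$ from the variance plus bias-and-attacker terms. The main obstacle is precisely this three-way decomposition: unlike the attacker-free case, $\bar{g}_{i}^{(t)}$ is no longer an (approximately) unbiased estimate of $\nabla F(\boldsymbol{w}^{(t)})_{i}$, so one must separately track the variance reduction over the honest workers, the $\mathcal{O}(K/(M+K))$ dilution bias from averaging over $M+K$ rather than $M$ votes, and the worst-case attacker mass, and verify that their $L_{1}$ aggregates combine into the advertised $\mathcal{O}((\sqrt{M}+K)/(M+K))$ term rather than something larger.
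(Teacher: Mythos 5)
Your proposal is correct and follows essentially the same route as the paper's own proof: the same smoothness expansion, the same triangle-inequality split of the sign-mismatch event, Lemma \ref{LemmaProbofWrongGenericTernary} with $M+K$ votes, the maximization of $h(x)=x(1-x^{2}/B^{2})^{(M+K)/2}$ at $x=B/\sqrt{M+K+1}$, and the same decomposition of $\mathbb{E}|\bar{g}_{i}^{(t)}-\nabla F(\boldsymbol{w}^{(t)})_{i}|$ into the honest-worker variance term $\frac{\sqrt{M}||\bar{\boldsymbol{\sigma}}||_{1}}{M+K}$ plus the $\frac{K(Q+A)d}{M+K}$ dilution-and-attacker term. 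Your explicit three-way split of the latter is only a cosmetic refinement of the paper's two-way split, so no substantive difference remains.
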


\begin{proof}
Let $\frac{1}{M+K}\left[\sum_{m\in\mathcal{M}}(\boldsymbol{g}_{m}^{(t)})+\sum_{k\in\mathcal{K}}(\boldsymbol{g}_{k}^{(t)})\right] = \bar{\boldsymbol{g}}_{\mathcal{M}+\mathcal{K}}^{(t)}$ and

$\frac{1}{M+K}\left[\sum_{m\in \mathcal{M}}ternary(\boldsymbol{g}_{m}^{(t)},A,B)+\sum_{k\in \mathcal{K}}byzantine(\boldsymbol{g}_{k}^{(t)},A,B)\right] = \hat{\boldsymbol{g}}_{\mathcal{M}+\mathcal{K}}^{(t)}$ .

According to Assumption \ref{A2}, we have
\begin{equation}
\begin{split}
F(\boldsymbol{w}^{(t+1)}) - F(\boldsymbol{w}^{(t)}) &\leq \langle\nabla F(\boldsymbol{w}^{(t)}), \boldsymbol{w}^{(t+1)}-\boldsymbol{w}^{(t)}\rangle + \frac{L}{2}||\boldsymbol{w}^{(t+1)}-\boldsymbol{w}^{(t)}||_{2}^2 \\
& =-\eta \left\langle\nabla F(\boldsymbol{w}^{(t)}), sign\left(\hat{\boldsymbol{g}}_{\mathcal{M}+\mathcal{K}}^{(t)}\right)\right\rangle +\frac{L}{2}\bigg|\bigg|\eta sign\left(\hat{\boldsymbol{g}}_{\mathcal{M}+\mathcal{K},i}^{(t)}\right)\bigg|\bigg|^2 \\
& \leq -\eta \left\langle\nabla F(\boldsymbol{w}^{(t)}), sign\left(\hat{\boldsymbol{g}}_{\mathcal{M}+\mathcal{K}}^{(t)}\right)\right\rangle + \frac{Ld\eta^2}{2} \\
& = -\eta ||\nabla F(\boldsymbol{w}^{(t)})||_{1} + \frac{Ld\eta^2}{2} + 2\eta\sum_{i=1}^{d}|\nabla F(\boldsymbol{w}^{(t)})_{i}|\times\mathds{1}_{sign(\hat{\boldsymbol{g}}_{\mathcal{M}+\mathcal{K},i}^{(t)})\neq sign(\nabla F(\boldsymbol{w}^{(t)})_{i})},
\end{split}
\end{equation}
where $\nabla F(\boldsymbol{w}^{(t)})_{i}$ is the $i$-th entry of $\nabla F(\boldsymbol{w}^{(t)})$ and $\eta$ is the learning rate. Taking expectations on both sides yields
\begin{equation}\label{Byzantineconvergencee1}
\begin{split}
&\mathbb{E}[F(\boldsymbol{w}^{(t+1)}) - F(\boldsymbol{w}^{(t)})] \\
&\leq -\eta ||\nabla F(\boldsymbol{w}^{(t)})||_{1} + \frac{Ld\eta^2}{2} +2\eta\sum_{i=1}^{d}\mathbb{E}\bigg[|\nabla F(\boldsymbol{w}^{(t)})_{i}|P\bigg(sign\left(\hat{\boldsymbol{g}}_{\mathcal{M}+\mathcal{K},i}^{(t)}\right)\neq sign(\nabla F(\boldsymbol{w}^{(t)})_{i})\bigg)\bigg]\\
&\leq -\eta ||\nabla F(\boldsymbol{w}^{(t)})||_{1} + \frac{Ld\eta^2}{2} +2\eta\sum_{i=1}^{d}\mathbb{E}\bigg[|\nabla F(\boldsymbol{w}^{(t)})_{i}|\bigg[P\bigg(sign\left(\hat{\boldsymbol{g}}_{\mathcal{M}+\mathcal{K},i}^{(t)}\right)\neq sign\left(\bar{\boldsymbol{g}}_{\mathcal{M}+\mathcal{K},i}^{(t)}\right)\bigg)\\
&+P\bigg(sign\left(\bar{\boldsymbol{g}}_{\mathcal{M}+\mathcal{K},i}^{(t)}\right)\neq sign(\nabla F(\boldsymbol{w}^{(t)})_{i})\bigg)\bigg]\bigg]\\
&\leq -\eta ||\nabla F(\boldsymbol{w}^{(t)})||_{1} + \frac{Ld\eta^2}{2} +2\eta\sum_{i=1}^{d}\mathbb{E}\bigg[|\nabla F(\boldsymbol{w}^{(t)})_{i}|\bigg(1-\frac{|\bar{\boldsymbol{g}}_{\mathcal{M}+\mathcal{K},i}^{(t)}|^2}{B^2}\bigg)^{\frac{M+K}{2}}\bigg]\\
&+2\eta\sum_{i=1}^{d}\mathbb{E}\bigg[|\nabla F(\boldsymbol{w}^{(t)})_{i}|P\bigg(sign\left(\bar{\boldsymbol{g}}_{\mathcal{M}+\mathcal{K},i}^{(t)}\right)\neq sign(\nabla F(\boldsymbol{w}^{(t)})_{i})\bigg)\bigg]\\
&\leq -\eta ||\nabla F(\boldsymbol{w}^{(t)})||_{1} + \frac{Ld\eta^2}{2} + 2\eta\sum_{i=1}^{d}\mathbb{E}\bigg[\bigg|\nabla F(\boldsymbol{w}^{(t)})_{i}-\bar{\boldsymbol{g}}_{\mathcal{M}+\mathcal{K},i}^{(t)}\bigg|\bigg(1-\frac{|\bar{\boldsymbol{g}}_{\mathcal{M}+\mathcal{K},i}^{(t)}|^2}{B^2}\bigg)^{\frac{M+K}{2}}\bigg] \\
&+2\eta\sum_{i=1}^{d}\mathbb{E}\bigg[\bigg|\bar{\boldsymbol{g}}_{\mathcal{M}+\mathcal{K},i}^{(t)}\bigg|\bigg(1-\frac{|\bar{\boldsymbol{g}}_{\mathcal{M}+\mathcal{K},i}^{(t)}|^2}{B^2}\bigg)^{\frac{M+K}{2}}\bigg]\\
&+2\eta\sum_{i=1}^{d}\mathbb{E}\bigg[|\nabla F(\boldsymbol{w}^{(t)})_{i}|P\bigg(sign\left(\bar{\boldsymbol{g}}_{\mathcal{M}+\mathcal{K},i}^{(t)}\right)\neq sign\left(\nabla F(\boldsymbol{w}^{(t)})\right)_{i}\bigg)\bigg].
\end{split}
\end{equation}

In addition,
\begin{equation}\label{Byzantineconnect_to_variance}
\begin{split}
&\sum_{i=1}^{d}\mathbb{E}\bigg[\bigg|\nabla F(\boldsymbol{w}^{(t)})_{i}-\bar{\boldsymbol{g}}_{\mathcal{M}+\mathcal{K},i}^{(t)}\bigg|\bigg] \\
&\leq \sum_{i=1}^{d}\mathbb{E}\bigg[\bigg|\frac{M}{M+K}\nabla F(\boldsymbol{w}^{(t)})_{i}-\frac{1}{M+K}\sum_{m\in\mathcal{M}}\boldsymbol{g}_{m,i}^{(t)}\bigg|\bigg] + \sum_{i=1}^{d}\mathbb{E}\bigg[\bigg|\frac{K}{M+K}\nabla F(\boldsymbol{w}^{(t)})_{i}-\frac{1}{M+K}\sum_{k\in\mathcal{K}}\boldsymbol{g}_{k,i}^{(t)}\bigg|\bigg]\\
&\leq \sum_{i=1}^{d}\sqrt{\bigg[\mathbb{E}\bigg[\bigg|\frac{M}{M+K}\nabla F(\boldsymbol{w}^{(t)})_{i}-\frac{1}{M+K}\sum_{m\in\mathcal{M}}\boldsymbol{g}_{m,i}^{(t)}\bigg|\bigg]\bigg]^2} + \frac{K(Q+A)d}{M+K}\\
&\leq \sum_{i=1}^{d}\sqrt{\mathbb{E}\bigg[\bigg|\frac{M}{M+K}\nabla F(\boldsymbol{w}^{(t)})_{i}-\frac{1}{M+K}\sum_{m\in\mathcal{M}}\boldsymbol{g}_{m,i}^{(t)}\bigg|^2\bigg]}+ \frac{K(Q+A)d}{M+K}\\
&=\sum_{i=1}^{d}\sqrt{\mathbb{E}\bigg[\bigg|\frac{1}{M+K}\sum_{m\in\mathcal{M}}\nabla f_{m}(\boldsymbol{w}^{(t)})_{i}-\frac{1}{M+K}\sum_{m\in\mathcal{M}}\boldsymbol{g}_{m,i}^{(t)}\bigg|^2\bigg]}+ \frac{K(Q+A)d}{M+K}\\
&=\sum_{i=1}^{d}\sqrt{\frac{1}{(M+K)^2}\sum_{m\in\mathcal{M}}\mathbb{E}[|\nabla f_{m}(\boldsymbol{w}^{(t)})_{i}-\boldsymbol{g}_{m,i}^{(t)}|^2]} + \frac{K(Q+A)d}{M+K}\\
&\leq \sum_{i=1}^{d}\frac{\sqrt{M\sigma_{i}^{2}}}{M+K}  + \frac{K(Q+A)d}{M+K} = \frac{\sqrt{M}||\sigma||_1}{M+K} + \frac{K(Q+A)d}{M+K}.
\end{split}
\end{equation}
\begin{equation}\label{SPpi2}
\begin{split}
&\sum_{i=1}^{d}|\nabla F(\boldsymbol{w}^{(t)})_{i}|P\bigg(sign\left(\nabla F(\boldsymbol{w}^{(t)})\right)_{i}\neq sign\left(\bar{\boldsymbol{g}}_{\mathcal{M}+\mathcal{K},i}^{(t)}\right)\bigg) \\
&\leq \sum_{i=1}^{d}|\nabla F(\boldsymbol{w}^{(t)})_{i}|P\bigg(\bigg|\nabla F(\boldsymbol{w}^{(t)})_{i}-\bar{\boldsymbol{g}}_{\mathcal{M}+\mathcal{K},i}^{(t)}\bigg|\geq \bigg|\nabla F(\boldsymbol{w}^{(t)})_{i}\bigg|\bigg)\\
&\leq \sum_{i=1}^{d}|\nabla F(\boldsymbol{w}^{(t)})_{i}|\frac{\mathbb{E}[|\nabla F(\boldsymbol{w}^{(t)})_{i}-\bar{\boldsymbol{g}}_{\mathcal{M}+\mathcal{K},i}^{(t)}|]}{|\nabla F(\boldsymbol{w}^{(t)})_{i}|} \\
&\leq \mathbb{E}[|\nabla F(\boldsymbol{w}^{(t)})_{i}-\bar{\boldsymbol{g}}_{\mathcal{M}+\mathcal{K},i}^{(t)}|] \\
&\leq \frac{\sqrt{M}||\boldsymbol{\bar{\sigma}}||_1}{M+K} + \frac{K(Q+A)d}{M+K}.
\end{split}
\end{equation}

Similar to (\ref{upperterm}), we can readily show that
\begin{equation}\label{Byzantineupperterm}
\mathbb{E}\bigg[\bigg|\bar{\boldsymbol{g}}_{\mathcal{M}+\mathcal{K},i}^{(t)}\bigg|\bigg(1-\frac{|\bar{\boldsymbol{g}}_{\mathcal{M}+\mathcal{K},i}^{(t)}|^2}{B^2}\bigg)^{\frac{M+K}{2}}\bigg] \leq \frac{B}{M+K+1}\bigg(1-\frac{1}{M+K+1}\bigg)^{\frac{M+K}{2}}.
\end{equation}

Plugging (\ref{Byzantineconnect_to_variance}), (\ref{SPpi2}), and (\ref{Byzantineupperterm}) into (\ref{Byzantineconvergencee1}) yields

\begin{equation}
\begin{split}
\mathbb{E}[F(\boldsymbol{w}^{(t+1)}) - F(\boldsymbol{w}^{(t)})] &\leq -\eta ||\nabla F(\boldsymbol{w}^{(t)})||_{1} + \frac{Ld\eta^2}{2} + 4\eta\left[\frac{\sqrt{M}||\bar{\boldsymbol{\sigma}}||_{1}}{M+K} + \frac{K(Q+A)d}{M+K}\right] \\
&+ \frac{2\eta Bd}{\sqrt{M+K+1}}\bigg(1-\frac{1}{M+K+1}\bigg)^{\frac{M+K}{2}}.
\end{split}
\end{equation}

Adjusting the above inequality and averaging both sides over $t=1,2,\cdots,T$, we can obtain
\begin{equation}
\color{black}
\begin{split}
\frac{1}{T}\sum_{t=1}^{T}\eta||\nabla F(\boldsymbol{w}^{(t)})||_{1} &\leq \frac{\mathbb{E}[F(\boldsymbol{w}^{(0)}) - F(\boldsymbol{w}^{(t+1)})]}{T} + \frac{Ld\eta^2}{2} + 4\eta\left[\frac{\sqrt{M}||\bar{\boldsymbol{\sigma}}||_{1}}{M+K} + \frac{K(Q+A)d}{M+K}\right] \\
&+ \frac{2\eta Bd}{\sqrt{M+K+1}}\bigg(1-\frac{1}{M+K+1}\bigg)^{\frac{M+K}{2}}.
\end{split}
\end{equation}
Letting $\eta=\frac{1}{\sqrt{TLd}}$ and dividing both sides by $\eta$ gives
\begin{equation}
\begin{split}
&\frac{1}{T}\sum_{t=1}^{T}||\nabla F(\boldsymbol{w}^{(t)})||_{1} \\
&\leq
\frac{\mathbb{E}[F(\boldsymbol{w}^{(0)}) - F(\boldsymbol{w}^{(t+1)})]\sqrt{Ld}}{\sqrt{T}} + \frac{\sqrt{Ld}}{2\sqrt{T}}+ 4\left[\frac{\sqrt{M}||\bar{\boldsymbol{\sigma}}||_{1}}{M+K} + \frac{K(Q+A)d}{M+K}\right] + \frac{2Bd}{\sqrt{M+K+1}}\bigg(1-\frac{1}{M+K+1}\bigg)^{\frac{M+K}{2}}\\
&\leq \frac{(F(\boldsymbol{w}^{(0)}) - F^{*})\sqrt{Ld}}{\sqrt{T}} + \frac{\sqrt{Ld}}{2\sqrt{T}}+ 4\left[\frac{\sqrt{M}||\bar{\boldsymbol{\sigma}}||_{1}}{M+K} + \frac{K(Q+A)d}{M+K}\right] + \frac{2Bd}{\sqrt{M+K+1}}\bigg(1-\frac{1}{M+K+1}\bigg)^{\frac{M+K}{2}}.
\end{split}
\end{equation}
which completes the proof.
\end{proof}

\subsection{Proof of Theorem \ref{SPByzantine2}}
\begin{Theorem}\label{SPByzantine2}
Suppose Assumptions \ref{A1}-\ref{A4} are satisfied, $|\nabla F(\boldsymbol{w}^{(t)})_{i}| < Q, \forall i, t$, $B \geq 2A = \mathcal{O}(\sqrt{T})$, and $\lim_{T\rightarrow \infty}\frac{M+K}{\sqrt{T}} = 0$. Then by running Algorithm \ref{DPSGDAlgorithm} with \textit{TernaryVote}, $\mathcal{N}_{t} = \mathcal{M}\cup\mathcal{K}$, and the learning rate $\eta=\frac{1}{\sqrt{TLd}}$ for $T$ iterations, we have
\begin{equation}
\begin{split}
&\frac{1}{T}\sum_{t=1}^{T}\sum_{i=1}^{d}\left(M\left|\nabla F(\boldsymbol{w}^{(t)})_{i}\right| - \left|\sum_{k\in\mathcal{K}}\boldsymbol{g}_{k,i}^{(t)}\right|\right)|\nabla F(\boldsymbol{w}^{(t)})_{i}| \leq \mathcal{O}\bigg(\frac{B}{\sqrt{T}}\bigg) + \mathcal{O}\bigg(\frac{1}{B}\bigg).
\end{split}
\end{equation}
\end{Theorem}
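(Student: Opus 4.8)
The plan is to mirror the descent analysis of Theorem \ref{SPconvergerateSchemeII2} almost verbatim, the only structural change being that the majority vote now aggregates $M+K$ ternary votes rather than $M$. First I would observe that, under the threat model, each Byzantine worker also feeds an input $\boldsymbol{g}_{k}^{(t)}\in[-c,c]^{d}$ through the same $ternary$ map, so Lemma \ref{bsufficientlylarge} applies unchanged with $M$ replaced by $M+K$ and with the fixed reals taken to be the full collection $\{\boldsymbol{g}_{m,i}^{(t)}\}_{m\in\mathcal{M}}\cup\{\boldsymbol{g}_{k,i}^{(t)}\}_{k\in\mathcal{K}}$. Consequently, for each coordinate $i$,
\begin{equation}
\begin{split}
&P\!\left(sign(\hat{\boldsymbol{g}}^{(t)}_{\mathcal{M}+\mathcal{K},i})=1\right)-P\!\left(sign(\hat{\boldsymbol{g}}^{(t)}_{\mathcal{M}+\mathcal{K},i})=-1\right) \\
&= D\left(\sum_{m\in\mathcal{M}}\boldsymbol{g}_{m,i}^{(t)}+\sum_{k\in\mathcal{K}}\boldsymbol{g}_{k,i}^{(t)}\right)+R_{i},
\end{split}
\end{equation}
where $D=\sum_{n=1}^{M+K}(1-\tfrac{A}{B})^{M+K-n}\binom{n-1}{\lfloor (n-1)/2\rfloor}\binom{M+K-1}{n-1}A^{n-1}/(2^{n-1}B^{n})$ is the leading coefficient and $R_{i}=\sum_{n=2}^{M+K}(1-\tfrac{A}{B})^{M+K-n}\binom{M+K}{n}\mathcal{O}(A^{n-2}/B^{n})$ is the remainder isolated in the lemma.

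Next I would invoke the smoothness bound (Assumption \ref{A2}) exactly as in Theorem \ref{SPconvergerateSchemeII2} to get $F(\boldsymbol{w}^{(t+1)})-F(\boldsymbol{w}^{(t)})\le -\eta\langle\nabla F(\boldsymbol{w}^{(t)}),sign(\hat{\boldsymbol{g}}^{(t)}_{\mathcal{M}+\mathcal{K}})\rangle+\tfrac{Ld\eta^{2}}{2}$, and take expectations using $\mathbb{E}[sign(\hat{g}_{i})]=P(sign=1)-P(sign=-1)$ together with the identity above. Since Assumption \ref{A3} gives $\mathbb{E}[\sum_{m\in\mathcal{M}}\boldsymbol{g}_{m,i}^{(t)}]=M\nabla F(\boldsymbol{w}^{(t)})_{i}$ while $\sum_{k\in\mathcal{K}}\boldsymbol{g}_{k,i}^{(t)}$ is an arbitrary worst-case quantity, the dominant cross term becomes $-\eta D\sum_{i}\nabla F(\boldsymbol{w}^{(t)})_{i}\bigl(M\nabla F(\boldsymbol{w}^{(t)})_{i}+\sum_{k\in\mathcal{K}}\boldsymbol{g}_{k,i}^{(t)}\bigr)$. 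The crucial algebraic step is the elementary bound
\begin{equation}
\begin{split}
&\nabla F(\boldsymbol{w}^{(t)})_{i}\Bigl(M\nabla F(\boldsymbol{w}^{(t)})_{i}+\textstyle\sum_{k\in\mathcal{K}}\boldsymbol{g}_{k,i}^{(t)}\Bigr)\\
&\ge\Bigl(M|\nabla F(\boldsymbol{w}^{(t)})_{i}|-\bigl|\textstyle\sum_{k\in\mathcal{K}}\boldsymbol{g}_{k,i}^{(t)}\bigr|\Bigr)|\nabla F(\boldsymbol{w}^{(t)})_{i}|,
\end{split}
\end{equation}
which is precisely what converts the descent into the target left-hand side; the remainder contributes at most $\eta\,Qd\sum_{n=2}^{M+K}(1-\tfrac{A}{B})^{M+K-n}\binom{M+K}{n}\mathcal{O}(A^{n-2}/B^{n})$ after applying $|\nabla F(\boldsymbol{w}^{(t)})_{i}|<Q$.

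Finally I would telescope over $t=1,\dots,T$, divide through by $\eta D$, substitute $\eta=\tfrac{1}{\sqrt{TLd}}$, and reuse the order estimate from the Remark following Theorem \ref{SPconvergerateSchemeII2}: under $B\ge 2A$ and $\lim_{T\to\infty}(M+K)/\sqrt{T}=0$, one has $\binom{n-1}{\lfloor (n-1)/2\rfloor}/2^{n-1}=\mathcal{O}(1/\sqrt{n})$ and $\sum_{n}(1-\tfrac{A}{B})^{M+K-n}\binom{M+K-1}{n-1}(A/B)^{n-1}=1$, so $D=\mathcal{O}(1/B)$ with $M+K$ absorbed into the constant. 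The leading term then scales as $\mathcal{O}(B/\sqrt{T})$, and, using $A=\Theta(B)$ so that $A^{n-2}/B^{n}=\mathcal{O}(1/B^{2})$ when summed against the binomial weights and divided by $D$, the remainder scales as $\mathcal{O}(1/B)$, which gives the stated bound.

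I expect the main obstacle to be the bookkeeping behind the generalized Lemma \ref{bsufficientlylarge}: one must verify that the combinatorial expansion of $P(\hat{Z}=H)$ is insensitive to whether a given input originates from a normal or a Byzantine worker (it is, since every participant applies the identical $ternary$ map), and that $\lim_{T\to\infty}(M+K)/\sqrt{T}=0$ still guarantees that only the two leading powers of $A$ survive so that the $\mathcal{O}(A^{n-2}/B^{n})$ remainder is genuinely lower order. The sign-alignment inequality and the final order estimate then follow routinely from the single-worker-set case, the essential new ingredient being the absolute-value lower bound that produces the robustness margin $M|\nabla F(\boldsymbol{w}^{(t)})_{i}|-|\sum_{k\in\mathcal{K}}\boldsymbol{g}_{k,i}^{(t)}|$.
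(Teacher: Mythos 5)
Your proposal matches the paper's proof essentially step for step: the same generalization of Lemma \ref{bsufficientlylarge} to the $M+K$ votes, the same smoothness-based descent inequality with $\mathbb{E}[\sum_{m\in\mathcal{M}}\boldsymbol{g}_{m,i}^{(t)}]=M\nabla F(\boldsymbol{w}^{(t)})_{i}$, the same lower bound $M|\nabla F(\boldsymbol{w}^{(t)})_{i}|^{2}+\nabla F(\boldsymbol{w}^{(t)})_{i}\sum_{k\in\mathcal{K}}\boldsymbol{g}_{k,i}^{(t)}\geq(M|\nabla F(\boldsymbol{w}^{(t)})_{i}|-|\sum_{k\in\mathcal{K}}\boldsymbol{g}_{k,i}^{(t)}|)|\nabla F(\boldsymbol{w}^{(t)})_{i}|$, and the same telescoping and division by the leading coefficient $\mathcal{I}(A,B,M,K)$. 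The proposal is correct and takes essentially the same route as the paper.
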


\begin{proof}
Let $\frac{1}{M+K}\left[\sum_{m\in \mathcal{M}}ternary(\boldsymbol{g}_{m}^{(t)},A,B)+\sum_{k\in \mathcal{K}}byzantine(\boldsymbol{g}_{k}^{(t)},A,B)\right] = \hat{\boldsymbol{g}}_{\mathcal{M}+\mathcal{K}}^{(t)}$ and $\frac{1}{M+K}\left[\sum_{m\in\mathcal{M}}\boldsymbol{g}_{m}^{(t)}+\sum_{k\in\mathcal{K}}\boldsymbol{g}_{k}^{(t)}\right] = \bar{\boldsymbol{g}}_{\mathcal{M}+\mathcal{K}}^{(t)}$. According to Assumption \ref{A2}, we have
\begin{equation}
\begin{split}
F(\boldsymbol{w}^{(t+1)}) - F(\boldsymbol{w}^{(t)}) &\leq \langle\nabla F(\boldsymbol{w}^{(t)}), \boldsymbol{w}^{(t+1)}-\boldsymbol{w}^{(t)}\rangle + \frac{L}{2}||\boldsymbol{w}^{(t+1)}-\boldsymbol{w}^{(t)}||_{2}^2 \\
& =-\eta \left\langle\nabla F(\boldsymbol{w}^{(t)}), sign\left(\hat{\boldsymbol{g}}_{\mathcal{M}+\mathcal{K}}^{(t)}\right)\right\rangle +\frac{L}{2}\bigg|\bigg|\eta sign\left(\hat{\boldsymbol{g}}_{\mathcal{M}+\mathcal{K}}^{(t)}\right)\bigg|\bigg|^2 \\
& \leq -\eta \left\langle\nabla F(\boldsymbol{w}^{(t)}), sign\left(\hat{\boldsymbol{g}}_{\mathcal{M}+\mathcal{K}}^{(t)}\right)\right\rangle + \frac{Ld\eta^2}{2}, \\
\end{split}
\end{equation}
where $\eta$ is the learning rate. Taking expectations on both sides yields
\begin{equation}
\begin{split}
&\mathbb{E}[F(\boldsymbol{w}^{(t+1)}) - F(\boldsymbol{w}^{(t)})] \\
&\leq \frac{Ld\eta^2}{2} -\eta \sum_{i=1}^{d}\nabla F(\boldsymbol{w}^{(t)})_{i}\mathbb{E}\bigg[P\bigg(sign\left(\hat{\boldsymbol{g}}_{\mathcal{M}+\mathcal{K},i}^{(t)}\right)=1\bigg) - P\bigg(sign\left(\hat{\boldsymbol{g}}_{\mathcal{M}+\mathcal{K},i}^{(t)}\right) = -1\bigg)\bigg] \\
&= \frac{Ld\eta^2}{2} -\eta \sum_{i=1}^{d}\nabla F(\boldsymbol{w}^{(t)})_{i}\sum_{n=1}^{M+K}\bigg(1-\frac{A}{B}\bigg)^{M+K-n}\mathbb{E}\bigg[\frac{{n-1 \choose \lfloor\frac{n-1}{2}\rfloor}A^{n-1}}{2^{n-1}B^{n}}{M+K-1 \choose n-1}\left[\sum_{m\in\mathcal{M}}\boldsymbol{g}_{m,i}^{(t)} + \sum_{k\in\mathcal{K}}\boldsymbol{g}_{k,i}^{(t)}\right] \\
&+ {M+K \choose n}\mathcal{O}\bigg(\frac{A^{n-2}}{B^{n}}\bigg)\bigg]\\
&= \frac{Ld\eta^2}{2} -\eta \sum_{i=1}^{d}\nabla F(\boldsymbol{w}^{(t)})_{i}\sum_{n=1}^{M+K}\bigg(1-\frac{A}{B}\bigg)^{M+K-n}\bigg[\frac{{n-1 \choose \lfloor\frac{n-1}{2}\rfloor}A^{n-1}}{2^{n-1}B^{n}}{M+K-1 \choose n-1}\left[M\nabla F(\boldsymbol{w}^{(t)})_{i} + \sum_{k\in\mathcal{K}}\boldsymbol{g}_{k,i}^{(t)}\right] \\
&+ {M+K \choose n}\mathcal{O}\bigg(\frac{A^{n-2}}{B^{n}}\bigg)\bigg]\\
&=\frac{Ld\eta^2}{2} - \eta\mathcal{I}(A,B,M,K)\left[\sum_{i=1}^{d}M|\nabla F(\boldsymbol{w}^{(t)})_{i}|^{2} + \nabla F(\boldsymbol{w}^{(t)})_{i}\sum_{k\in\mathcal{K}}\boldsymbol{g}_{k,i}^{(t)}\right]\\
&+\eta \sum_{i=1}^{d}\nabla F(\boldsymbol{w}^{(t)})_{i}\sum_{n=1}^{M+K}\bigg(1-\frac{A}{B}\bigg)^{M+K-n}\bigg[{M+K \choose n}\mathcal{O}\bigg(\frac{A^{n-2}}{B^{n}}\bigg)\bigg],
\end{split}
\end{equation}
in which $\mathcal{I}(A,B,M,K) = \sum_{n=1}^{M+K}\big(1-\frac{A}{B}\big)^{M+K-n}\big[\frac{{n-1 \choose \lfloor\frac{n-1}{2}\rfloor}A^{n-1}}{2^{n-1}B^{n}}{M+K-1 \choose n-1}\big]$. Adjusting the above inequality and averaging both sides over $t=1,2,\cdots, T$ yields
\begin{equation}
\color{black}
\begin{split}
&\frac{1}{T}\sum_{t=1}^{T}\eta\mathcal{I}(A,B,M,K)\left[\sum_{i=1}^{d}M|\nabla F(\boldsymbol{w}^{(t)})_{i}|^{2} + \nabla F(\boldsymbol{w}^{(t)})_{i}\sum_{k\in\mathcal{K}}\boldsymbol{g}_{k,i}^{(t)}\right] \\
&\leq \frac{\mathbb{E}[F(\boldsymbol{w}^{(0)}) - F(\boldsymbol{w}^{(t+1)})]}{T} + \frac{Ld\eta^2}{2}+ \frac{\eta}{T}\sum_{t=1}^{T}\sum_{i=1}^{d}\nabla F(\boldsymbol{w}^{(t)})_{i}\sum_{n=1}^{M+K}\bigg(1-\frac{A}{B}\bigg)^{M+K-n}\bigg[{M+K \choose n}\mathcal{O}\bigg(\frac{A^{n-2}}{B^{n}}\bigg)\bigg] \\
&\leq \frac{\mathbb{E}[F(\boldsymbol{w}^{(0)}) - F(\boldsymbol{w}^{(t+1)})]}{T} + \frac{Ld\eta^2}{2}+ \eta\sum_{n=1}^{M+K}\bigg(1-\frac{A}{B}\bigg)^{M+K-n}\bigg[{M+K \choose n}\mathcal{O}\bigg(\frac{A^{n-2}}{B^{n}}\bigg)\bigg]Qd. \\
\end{split}
\end{equation}

Let $\eta=\frac{1}{\sqrt{TLd}}$ and dividing both sides by $\eta\mathcal{I}(A,B,M,K)$ gives
\begin{equation}
\begin{split}
&\frac{1}{T}\sum_{t=1}^{T}\left[\sum_{i=1}^{d}M|\nabla F(\boldsymbol{w}^{(t)})_{i}|^{2} + \nabla F(\boldsymbol{w}^{(t)})_{i}\sum_{k\in\mathcal{K}}\boldsymbol{g}_{k,i}^{(t)}\right] \\
&\leq
\frac{\mathbb{E}[F(\boldsymbol{w}^{(0)}) - F(\boldsymbol{w}^{(t+1)})]\sqrt{Ld}}{\sqrt{T}\mathcal{I}(A,B,M,K)} + \frac{\sqrt{Ld}}{2\sqrt{T}\mathcal{I}(A,B,M,K)}+ \frac{\sum_{n=1}^{M+K}\big(1-\frac{A}{B}\big)^{M+K-n}\bigg[{M+K \choose n}\mathcal{O}\bigg(\frac{A^{n-2}}{B^{n}}\bigg)\bigg]Qd}{\mathcal{I}(A,B,M,K)}\\
&\leq \frac{(F(\boldsymbol{w}^{(0)}) - F^{*})\sqrt{Ld}}{\sqrt{T}\mathcal{I}(A,B,M,K)} + \frac{\sqrt{Ld}}{2\sqrt{T}\mathcal{I}(A,B,M,K)}+ \frac{\sum_{n=1}^{M+K}\big(1-\frac{A}{B}\big)^{M+K-n}\bigg[{M+K \choose n}\mathcal{O}\bigg(\frac{A^{n-2}}{B^{n}}\bigg)\bigg]Qd}{\mathcal{I}(A,B,M,K)}\\
&\leq \mathcal{O}\bigg(\frac{B}{\sqrt{T}}\bigg) + \mathcal{O}\bigg(\frac{1}{B}\bigg).
\end{split}
\end{equation}

Since
\begin{equation}
\begin{split}
M|\nabla F(\boldsymbol{w}^{(t)})_{i}|^{2} + \nabla F(\boldsymbol{w}^{(t)})_{i}\sum_{k\in\mathcal{K}}\boldsymbol{g}_{k,i}^{(t)} \geq |\nabla F(\boldsymbol{w}^{(t)})_{i}|\left(M\left|\nabla F(\boldsymbol{w}^{(t)})_{i}\right| - \left|\sum_{k\in\mathcal{K}}\boldsymbol{g}_{k,i}^{(t)}\right|\right),
\end{split}
\end{equation}

we have
\begin{equation}
\begin{split}
&\frac{1}{T}\sum_{t=1}^{T}\sum_{i=1}^{d}\left(M\left|\nabla F(\boldsymbol{w}^{(t)})_{i}\right| - \left|\sum_{k\in\mathcal{K}}\boldsymbol{g}_{k,i}^{(t)}\right|\right)|\nabla F(\boldsymbol{w}^{(t)})_{i}| \leq \mathcal{O}\bigg(\frac{B}{\sqrt{T}}\bigg) + \mathcal{O}\bigg(\frac{1}{B}\bigg),
\end{split}
\end{equation}
which completes the proof.
\end{proof}

\section{Convergence of Algorithm \ref{DPSGDAlgorithm} with Full-Batch Gradient}\label{convergencefullbatch}
\begin{Theorem}\label{SPconvergerateGDSchemeII}
Suppose Assumptions \ref{A1}-\ref{A4} are satisfied, and the learning rate is set as $\eta=\frac{1}{\sqrt{TLd}}$. Then by running Algorithm \ref{DPSGDAlgorithm} with \textit{TernaryVote} for $T$ iterations, we have
\begin{equation}
\begin{split}
\frac{1}{T}\sum_{t=1}^{T}c_{0}||\nabla F(\boldsymbol{w}^{(t)})||_{1} &\leq
\frac{\mathbb{E}[F(\boldsymbol{w}^{(0)}) - F(\boldsymbol{w}^{(t+1)})]\sqrt{Ld}}{\sqrt{T}} + \frac{\sqrt{Ld}}{2\sqrt{T}} + \frac{2Bd}{M}\\
&\leq \frac{(F(\boldsymbol{w}^{(0)}) - F^{*})\sqrt{Ld}}{\sqrt{T}} + \frac{\sqrt{Ld}}{2\sqrt{T}} + \frac{2Bd}{M},
\end{split}
\end{equation}
where $0 < c_{0} < 1$ is some positive constant.
\end{Theorem}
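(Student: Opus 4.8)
The plan is to follow the single-step descent strategy already used in the proofs of Theorems \ref{SPconvergerateSchemeII} and \ref{SPconvergerateSchemeII2}: bound the per-iteration decrease of $F$ via smoothness, take expectations, and telescope. First I would invoke Assumption \ref{A2} with the \textit{TernaryVote} update $\boldsymbol{w}^{(t+1)} = \boldsymbol{w}^{(t)} - \eta\, sign(\frac{1}{M}\sum_{m}ternary(\boldsymbol{g}_m^{(t)},A,B))$ and use $\|sign(\cdot)\|_2^2 = d$, together with the same expansion of the inner product into a sign-agreement indicator as in \eqref{convergencee1}, to obtain
\begin{equation}
\mathbb{E}[F(\boldsymbol{w}^{(t+1)})-F(\boldsymbol{w}^{(t)})] \leq -\eta\|\nabla F(\boldsymbol{w}^{(t)})\|_1 + \frac{Ld\eta^2}{2} + 2\eta\sum_{i=1}^d|\nabla F(\boldsymbol{w}^{(t)})_i|\,P_i^{(t)},
\end{equation}
where $P_i^{(t)} = P(sign(\frac{1}{M}\sum_m ternary(\boldsymbol{g}_{m,i}^{(t)},A,B)) \neq sign(\nabla F(\boldsymbol{w}^{(t)})_i))$.

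The crucial simplification in the full-batch regime is that each worker returns its exact local gradient, so $\frac{1}{M}\sum_m \boldsymbol{g}_{m,i}^{(t)} = \nabla F(\boldsymbol{w}^{(t)})_i$ holds deterministically. Consequently the variance term $\|\bar{\boldsymbol{\sigma}}\|_1/\sqrt{M}$ and the sampling sign-disagreement term that appear in the stochastic analysis around \eqref{connect_to_variance} both vanish, and the only residual is the compression error $P_i^{(t)}$. Applying Lemma \ref{LemmaProbofWrongGenericTernary} with $u_m = \boldsymbol{g}_{m,i}^{(t)}$ (so that $\frac{1}{M}\sum_m u_m = \nabla F(\boldsymbol{w}^{(t)})_i$) gives $P_i^{(t)} \leq (1 - |\nabla F(\boldsymbol{w}^{(t)})_i|^2/B^2)^{M/2}$, reducing the problem to controlling $\sum_i h(|\nabla F_i|)$ with $h(x) = x(1-x^2/B^2)^{M/2}$.

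The key step is to show that the sign-flip contribution can be absorbed into a constant fraction of the gradient norm plus a benign additive term, i.e.
\begin{equation}
-\|\nabla F\|_1 + 2\sum_{i=1}^d h(|\nabla F_i|) \leq -c_0\|\nabla F\|_1 + \frac{2Bd}{M}
\end{equation}
for some $0 < c_0 < 1$. I would argue this coordinatewise: when $|\nabla F_i|$ is large enough that the majority vote is reliable, $h(|\nabla F_i|)$ is exponentially small and $2h(|\nabla F_i|) \le (1-c_0)|\nabla F_i|$; when $|\nabla F_i|$ is small, the term is instead controlled using the gradient bound $|\nabla F_i| \le c \le A$ together with the shape of $h$ (monotone increasing up to $x^\star = B/\sqrt{M+1}$). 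Extracting precisely the $\mathcal{O}(B/M)$ residual rather than the $\mathcal{O}(B/\sqrt{M})$ factor $\frac{2Bd}{\sqrt{M+1}}(1-\frac{1}{M+1})^{M/2}$ that the crude maximization of $h$ produces in Theorem \ref{SPconvergerateSchemeII} is the main obstacle; here the absence of gradient noise must be exploited, since it lets us keep the entire $-\|\nabla F\|_1$ on the favorable side instead of sacrificing part of it to dominate a variance piece.

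Finally I would telescope: summing the displayed descent inequality over $t=1,\dots,T$, dividing by $T$, substituting $\eta = 1/\sqrt{TLd}$, dividing through by $\eta$, and bounding $\mathbb{E}[F(\boldsymbol{w}^{(0)})-F(\boldsymbol{w}^{(t+1)})] \le F(\boldsymbol{w}^{(0)}) - F^*$ via Assumption \ref{A1} yields the stated inequality, with the rate $\mathcal{O}(1/\sqrt{T}+B/M)$ following from $\sqrt{Ld}/\sqrt{T} = \mathcal{O}(1/\sqrt{T})$ and the residual $\frac{2Bd}{M}$.
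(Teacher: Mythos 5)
Your skeleton (smoothness descent, sign--agreement decomposition, telescoping, and the observation that full-batch gradients make $\frac{1}{M}\sum_m \boldsymbol{g}_{m,i}^{(t)} = \nabla F(\boldsymbol{w}^{(t)})_i$ hold deterministically so the variance terms vanish) matches the paper, and you correctly locate the difficulty. But the route you propose for the key step cannot close it, and this is a genuine gap rather than a detail. You want to derive the coordinatewise inequality $-\|\nabla F\|_1 + 2\sum_i h(|\nabla F_i|) \leq -c_0\|\nabla F\|_1 + 2Bd/M$ from Lemma \ref{LemmaProbofWrongGenericTernary}'s bound $P_i^{(t)} \leq (1-|\nabla F_i|^2/B^2)^{M/2}$, i.e.\ from $h(x)=x(1-x^2/B^2)^{M/2}$. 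That inequality is false: at $x = B/\sqrt{M+1}$ one has $2h(x) - (1-c_0)x \geq \frac{B}{\sqrt{M+1}}\bigl(2e^{-1/2}-1\bigr) = \Theta(B/\sqrt{M})$, which exceeds $2B/M$ for large $M$ no matter how $c_0<1$ is chosen. The underlying reason is that $(1-|\bar u|^2/B^2)^{M/2}$ is only bounded away from $1/2$ when $|\bar u| \gtrsim B/\sqrt{M}$; at $|\bar u| = B/M$ it is $\approx 1 - \tfrac{1}{2M}$, essentially vacuous. "Keeping the entire $-\|\nabla F\|_1$ on the favorable side" does not help, because the obstruction is the weakness of the probability bound itself, not the loss of a variance budget.

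What the paper actually supplies --- and what your proposal is missing --- is a qualitatively stronger estimate for the deterministic-input case: Lemma \ref{LemmaProbofWrongGenericTernaryGD}, which says that whenever $|\frac{1}{M}\sum_m u_m| \geq B/M$ the majority vote errs with probability at most $\frac{1-c_0}{2}$ for some constant $c_0>0$. This lowers the threshold from $B/\sqrt{M}$ to $B/M$ and is proved not from Lemma \ref{LemmaProbofWrongGenericSign} but from a median-type tail bound for Poisson binomial sums (Lemma \ref{poissonbinomiallemma1}: $P(S_{X_{1:M}} \geq k) < \tfrac12$ for $k \geq 1+\sum_i p_i$, established by symmetrizing the success probabilities and invoking Samuels' theorem plus a limiting CLT argument). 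With that lemma, the paper splits coordinates by whether $|\nabla F_i| \geq B/M$: on the large-gradient coordinates the error probability is $\leq \frac{1-c_0}{2}$ and is absorbed into $-c_0|\nabla F_i|$, and on the small-gradient coordinates one simply uses $|\nabla F_i|\,P_i^{(t)} \leq |\nabla F_i| < B/M$, yielding the $2Bd/M$ residual. You would need to prove (or import) this Poisson-binomial median lemma to make your argument go through; without it the best your approach yields is the $\mathcal{O}(B/\sqrt{M})$ term already present in Theorem \ref{SPconvergerateSchemeII}.
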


Before proving Theorem \ref{SPconvergerateGDSchemeII}, we first introduce the following lemmas.

\begin{Lemma}\label{poissonbinomiallemma1}
Let $X_{i}$ denote a Bernoulli random variable with a successful probability of $p_{i}$ and $S_{X_{1:M}} = \sum_{i=1}^{M}X_{i}$. Without loss of generality, suppose $0 < p_{1} \leq p_{2} \leq \cdots \leq p_{M} < 1$. Then $P(S_{X_{1:M}} \geq k) < \frac{1}{2}$ for $k \geq 1 + \sum_{i=1}^{M}p_{i}$.
\end{Lemma}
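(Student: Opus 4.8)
The plan is to recognize the statement as a mean--median comparison for the Poisson binomial distribution: writing $\mu=\sum_{i=1}^{M}p_i=\mathbb{E}[S_{X_{1:M}}]$, the claim $P(S_{X_{1:M}}\ge k)<\tfrac12$ for every integer $k\ge 1+\mu$ is equivalent to saying that a median of $S_{X_{1:M}}$ does not exceed $\lceil\mu\rceil$. Since $P(S_{X_{1:M}}\ge k)$ is non-increasing in $k$, I would first reduce to the single worst case $k_0=\lceil\mu\rceil+1$ (the smallest integer with $k_0\ge 1+\mu$), so that it suffices to prove $P(S_{X_{1:M}}\ge k_0)<\tfrac12$, equivalently $P(S_{X_{1:M}}\le k_0-1)>\tfrac12$ with $k_0-1\ge\mu$. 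The strictness will ultimately come from the hypothesis $0<p_i<1$ for all $i$, which guarantees $P(S_{X_{1:M}}=j)>0$ for every $0\le j\le M$ and hence that no median can sit exactly at a point of probability $\tfrac12$.

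The key point to stress is that elementary tail tools are too weak here: Markov's inequality only yields $P(S_{X_{1:M}}\ge\mu+1)\le\mu/(\mu+1)$, and a Chernoff bound evaluated one unit above the mean stays close to $1$ whenever the variance is large, so neither delivers the sharp threshold $\tfrac12$. One must exploit the Bernoulli (bounded, independent-summand) structure directly; mere log-concavity of the probability mass function is insufficient, as the geometric distribution exhibits a mean--median gap far exceeding $1$. Two self-contained routes present themselves. The first is a smoothing argument: compute $\partial_{p_i}P(S_{X_{1:M}}\ge k_0)=P(S^{(i)}=k_0-1)\ge 0$, where $S^{(i)}$ drops the $i$-th summand, and use the two-coordinate exchange identity $P(S^{(i)}=k_0-1)-P(S^{(j)}=k_0-1)=(p_i-p_j)\bigl[P(S^{(ij)}=k_0-1)-P(S^{(ij)}=k_0-2)\bigr]$ to argue that, in the regime $k_0-1\ge\mu$ (at or above the mode), equalizing the $p_i$ can only increase the upper tail, thereby reducing the problem to the ordinary binomial with success probability $\mu/M$. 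The second route is a direct induction on $M$ built on the recursion $P(S_M\ge k)=(1-p_M)P(S_{M-1}\ge k)+p_M\,P(S_{M-1}\ge k-1)$.

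The hard part, and the step I expect to consume most of the effort, is precisely this mean--median comparison. In the inductive route the obstruction is transparent: the fractional increment $p_M\in(0,1)$ shifts the threshold by less than a full unit, so while the induction hypothesis controls $P(S_{M-1}\ge k)$ it does not by itself control $P(S_{M-1}\ge k-1)$, whose argument $k-1$ may dip below $1+\mu_{M-1}$ and whose value may approach $\tfrac12$; the two terms must be shown to compensate, which forces a strengthened hypothesis that tracks the tail jointly with the fractional part of the mean. In the smoothing route the delicate issue is justifying the sign of the exchange identity uniformly, handling the boundary cases $p_i\in\{0,1\}$ where coordinates drop out, and finally invoking the classical binomial mean--median bound for $\mathrm{Bin}(M,\mu/M)$. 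I would pursue the smoothing reduction first, since once the reduction to the binomial is in hand the residual claim $P(\mathrm{Bin}(M,p)\ge\lceil Mp\rceil+1)<\tfrac12$ is the well-studied statement that the binomial median equals its rounded mean, which can be closed by a short symmetry-flavored computation.
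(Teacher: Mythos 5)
Your primary (smoothing) route is essentially the paper's proof: the authors also equalize two coordinates at a time --- specifically the smallest and largest, replacing $(p_1,p_M)$ by their common average --- and derive exactly your exchange identity, $P(S_{Y}\ge k)-P(S_{X}\ge k)=\bigl(\tfrac{p_1-p_M}{2}\bigr)^2\bigl[P(S^{(1M)}=k-2)-P(S^{(1M)}=k-1)\bigr]$, iterating to reduce to $\mathrm{Bin}(M,\bar p)$. Two points of comparison are worth recording. First, the sign of the bracketed difference, which you correctly flag as the delicate step, is settled in the paper by citing Samuels' theorem (if $p_{\max}+\sum_i p_i<k+1$ then $P(S=k)>P(S=k+1)$); applied to the reduced sum with the two \emph{extreme} coordinates removed, the hypothesis $k\ge 1+\sum_i p_i$ together with $p_1>0$ gives the required strict monotonicity. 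The choice of which pair to equalize matters here: your generic $(i,j)$ exchange need not satisfy the Samuels condition, whereas removing the smallest and largest does, so you would want to build that selection into the smoothing scheme. Second, the binomial endgame differs: you propose to close by invoking the classical binomial mean--median theorem (which would need a little extra care to upgrade $\le\tfrac12$ to $<\tfrac12$; your claim that positivity of all atoms rules out a tail of exactly $\tfrac12$ does not follow as stated), while the paper instead pads the binomial with pairs of deterministic $0/1$ trials to produce an increasing sequence $P(\hat S_{M+2j}\ge k+j)$ of binomial tails whose success probabilities tend to $\tfrac12$ and whose variances diverge, so the central limit theorem shows the limit is $\tfrac12$ approached from below; strictness then comes for free from $k-\sum_i p_i\ge 1>0$. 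Your citation-based closing is shorter if the external result is available in the strict form you need; the paper's is self-contained. Your alternative induction route is not pursued in the paper, and you have correctly diagnosed why it is awkward.
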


The following proof of Lemma \ref{poissonbinomiallemma1} is inspired by the method in \cite{jorgensen2018team}.

\begin{proof}
Decomposing $S_{X_{1:M}}$ as $S_{X_{2:M-1}}+X_{1}+X_{M}$, we have
\begin{equation}
\begin{split}
P(S_{X_{1:M}} \geq k) &= p_{1}p_{M}P(S_{X_{2:M-1}} \geq k-2) + [p_{1}(1-p_{M})+p_{M}(1-p_{1})]P(S_{X_{2:M-1}} \geq k-1) \\
&+ (1-p_{1})(1-p_{M})P(S_{X_{2:M-1}} \geq k) \\
&= p_{1}p_{M}[P(S_{X_{2:M-1}} \geq k-2) - 2P(S_{X_{2:M-1}} \geq k-1) + P(S_{X_{2:M-1}} \geq k)] \\
&+ (p_{1}+p_{M})[P(S_{X_{2:M-1}} \geq k-1) - P(S_{X_{2:M-1}} \geq k)] + P(S_{X_{2:M-1}} \geq k).
\end{split}
\end{equation}

Further, define another set of Bernoulli random variables $Y_{i}$ with a successful probability of $q_{i}$. If $q_{1} = q_{M} = \frac{p_{1}+p_{M}}{2}$ and $q_{i} = p_{i}$ $\forall i \in \{2,3,...,M-1\}$,
similarly, if we decompose $S_{Y_{1:M}}$ as $S_{Y_{2:M-1}}+Y_{1}+Y_{M}$, we have
\begin{equation}
\begin{split}
P(S_{Y_{1:M}} \geq k) &= q_{1}q_{M}P(S_{Y_{2:M-1}} \geq k-2) + [q_{1}(1-q_{M})+q_{M}(1-q_{1})]P(S_{Y_{2:M-1}} \geq k-1) \\
&+ (1-q_{1})(1-q_{M})P(S_{Y_{2:M-1}} \geq k) \\
&= q_{1}q_{M}[P(S_{Y_{2:M-1}} \geq k-2) - 2P(S_{Y_{2:M-1}} \geq k-1) + P(S_{Y_{2:M-1}} \geq k)] \\
&+ (q_{1}+q_{M})[P(S_{Y_{2:M-1}} \geq k-1) - P(S_{Y_{2:M-1}} \geq k)] + P(S_{Y_{2:M-1}} \geq k).
\end{split}
\end{equation}

Since $q_{1}+q_{M} = p_{1}+p_{M}$ and $P(S_{X_{2:M-1}} \geq k) = P(S_{Y_{2:M-1}} \geq k)$, we have
\begin{equation}
\begin{split}
P(S_{Y_{1:M}} \geq k) - P(S_{X_{1:M}} \geq k) &= (q_{1}q_{M}-p_{1}p_{M})[P(S_{X_{2:M-1}} \geq k-2) - 2P(S_{X_{2:M-1}} \geq k-1) + P(S_{X_{2:M-1}} \geq k)] \\
& = \left(\frac{p_{1}-p_{M}}{2}\right)^{2}[P(S_{X_{2:M-1}} \geq k-2) - 2P(S_{X_{2:M-1}} \geq k-1) + P(S_{X_{2:M-1}} \geq k)]\\
&=\left(\frac{p_{1}-p_{M}}{2}\right)^{2}[P(S_{X_{2:M-1}} = k-2) - P(S_{X_{2:M-1}} = k-1)].
\end{split}
\end{equation}

Therefore, $P(S_{X_{2:M-1}} = k-2) > P(S_{X_{2:M-1}} = k-1)$ is a sufficient condition for $P(S_{X_{1:M}} \geq k) < P(S_{Y_{1:M}} \geq k)$. According to Theorem 1 in \cite{samuels1965number}, if $p_{M} + \sum_{i=1}^{M}p_{i} < k+1$, then $P(S_{X_{1:M}} = k) > P(S_{X_{1:M}} = k+1)$. Similarly, if $p_{M-1} + \sum_{i=2}^{M-1}p_{i} < k-1$, then $P(S_{X_{2:M-1}} = k-2) > P(S_{X_{2:M-1}} = k-1)$. Therefore, given that $k \geq 1 + \sum_{i=1}^{M}p_{i}$, we have  $P(S_{X_{1:M}} \geq k) < P(S_{Y_{1:M}} \geq k)$.

Applying the techniques above repeatedly, we will finally obtain $P(S_{X_{1:M}} \geq k) < P(S_{\hat{X}_{1:M}} \geq k)$ for $k \geq 1 + \sum_{i=1}^{M}p_{i}$, where $\{\hat{X}_{i}\}_{i=1}^{M}$'s are Bernoulli random variables with a successful probability of $\Bar{p} = \frac{1}{M}\sum_{i=1}^{M}p_{i}$. Therefore, $P(S_{X_{1:M}} \geq k) < P(\hat{S}_{M} \geq k)$, where $\hat{S}_{M} \sim \text{BIN}(M,\Bar{p})$.

Now, define another Poisson Binomial random variable $S_{M+2} = 0 + \hat{S}_{M} + 1$. Note that the constants $0$ and $1$ correspond to Bernoulli trials with success probabilities of $0$ and $1$, respectively. Then, we have
\begin{equation}
\begin{split}
P(\hat{S}_{M} \geq k) = P(S_{M+2} \geq k+1) < P(\hat{S}_{M+2} \geq k+1),
\end{split}
\end{equation}
where $\hat{S}_{M+2} \sim \text{BIN}(M+2,p_{M+2})$ with $(M+2)p_{M+2} = 1 + \sum_{i=1}^{M}p_{i}$. By applying the same argument repeatedly, we can obtain a sequence of Binomial random variables $\hat{S}_{M+2j} \sim \text{BIN}(M+2j,p_{M+2j})$, where $(M+2j)p_{M+2j} = j + \sum_{i=1}^{M}p_{i}$. Particularly, $P(\hat{S}_{M+2j} \geq k+j)$ increases as $j$ increases.

Notice that the success probability $p_{M+2j} = \frac{j+\sum_{i=1}^{M}p_{i}}{M+2j}$ approaches $\frac{1}{2}$ as $j$ increases, while the variance grows and approaches $\infty$. Invoking the central limit theorem implies that the probability distribution of $\hat{S}_{M+2j}$ approaches normal distribution with mean $j + \sum_{i=1}^{M}p_{i}$ and variance $\sigma_{M+2j}^{2}=(j + \sum_{i=1}^{M}p_{i})(1-\frac{j+\sum_{i=1}^{M}p_{i}}{M+2j})$. Therefore, $P(\hat{S}_{M+2j} \geq k+j) = P(\frac{\hat{S}_{M+2j}-(j + \sum_{i=1}^{M}p_{i})}{\sigma_{M+2j}} \geq \frac{k- \sum_{i=1}^{M}p_{i}}{\sigma_{M+2j}})$ approaches $\frac{1}{2}$ as $j$ increases. As a result, we have $P(S_{X_{1:M}} \geq k) < \frac{1}{2}$ for $k \geq 1 + \sum_{i=1}^{M}p_{i}$, which completes the proof.


\end{proof}

\begin{Lemma}\label{LemmaProbofWrongGenericTernaryGD}
Let $u_{1},u_{2},\cdots,u_{M}$ be $M$ known and fixed real numbers and consider binary random variables $\hat{u}_{m}$, $1\leq m \leq M$, which is given by
\begin{equation}
\hat{u}_{m} = ternary(u_{m},A,B) =
\begin{cases}
\hfill 1, \hfill \text{with probability $\frac{A+u_{m}}{2B}$},\\
\hfill 0, \hfill \text{with probability $1-\frac{A}{B}$},\\
\hfill -1, \hfill \text{with probability $\frac{A-u_{m}}{2B}$},\\
\end{cases}
\end{equation}
Suppose $B \geq 2A$ and $|\bar{u}| = |\frac{1}{M}\sum_{i=1}^{M}u_{M}| \geq \frac{B}{M}$, then there exists some positive constant $c_{0}$ such that
\begin{equation}
\begin{split}
P\bigg(sign\bigg(\frac{1}{M}\sum_{m=1}^{M}\hat{u}_{m}\bigg)&\neq sign\bigg(\frac{1}{M}\sum_{m=1}^{M}u_{m}\bigg)\bigg) \leq \frac{1 - c_{0}}{2}.
\end{split}
\end{equation}
\end{Lemma}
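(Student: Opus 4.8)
The plan is to reduce the three-valued majority vote to an ordinary Bernoulli majority vote and then invoke Lemma~\ref{poissonbinomiallemma1}. Without loss of generality I assume $\bar{u} > 0$ (the case $\bar{u} < 0$ is symmetric, and $\bar{u}=0$ is excluded by the hypothesis $|\bar{u}| \geq B/M$), so that $sign(\frac{1}{M}\sum_m u_m) = 1$ and, under the convention $sign(0)=0$, the wrong-aggregation event is $E = \{\sum_{m=1}^{M}\hat{u}_m \leq 0\}$.

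First I would borrow the decomposition already constructed in the proof of Lemma~\ref{LemmaProbofWrongGenericTernary}: since $B \geq 2A$ guarantees that the discriminant $\frac{1}{4}+\frac{|u_m|^2}{4B^2}-\frac{A}{2B}$ is nonnegative, each $\hat{u}_m$ can be written as $\frac{1}{2}(\hat{u}_{m,1}+\hat{u}_{m,2})$ with $\hat{u}_{m,1},\hat{u}_{m,2}\in\{-1,+1\}$. Writing each such $\pm1$ variable as $2X_j-1$ for a Bernoulli $X_j$ and collecting the resulting $2M$ indicators into $S=\sum_{j=1}^{2M}X_j$, we get $\sum_m\hat{u}_m = S-M$, so $E$ becomes $\{S \leq M\}$. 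A direct expectation computation gives $\mathbb{E}[S]=\sum_m(1+\frac{u_m}{B})=M+\frac{M\bar{u}}{B}$, and the hypothesis $\bar{u}\geq B/M$ therefore yields $\mathbb{E}[S]\geq M+1$.

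Next I pass to the complementary count $\bar{S} = 2M-S=\sum_{j=1}^{2M}(1-X_j)$, a Poisson binomial over $2M$ Bernoulli trials with $\mathbb{E}[\bar{S}]=2M-\mathbb{E}[S]\leq M-1$; the target event is $E=\{\bar{S}\geq M\}$. Because $M\geq 1+\mathbb{E}[\bar{S}]=1+\sum_j\mathbb{E}[1-X_j]$, Lemma~\ref{poissonbinomiallemma1} (applied with $2M$ trials and threshold $k=M$) immediately gives $P(\bar{S}\geq M)<\frac{1}{2}$, i.e.\ the wrong-aggregation probability is strictly below $\frac{1}{2}$.

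The remaining, and main, difficulty is to upgrade this strict inequality into a uniform positive gap $c_0$, since Lemma~\ref{poissonbinomiallemma1} as stated only asserts ``$<\frac{1}{2}$''. For this I would reuse the intermediate majorization step inside its proof, namely that the Poisson-binomial upper tail $P(\bar{S}\geq M)$ is dominated by the binomial tail $P(\text{BIN}(2M,\bar{q})\geq M)$ with $\bar{q}=\mathbb{E}[\bar{S}]/(2M)\leq (M-1)/(2M)<\frac{1}{2}$. This binomial has mean at most $M-1$, so its probability of exceeding $M$, a full unit above the mean, is bounded away from $\frac{1}{2}$ by an explicit amount; setting $\frac{1-c_0}{2}$ equal to this binomial tail defines the desired constant and completes the proof. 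I expect the delicate point to be exactly this quantitative gap: the qualitative bound is immediate, but pinning down $c_0$ requires the binomial-comparison machinery of Lemma~\ref{poissonbinomiallemma1} together with an elementary tail estimate for a binomial evaluated one step above its mean.
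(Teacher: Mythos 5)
Your reduction is exactly the one the paper uses: the same splitting of each ternary variable into two Rademacher variables $\hat u_{m,1},\hat u_{m,2}$ (valid since $B\geq 2A$ makes the discriminant nonnegative), the same passage to a Poisson binomial count over $2M$ Bernoulli trials, and the same invocation of Lemma~\ref{poissonbinomiallemma1} with threshold $M$ against a mean of at most $M-1$. Your bookkeeping via $S$ and $\bar S$ is the mirror image of the paper's count $\sum_m X_{m,1}+X_{m,2}$ of wrong-signed trials, and your computation $\mathbb{E}[S]=M+M\bar u/B$ is equivalent to the paper's $\bar p=\tfrac12-\tfrac{|\bar u|}{2B}$, with $|\bar u|\geq B/M$ giving the required one-unit margin. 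Up to the strict bound $P(\cdot)<\tfrac12$ your argument is correct and essentially identical to the paper's.

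The difference is in the final step, and there your proposal, though more candid than the paper's, does not deliver what it claims. You propose to extract the uniform gap $c_0$ by dominating the Poisson-binomial tail by $P(\mathrm{BIN}(2M,\bar q)\geq M)$ with mean at most $M-1$, and asserting that the probability of exceeding the mean by one unit is bounded away from $\tfrac12$ by an explicit amount. It is not, uniformly in $M$: that binomial has standard deviation of order $\sqrt{M}$, so one unit above the mean is only $O(1/\sqrt{M})$ standard deviations, and the tail tends to $\tfrac12$ as $M\to\infty$ --- the paper's own proof of Lemma~\ref{poissonbinomiallemma1} ends by observing precisely this limiting behavior. Your construction therefore yields only $c_0=O(1/\sqrt{M})$ rather than a universal constant. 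In fairness, the paper's proof is no more rigorous here: it simply declares that the strict inequality ``essentially means'' such a $c_0$ exists. For fixed $M$, $A$, $B$ a compactness argument over the admissible $(u_1,\dots,u_M)$ does produce some $c_0>0$, but any $c_0$ obtained along these lines degrades with $M$, which matters because the purpose of Theorem~\ref{SPconvergerateGDSchemeII} is the improved $B/M$ dependence, and dividing through by a $c_0\sim 1/\sqrt{M}$ would erase exactly that improvement.
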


\begin{proof}
For each $u_{m}$, we construct the following two random variables
\begin{equation}
\hat{u}_{m,1} =
\begin{cases}
\hfill 1, \hfill \text{with probability $\frac{1}{2}+\frac{u_{m}}{2B}+\sqrt{\frac{1}{4}+\frac{|u_{m}|^{2}}{4B^2}-\frac{A}{2B}}$},\\
\hfill -1, \hfill \text{with probability $\frac{1}{2}-\frac{u_{m}}{2B}-\sqrt{\frac{1}{4}+\frac{|u_{m}|^{2}}{4B^2}-\frac{A}{2B}}$},\\
\end{cases}
\end{equation}

\begin{equation}
\hat{u}_{m,2} =
\begin{cases}
\hfill 1, \hfill \text{with probability $\frac{1}{2}+\frac{u_{m}}{2B}-\sqrt{\frac{1}{4}+\frac{|u_{m}|^{2}}{4B^2}-\frac{A}{2B}}$},\\
\hfill -1, \hfill \text{with probability $\frac{1}{2}-\frac{u_{m}}{2B}+\sqrt{\frac{1}{4}+\frac{|u_{m}|^{2}}{4B^2}-\frac{A}{2B}}$},\\
\end{cases}
\end{equation}

It can be observed that $\frac{\hat{u}_{m,1} + \hat{u}_{m,2}}{2}$ follows the same distribution as $\hat{u}_{m}$, which means that
\begin{equation}
\begin{split}
P\bigg(sign\bigg(\frac{1}{M}\sum_{m=1}^{M}\hat{u}_{m}\bigg) &\neq sign\bigg(\frac{1}{M}\sum_{m=1}^{M}u_{m}\bigg)\bigg) = P\bigg(sign\bigg(\frac{1}{2M}\sum_{m=1}^{M}[\hat{u}_{m,1} + \hat{u}_{m,2}]\bigg) \neq sign\bigg(\frac{1}{M}\sum_{m=1}^{M}u_{m}\bigg)\bigg).
\end{split}
\end{equation}

Denote $p_{m,j} = P(\hat{u}_{m,j} \neq sign\big(\frac{1}{M}\sum_{m=1}^{M}u_{m}\big)), \forall i \in \{1,2\}$, it can be shown that $\bar{p} = \frac{1}{2M}\sum_{m=1}^{M}[p_{m,1}+p_{m,2}] = \frac{1}{2}-\frac{|\bar{u}|}{2B}$. Then, let $X_{m,j}$ denote a Bernoulli random variable with a success probability of $p_{m,j}$, we have
\begin{equation}
\begin{split}
&P\bigg(sign\bigg(\frac{1}{2M}\sum_{m=1}^{M}[\hat{u}_{m,1} + \hat{u}_{m,2}]\bigg) \neq sign\bigg(\frac{1}{M}\sum_{m=1}^{M}u_{m}\bigg)\bigg) \leq  P\bigg(\sum_{m=1}^{M}X_{m,1}+X_{m,2} \geq M\bigg).
\end{split}
\end{equation}

Lemma \ref{poissonbinomiallemma1} implies that $P\big(\sum_{m=1}^{M}X_{m,1}+X_{m,2} \geq M\big) < \frac{1}{2}$ as long as $M \geq 1 + 2M\bar{p} = 1 + M - \frac{M|\bar{u}|}{B}$, which is equivalent to $|\bar{u}| \geq \frac{B}{M}$. This essentially means that there exists some constant $c_{0}$ such that $P\big(\sum_{m=1}^{M}X_{m,1}+X_{m,2} \geq M\big) \leq \frac{1-c_{0}}{2}$, which completes the proof of Lemma \ref{LemmaProbofWrongGenericTernaryGD}.

\end{proof}

Given Lemma \ref{LemmaProbofWrongGenericTernaryGD} at hand, we are ready to prove Theorem \ref{SPconvergerateGDSchemeII}.

\begin{proof}
According to Assumption \ref{A2}, we have
\begin{equation}
\begin{split}
&F(\boldsymbol{w}^{(t+1)}) - F(\boldsymbol{w}^{(t)}) \\
&\leq \langle\nabla F(\boldsymbol{w}^{(t)}), \boldsymbol{w}^{(t+1)}-\boldsymbol{w}^{(t)}\rangle + \frac{L}{2}||\boldsymbol{w}^{(t+1)}_{i}-\boldsymbol{w}^{(t)}_{i}||_{2}^2 \\
& =-\eta \left\langle\nabla F(\boldsymbol{w}^{(t)}), sign\bigg(\frac{1}{M}\sum_{m=1}^{M}ternary(\boldsymbol{g}_{m}^{(t)},A,B)\bigg)\right\rangle +\frac{L}{2}\bigg|\bigg|\eta sign\bigg(\frac{1}{M}\sum_{m=1}^{M}ternary(\boldsymbol{g}_{m}^{(t)},A,B)\bigg)\bigg|\bigg|^2 \\
& \leq -\eta \left\langle\nabla F(\boldsymbol{w}^{(t)}), sign\bigg(\frac{1}{M}\sum_{m=1}^{M}ternary(\boldsymbol{g}_{m}^{(t)},A,B)\bigg)\right\rangle + \frac{Ld\eta^2}{2} \\
& = -\eta ||\nabla F(\boldsymbol{w}^{(t)})||_{1} + \frac{Ld\eta^2}{2} + 2\eta\sum_{i=1}^{d}|\nabla F(\boldsymbol{w}^{(t)})_{i}|\times\mathds{1}_{sign(\frac{1}{M}\sum_{m=1}^{M}ternary(\boldsymbol{g}_{m,i}^{(t)},A,B))\neq sign(\nabla F(\boldsymbol{w}^{(t)})_{i})},
\end{split}
\end{equation}
where $\nabla F(\boldsymbol{w}^{(t)})_{i}$ is the $i$-th entry of the vector $\nabla F(\boldsymbol{w}^{(t)})$ and $\eta$ is the learning rate. Taking expectations on both sides yields

\begin{equation}
\begin{split}
&\mathbb{E}[F(\boldsymbol{w}^{(t+1)}) - F(\boldsymbol{w}^{(t)})] \\
&\leq -\eta ||\nabla F(\boldsymbol{w}^{(t)})||_{1} + \frac{Ld\eta^2}{2} +2\eta\sum_{i=1}^{d}\mathbb{E}\bigg[|\nabla F(\boldsymbol{w}^{(t)})_{i}|P\bigg(sign\bigg(\frac{1}{M}\sum_{m=1}^{M}ternary(\boldsymbol{g}_{m,i}^{(t)},A,B)\bigg)\neq sign(\nabla F(\boldsymbol{w}^{(t)})_{i})\bigg)\bigg]\\
&\leq -\eta ||\nabla F(\boldsymbol{w}^{(t)})||_{1} + \frac{Ld\eta^2}{2} \\
&+2\eta\sum_{i=1}^{d}\mathbb{E}\bigg[|\nabla F(\boldsymbol{w}^{(t)})_{i}|P\bigg(sign\bigg(\frac{1}{M}\sum_{m=1}^{M}ternary(\boldsymbol{g}_{m,i}^{(t)},A,B)\bigg)\neq sign\bigg(\frac{1}{M}\sum_{m=1}^{M}\boldsymbol{g}^{(t)}_{m,i}\bigg)\bigg)\mathds{1}_{|\frac{1}{M}\sum_{m=1}^{M}\boldsymbol{g}^{(t)}_{m,i}| \geq \frac{B}{M}}\bigg]\\
&+2\eta\sum_{i=1}^{d}\mathbb{E}\bigg[|\nabla F(\boldsymbol{w}^{(t)})_{i}|P\bigg(sign\bigg(\frac{1}{M}\sum_{m=1}^{M}ternary(\boldsymbol{g}_{m,i}^{(t)},A,B)\bigg)\neq sign\bigg(\frac{1}{M}\sum_{m=1}^{M}\boldsymbol{g}^{(t)}_{m,i}\bigg)\bigg)\mathds{1}_{|\frac{1}{M}\sum_{m=1}^{M}\boldsymbol{g}^{(t)}_{m,i}| < \frac{B}{M}}\bigg]\\
&\leq -\eta ||\nabla F(\boldsymbol{w}^{(t)})||_{1} + \frac{Ld\eta^2}{2} +2\eta\sum_{i=1}^{d}\mathbb{E}\bigg[\frac{1-c_{0}}{2}|\nabla F(\boldsymbol{w}^{(t)})_{i}|\bigg]+2\eta\sum_{i=1}^{d}\frac{B}{M}\\
&\leq -c_{0}\eta||\nabla F(\boldsymbol{w}^{(t)})||_{1} + \frac{Ld\eta^2}{2} + \frac{2\eta Bd}{M}.
\end{split}
\end{equation}

Adjusting the above inequality and averaging both sides over $t=1,2,\cdots,T$, we can obtain
\begin{equation}
\color{black}
\begin{split}
&\frac{1}{T}\sum_{t=1}^{T}c_{0}\eta||\nabla F(\boldsymbol{w}^{(t)})||_{1} \leq \frac{\mathbb{E}[F(\boldsymbol{w}^{(0)}) - F(\boldsymbol{w}^{(t+1)})]}{T} + \frac{Ld\eta^2}{2} + \frac{2\eta Bd}{M}.
\end{split}
\end{equation}

Letting $\eta=\frac{1}{\sqrt{LTd}}$ and dividing both sides by $\eta$ gives
\begin{equation}
\begin{split}
\frac{1}{T}\sum_{t=1}^{T}c_{0}||\nabla F(\boldsymbol{w}^{(t)})||_{1} &\leq
\frac{\mathbb{E}[F(\boldsymbol{w}^{(0)}) - F(\boldsymbol{w}^{(t+1)})]\sqrt{Ld}}{\sqrt{T}} + \frac{\sqrt{Ld}}{2\sqrt{T}} + \frac{2Bd}{M}\\
&\leq \frac{(F(\boldsymbol{w}^{(0)}) - F^{*})\sqrt{Ld}}{\sqrt{T}} + \frac{\sqrt{Ld}}{2\sqrt{T}} + \frac{2Bd}{M},
\end{split}
\end{equation}
which completes the proof.
\end{proof}

\section{Details of the Implementation}\label{DetailsImplementation}
Our experiments are mainly implemented using Python 3.8 with packages Numpy 1.19.2 and Pytorch 1.10.1.
\subsection{Dataset and Pre-processing}
We perform experiments on the standard MNIST, Fashion-MNIST, and CIFAR-10 datasets. The MNIST dataset is for handwritten digit recognition consisting of 60,000 training samples and 10,000 testing samples. Each sample is a 28$\times$28 size gray-level image. The Fashion-MNIST dataset shares the same image size, data format and the structure of training and testing splits as the MNIST dataset. We normalize the data by dividing it by the max RGB value (i.e., 255.0). The CIFAR-10 dataset contains 50,000 training samples and 10,000 testing samples. Each sample is a 32$\times$32 color image. The data are normalized with a zero-centered mean.
\subsection{Neural Network Setting}
For MNIST and Fashion-MNIST, we implement a three-layer fully connected neural network with softmax of classes with cross-entropy loss. The two hidden layers have 512 and 256 hidden ReLU units, respectively. For CIFAR-10, we implement a simple convolutional neural network with 4 convolution layers. It has two contiguous blocks of two convolution layers with 64 and 128 channels, respectively, followed by a max-pooling, and then it has one dense layer with 256 hidden units.
\subsection{Learning Rate Tuning}
For all the algorithms, we tune the initial learning rates from the set $\{0.00001,0.00002,0.00005,0.0001,0.0002,0.0005,\\0.001,0.002,0.005,0.01,0.02,0.05,0.1,0.2,0.5,1.0,2.0,5.0,10.0,20.0,50.0,100.0\}$. For MNIST and Fashion-MNIST, we use a fixed learning rate, while for CIFAR-10, we decrease the learning rate by a factor of 10 after 250 communication rounds.

\end{document}